\numberwithin{equation}{section}
\newtheorem{theorem}{Theorem}
\newtheorem{lemma}{Lemma}
\newtheorem{proposition}{Proposition}
\newtheorem{remark}{Remark}
\def\NN{\mathbb N}
\def\RR{\mathbb R}
\newtheorem{assumption}{Assumption}
\DeclareMathOperator*{\argmin}{arg\,min}
\begin{document}
	
		\title{Can overfitted deep neural networks in adversarial training generalize? - An approximation viewpoint }
	\author{Zhongjie Shi\\ \footnotesize Department of Statistics and Actuarial Science, The University of Hong Kong,  \\ \footnotesize
		Pok Fu Lam Road, Hong Kong, Email: zshi2@hku.hk
		\and
		Fanghui Liu\\ \footnotesize Department of Computer Science, University of Warwick, \\
		\footnotesize  Coventry, United Kingdom, Email: fanghui.liu@warwick.ac.uk
		\and
		Yuan Cao\\ \footnotesize Department of Statistics and Actuarial Science, The University of Hong Kong, \\
		\footnotesize Pok Fu Lam Road, Hong Kong, Email: yuancao@hku.hk
		\and
		Johan A.K. Suykens \\ \footnotesize Department of Electrical Engineering, ESAT-STADIUS, KU Leuven, \\ \footnotesize
		Kasteelpark Arenberg 10, B-3001 Leuven, Belgium, Email: johan.suykens@esat.kuleuven.be}
	
	\date{}
	
	\maketitle

	% REQUIRED
\begin{abstract}
Adversarial training is a widely used method to improve the robustness of deep neural networks (DNNs) over adversarial perturbations.
However, it is empirically observed that adversarial training on over-parameterized networks often suffers from the \textit{robust overfitting}: it can achieve almost zero adversarial training error while the robust generalization performance is not promising.
In this paper, we provide a theoretical understanding of the question of whether overfitted DNNs in adversarial training can generalize from an approximation viewpoint. Specifically, our main results are summarized into three folds:
i) For classification, we prove by construction the existence of infinitely many adversarial training classifiers on over-parameterized DNNs that obtain arbitrarily small adversarial training error (overfitting), whereas achieving good robust generalization error under certain conditions concerning the data quality, well separated, and perturbation level. 
ii) Linear over-parameterization (meaning that the number of parameters is only slightly larger than the sample size) is enough to ensure such existence if the target function is smooth enough.
iii) For regression, our results demonstrate that there also exist infinitely many overfitted DNNs with linear over-parameterization in adversarial training that can achieve almost optimal rates of convergence for the standard generalization error. 
Overall, our analysis points out that robust overfitting can be avoided but the required model capacity will depend on the smoothness of the target function, while a robust generalization gap is inevitable. We hope our analysis will give a better understanding of the mathematical foundations of robustness in DNNs from an approximation view.
    
    \if 0
    Recent studies find that deep neural networks (DNNs) are vulnerable to adversarial perturbations, and adversarial training is a widely used method to improve the robustness of the network.
    However, it is empirically observed that adversarial training on over-parameterized networks often suffers from the \textit{robust overfitting}: it can achieve almost zero adversarial training error while the robust generalization performance is not promising. In this paper, we provide a theoretical understanding of the question of whether overfitted DNNs in adversarial training can generalize from an approximation viewpoint. 
    Specifically, our main results are summarized in two folds: i) for binary classification, when the data distribution is of relatively high quality and is well-separated, by choosing the perturbation radius of adversarial training small enough, there do exist infinitely many over-parameterized DNNs that can achieve zero adversarial training error as well as good robust generalization error of the same order as the lower bound. Furthermore, if the target function is smooth enough, DNNs with $\Omega(n)$ parameters ($n$ is the number of training data) are sufficient to achieve both robustness and accuracy simultaneously.
    ii) for regression, our results demonstrate that there are also infinitely many overfitted DNNs in adversarial training that can achieve almost optimal rates of convergence for the standard generalization error. 
    Our theoretical analysis  points out that robust generalization gap is inevitable.
    \fi
\end{abstract}
	
	% REQUIRED
\noindent {\it Keywords}: Deep learning theory, adversarial training, robust overfitting, robust generalization, learning rates
	
	% REQUIRED
%	\begin{MSCcodes}
%		68T07, 68Q32, 41A25
%	\end{MSCcodes}
	
	\section{Introduction}
	
	%Recent years have witnessed the great empirical success of deep learning methods, especially deep neural networks, in various fields of science and engineering, such as computer vision, speech recognition, natural language processing, and game playing \cite{Goodfellow2016}. 
 Deep neural networks (DNNs) have achieved great empirical success but are demonstrated to be susceptible to small perturbations~\cite{Goodfellow2016}.
 To be specific, under adversarially chosen, albeit imperceptible, perturbations to their inputs, a.k.a., \textit{adversarial examples}, a well-performed DNN achieves quite low accuracy on these adversarial examples~\cite{szegedy2013intriguing,goodfellow2014explaining}.
 %However, recent studies find that although state-of-the-art performance can be achieved by these deep models, they are easily influenced by \textit{adversarial examples}: the adversarial inputs obtained by adding small perturbations that are imperceptible by human eyes \cite{szegedy2013intriguing,goodfellow2014explaining}. 
 %Such adversarial examples can lead a well-trained model to make wrong predictions with high probability.
 This results in a high risk of building robust, secure, trustworthy machine learning systems.
 %This results in a great trust crisis for the deep learning methods on security-critical applications, such as medical diagnosis and autonomous driving. 
 %Ever since the observation of adversarial examples, many robust models are proposed to defend against artificially designed adversarial attacks aiming at fooling the models \cite{gu2014towards,carlini2017adversarial,madry2017towards,zhang2019theoretically}. 
 To improve the robustness of DNNs, a series of methods are proposed to defend against artificially designed adversarial attacks aiming at fooling the models \cite{gu2014towards,carlini2017adversarial,madry2017towards,zhang2019theoretically,delgosha2022robust,bai2023efficient}. 
 Among them, \textit{adversarial training} \cite{madry2017towards} is one of the most empirically successful methods to defend against adversarial examples via a min-max optimization. 
 %The idea of adversarial training is to apply a robust optimization approach to the learning task \cite{ben2009robust}. Instead of utilizing a standard loss function, we aim at minimizing the \textit{adversarial loss} which takes the maximum of the standard loss for all of the adversarial examples.
	
 Mathematically, let $\mathcal{X} \subset \RR^d$ be the input space, $Y \subset \RR$ be the output space, and  $\mathcal{F}$ be the hypothesis space, e.g., the class of DNNs. Suppose that the data set $\{(\bm x_i,y_i)\}_{i=1}^n$ are i.i.d. sampled from the true unknown Borel probability distribution $\rho$ on $\mathcal{Z}= \mathcal{X} \times Y$. Then adversarial training aims to solve the following empirical (adversarial) risk minimization under a certain $\ell_\infty$ white-box adversarial attack
	\begin{equation} \label{advtraining}
		\min_{f\in \mathcal{F}}  \frac{1}{n} \sum_{i=1}^n \max_{\bm x'_i \in B_{\delta,\infty}(\bm x_i)} \ell \left(f(\bm x'_i),y_i \right), 
	\end{equation}
	where $\ell: \RR \times \RR \rightarrow \RR$ is the loss function which evaluates the cost between the model output and the corresponding label, $B_{\delta,\infty}(\bm z)= \{\bm x: \|\bm x-\bm z\|_\infty \leq \delta\} \cap \mathcal{X}$ is the $\delta$-ball (i.e., the perturbation radius $\delta$) centered at $\bm z$ w.r.t. $\ell_\infty$ norm.
 %, and $\delta$ is the perturbation radius of the adversarial training.

% \sgra{it seems that the later paragraphs does not heavily depend on the mathematical formulation (1)? I suggest to mention a bit in the following paragraphs.}
	%Recently, the theoretical understanding of deep neural networks (DNNs) has attracted significant research interest. It has been surprisingly observed that DNNs work in an over-parameterized regime (i.e., the number of parameters is much larger than the training data size), can achieve zero training error under noisy data, but still generalize well \cite{zhang2021understanding}, which is called the benign overfitting phenomenon, and a large number of papers try to explain such astonishing phenomenon \cite{bartlett2020benign,cao2022benign,tsigler2023benign}.

Taking $\delta = 0$ in \Cref{advtraining}, adversarial training degenerates to standard training. Empirical and theoretical studies \cite{zhang2021understanding,bartlett2020benign,cao2022benign,tsigler2023benign,zhu2023benign} indicate that DNNs in the over-parameterized regime (i.e., the number of parameters is much larger than the training data size) can achieve zero training error under noisy data, but still generalize well.
This is called the \emph{benign overfitting} phenomenon\footnote{In this paper, \emph{benign overfitting} is given with a broader meaning, i.e., achieving almost zero training error as well as good generalization performance.}.
When it comes to adversarial training \eqref{advtraining} with $\delta > 0$, empirical observations \cite{su2018robustness,raghunathan2019adversarial,rice2020overfitting} demonstrate the \textit{robust overfitting} phenomenon in the over-parameterized regime of adversarial training instead, i.e., overfitting to the training set (achieving small adversarial training error or called train robust error) does harm the \textit {robust generalization} to a large extent for multiple datasets.
%This phenomenon is called \emph{robustness-accuracy tradeoff}.
Moreover, there exists a large \textit{robust generalization gap} between the robust generalization and the standard generalization performance \cite{madry2017towards,rice2020overfitting}.  For example, as shown in \Cref{Figure1}
%\sgra{it would be better to use the consistent terminologies between the description and Figure 1, e.g., adversarial training error vs. robust training error.}
 from \cite{rice2020overfitting}, the adversarial training can achieve almost zero train robust error, but the test robust error is only nearly 50\%, and is much higher than the test standard error (slightly smaller than 20\%).

    \begin{figure}[htbp]
	\centering
		\includegraphics[width=0.5\textwidth]{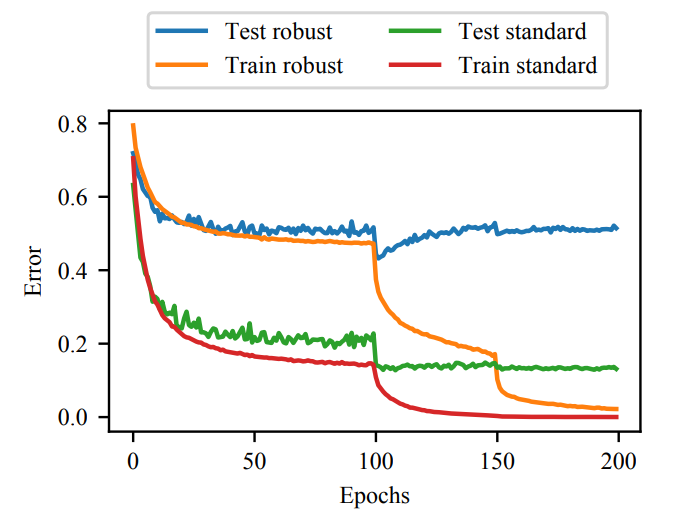}
		\caption{The learning curves of adversarial training on CIFAR-10 with $\delta=8/255$ \cite{rice2020overfitting}, while CIFAR-10 is 0.212-separated \cite{yang2020closer}. }
		\label{Figure1}
    \end{figure}
 
 Recent work \cite{yang2020closer} finds that real datasets have a natural separation property which is called \emph{$\epsilon$-separated}: input data points from different classes have at least 2$\epsilon$ distance in the pixel space. 
 For example, on the CIFAR-10 dataset, $\epsilon = 0.212$ \cite{yang2020closer}, which is much larger than the commonly used attack level $\delta=8/255$.
Due to this well-separated property, ideally, there exist DNNs that can achieve both good robustness and accuracy simultaneously.
Nevertheless, this target makes the parameter size of DNNs suffer from the curse of dimensionality as suggested by \cite{li2022robust}, i.e., an $\exp(\Omega(d))$ lower bound on the network size is inevitable.
At first glance, this result is counter-intuitive due to the following two reasons:

\begin{itemize}
    \item Due to the nice separation property, if the target function is sufficiently smooth or possesses some special structure, the required parameter size of DNNs is not needed in the exponential order of the input dimension $d$ from the perspective of approximation.
    \item Adversarial training degenerates to the standard training when taking the perturbation radius $\delta=0$, as shown in \Cref{advtraining}. If $\delta$ is sufficiently small, under well-behaved data distribution, robust overfitting can be avoided and benign overfitting can naturally arise without \emph{curse of dimensionality}, as empirically suggested by \cite{dong2021data,dong2022exploring,dong2022label}.
\end{itemize}

The above two reasons motivate us to carefully rethink the following question:
\begin{center}
    \emph{Can overfitted DNNs in adversarial training generalize with reasonable model complexity?}
\end{center}
\vspace{0.3cm}
We give an affirmative answer to this question from an approximation viewpoint by providing a comprehensive analysis to close the gap between theory and practice as much as possible.
In our analysis, we consider the data quality, the regularity condition of the target function, and the existence of label noise, and check the existence of the adversarial training global minima with good robust generalization performance. 
We make the following contributions and findings in this paper: 

\if 0

%we try to answer this question from a theoretical viewpoint, and figure out the key conditions of these circumstances that ensure the existence of such adversarial training global minima with good robust generalization performance. After that, it remains as important work in the future to consider whether optimization algorithms can lead to such adversarial training global minima with good robust generalization in the over-parameterized regime.

However, these constructed DNNs do not include the consideration of training datasets and might have arbitrarily bad adversarial training errors when there exists noise. Therefore, their works are not suitable for the understanding of robust overfitting, and there is still a lack of theoretical understanding of the robust overfitting phenomenon in adversarial training which is opposite to the benign overfitting phenomenon in standard training, albeit some initial attempts~\cite{schmidt2018adversarially,tsipras2019robustness,zhang2019theoretically,dan2020sharp,li2023clean}.

 At first glance, adversarial training degenerates to the standard training when taking the perturbation radius $\delta=0$, as shown in \Cref{advtraining}. This leads us to think about what is the underlying mechanism that makes robust overfitting occur when $\delta>0$ in practice, while benign overfitting can occur when $\delta=0$, in the over-parameterized regime. As a starting point to figure out this problem, we might first wonder
 \begin{itemize}
     \item whether there are any circumstances where robust overfitting can generalization for adversarial training?
 \end{itemize}
    In this paper, we try to answer this question from a theoretical viewpoint, and figure out the key conditions of these circumstances that ensure the existence of such adversarial training global minima with good robust generalization performance. After that, it remains as important work in the future to consider whether optimization algorithms can lead to such adversarial training global minima with good robust generalization in the over-parameterized regime.

%\sgra{this paragraph can be described later.}
Recent empirical works shed some light on this question. Specifically, they show that the perturbation radius and the data distribution's quality (in this paper it means the confidence of the true classifier's classification for the input data specified in \Cref{uncertaintyass}) have a large influence on the adversarial training \cite{ilyas2019adversarial,dong2022label}, and with the usage of smaller perturbation radius or data samples with higher quality, we can somehow alleviate the robust overfitting phenomenon \cite{dong2021data,dong2022exploring,dong2022label}. This suggests that the robust overfitting might be removed in the circumstances when the data distribution's quality and the perturbation radius satisfy some conditions.

we theoretically confirm the above conjecture from an approximation theory viewpoint, by proving the existence of infinitely many adversarial training classifiers on over-parameterized DNNs obtaining arbitrarily small adversarial training error (overfitting), whereas achieving good robust generalization error whose order matches the lower bound, under the circumstances that the data distribution is of relatively high quality and is well-separated, the perturbation radius is small enough, and the number of free parameters in DNNs is large enough depending on the data distribution's quality. Our construction of the adversarial training global minima is almost optimal since its robust generalization error is of the same order as the lower bound of the robust generalization error of any adversarial training global minima. We also theoretically demonstrate the existence of the robust generalization gap by showing that even for these adversarial training classifiers with good robust performance, the robust generalization error is still worse than their standard generalization error. 
\fi

\begin{itemize}
    \item We prove by construction that there exist infinitely many over-parameterized DNNs that can achieve zero adversarial training error as well as good robust generalization error of the same order as the lower bound. Such construction is based on the condition on data and perturbation, i.e., the data distribution is of relatively high quality and is well-separated; the perturbation radius of adversarial training is small enough.
    Furthermore, if the target function is smooth enough, DNNs with $\Omega(n)$ parameters (i.e., linear over-parameterization) are sufficient to achieve both robustness and accuracy simultaneously.
    \item Our construction of the adversarial training global minima is almost \emph{optimal} since its robust generalization upper bound matches the order of the lower bound. We also theoretically demonstrate the existence of the robust generalization gap by showing that even for these adversarial training classifiers with good robust performance, their robust generalization error is still worse than their standard generalization error.
\end{itemize}
Accordingly, our theoretical analysis provides an in-depth understanding of overfitting in adversarial training on over-parameterized DNNs from the perspective of approximation, and provides a relaxed model complexity requirement as well as the best possible robust generalization under this circumstance.
Note that our construction is data (distribution)-dependent, which allows us to study the \emph{limit} performance of DNNs under adversarial training, and hence our analysis points out that \emph{robust generalization gap} is inevitable. 
We expect that our results will be beneficial to the dynamics analysis of DNNs under adversarial training algorithms. 
	
The rest of the paper is organized as follows. In \Cref{relatedwork}, we give an overview of related works close to our paper. The problem setting and common assumptions for classification and regression is introduced in \Cref{sec:setting}. Then in \Cref{sectionclass}, we present our main results about robust and standard generalization performance of good adversarial training global minima for the classification tasks. In \Cref{section3}, we extend our results in \Cref{sectionclass} to the regression tasks. \Cref{conclusion} draws a conclusion of this paper. 
The proofs of our theoretical results can be found in the Appendix.

	\section{Related Work} \label{relatedwork}

 	It is empirically observed in previous works that adversarial training in over-parameterized regime sometimes might overfit, i.e., the robust overfitting phenomenon \cite{su2018robustness,raghunathan2019adversarial,rice2020overfitting}. However, the mechanism for this is still unclear. Many works try to figure out the important elements in adversarial training that might lead to robust generalization. \cite{schmidt2018adversarially,bhagoji2019lower,dan2020sharp} manifest that to achieve robust generalization in adversarial training, it requires a larger sample complexity compared with standard training.
  These results also demonstrate that data distribution is a vital factor in adversarial training. \cite{ilyas2019adversarial} indicates that non-robust features in the data can hurt robust generalization, thus making the data quality significant in adversarial training. \cite{shafahi2018are,ding2019sensitivity} also show that robust generalization obtained by adversarial training essentially hinges on the property of the data distribution. \cite{dong2021data,dong2022label} further demonstrate that adversarial training can achieve better robust generalization by utilizing data samples with higher quality.
  
	Some works have studied the robust generalization of adversarial training in the under-parameterized regime \cite{khim2018adversarial,yin2019rademacher,xiao2022adversarial,muthukumar2023adversarial}. For example, \cite{yin2019rademacher} presents the adversarial generalization error bounds by adversarial Rademacher complexity, and \cite{xiao2022adversarial} estimates the bounds of adversarial Rademacher complexity of deep neural networks. However, such adversarial generalization error bounds only work for adversarial training in the under-parameterized regime and are not suitable for the over-parameterized regime.  It is still unknown whether benign overfitting exists in adversarial training on over-parameterized DNNs, although there are some attempts in some simple settings. For instance,  \cite{chen2021benign} demonstrates the occurrence of benign overfitting in the adversarially robust linear classification with sub-Gaussian mixture data for both standard and robust generalization. While \cite{li2023clean} shows that standard generalization and robust overfitting both happen in adversarial training for the patch data distribution.
	
Besides, there exist some works close to the scope of this paper that tries to understand the robust generalization from an approximation theory viewpoint \cite{liu2022benefits,li2022robust}. In \cite{liu2022benefits}, they study the difference between the robust generalization and standard generalization for the true regression function. In \cite{li2022robust}, they demonstrate that the network requires $\mathcal{O}(\epsilon^{-d})$ parameters to achieve zero robust generalization for all the $2\epsilon$-separated data distribution when there is no noise. However, the networks they constructed to achieve such robust generalization are irrelevant to the networks learned from adversarial training with the usage of data samples, making their analysis not suitable for the study of the robust generalization for overfitting networks under adversarial training.

%However, when it comes to the global minima obtained by the adversarial training, their standard, and adversarial generalization performance have not been fully understood, though there are some attempts to study the robust generalization gap by bounding the adversarial Rademacher complexity \cite{khim2018adversarial,yin2019rademacher,xiao2022adversarial}. Moreover, the role of over-parameterization on the generalization performance of adversarial training is also unclear, and it waits to answer whether the benign overfitting phenomenon exists in adversarial training.

	%\sgra{\url{https://proceedings.neurips.cc/paper/2019/file/348a38cd25abeab0e440f37510e9b1fa-Paper.pdf} cite and discussion}
	
	%\sgra{\url{https://arxiv.org/pdf/1906.00555.pdf} using Rademacher complexity to derive the gap between robust and clean accuracy}
	
	%\sgra{\url{https://arxiv.org/pdf/2306.07544.pdf} using Rademacher complexity}
	
	%\sgra{\url{https://arxiv.org/pdf/2306.01271.pdf} the main message is very close to us though opt.}

	%\sgra{@Zhongjie: you can check the above four papers.}

	%\sgra{\url{https://proceedings.neurips.cc/paper/2020/file/0740bb92e583cd2b88ec7c59f985cb41-Paper.pdf} opt.}
	
	%\sgra{\url{http://proceedings.mlr.press/v130/xing21b/xing21b.pdf} opt. linear regression}
	
	%\sgra{\url{https://arxiv.org/pdf/2304.06326.pdf} kernel regression}

 \section{Problem settings and common assumptions} 
 \label{sec:setting}
 Here we introduce the common problem settings for classification and regression under adversarial training, e.g., the learning framework, the hypothesis space, and the model formulation.
The specific problem settings and assumptions for classification and regression can be found in \Cref{assumptionclass} and \Cref{section31}, respectively.

\noindent {\bf Learning framework:} We follow the classical statistical learning framework \cite{Cucker2007}. Suppose that the data sample $D=\{(\bm x_i,y_i)\}^{n}_{i=1} \subset \mathcal{Z}^n$ are i.i.d. sampled from a Borel probability measure $\rho$ on $\mathcal{Z}=\mathcal{X}\times Y$ with $\mathcal{X} \subset [0,1]^d$, $Y=\{-1,1\}$ for binary classification and $Y \subseteq[-M,M]$ with some $M>0$ for regression. 
 For notational simplicity, denoting $X:= \{\bm x_i\}_{i=1}^n$, the separation distance of the input data sample $X$ satisfies \cite{wendland2004scattered}
	\begin{equation}
		q_X:= \frac{1}{2} \min_{i \neq j} \|\bm x_i-\bm x_j\|_\infty \leq n^{-\frac{1}{d}} \,,
	\end{equation}
	which is half of the minimal distance between two distinct input data samples admitting an upper bound w.r.t. $n$ and $d$.
 
Denote $\rho(y|\bm x)$ as the conditional distribution at $\bm x\in\mathcal{X}$ induced by $\rho$, $\rho_X$ as the marginal distribution of $\rho$ on $\mathcal{X}$, and $(L^2_{\rho_X},\parallel\cdot\parallel_\rho)$ as the Hilbert space of square-integrable functions with respect to $\rho_X$.
 The objective of learning for classification or regression is to find a learning model that is a good approximation of the ``target function", which is defined as the conditional mean $f_\rho(\bm x)= \int_Y y d\rho(y|\bm x)$.
 Here the used learning model is a fully-connected deep neural network as described below.

\noindent {\bf Model formulation of DNNs:} Regarding the learning model, we consider standard fully-connected deep neural networks (FNNs) with the ReLU activation function in this paper. Denote the affine operator $\mathcal{A}_\ell: \RR^{d_{\ell-1}} \to  \RR^{d_{\ell}}$ as $\mathcal{A}_\ell (\bm x)= \bm W_\ell \bm x+ \bm b_\ell$, where $\bm W_\ell \in \RR^{d_\ell \times d_{\ell-1}}$ is the weight matrix and $\bm b_\ell \in \RR^{d_\ell}$ is the bias vector. A deep ReLU FNN with depth $L$ and width $\{d_\ell\}_{\ell=1}^L$ is defined as
	\begin{equation} \label{FNN}
		f(\bm x)= c \cdot \sigma \circ \mathcal{A}_L \circ \sigma \circ \mathcal{A}_{L-1} \circ \cdots  \circ \sigma \circ \mathcal{A}_1 (\bm x)\,,
	\end{equation}
	where $d_0=d$, $\bm c\in \RR^{d_L}$ is the coefficient vector, $\{\bm W_\ell\}_{\ell=1}^L$ are the weight matrices and $\{\bm b_\ell\}_{\ell=1}^L$ are the bias vectors. The number of parameters in this network is
	\begin{equation}
		\mathcal{N}= d_L + \sum_{\ell=1}^L (d_{\ell-1} d_\ell + d_\ell)\,.
	\end{equation}
	We denote the hypothesis space $\mathcal{F}_{\vec{d},L}$ as the collection of all deep ReLU FNNs with the form \eqref{FNN}.

\noindent {\bf Common assumptions:} In the next, we make two assumptions: one is about the distortion of $L_{\rho_X}$ with respect to the Lebesgue measure \cite{chui2020realization} and the other one is the regularity assumption of the ``target function".
  \begin{assumption}\cite[non-irregularity of $\rho_X$]{shi2013learning,chui2020realization} \label{assumption3}
    Let  $J_\rho$ be the identity mapping $J_\rho: L^1(\mathcal{X}) \to L_{\rho_X}^1(\mathcal{X})$ and $\|J_\rho\|$ be the operator norm. Similarly, we denote $\bar J_\rho$ as the identity mapping $\bar J_\rho: L_{\rho_X}^1(\mathcal{X}) \to L^1(\mathcal{X})$, and $\|\bar J_\rho\|$ as the corresponding operator norm.
    We assume that
        \begin{equation} \label{distortionass}
		  \|J_\rho\| < \infty, \quad 	\|\bar J_\rho\| < \infty.
	\end{equation}
 	Moreover, we denote $\Phi_\rho$ as the set of $\rho_X$ that satisfies  \Cref{assumption3}.
    \end{assumption}
 \begin{remark}
 This assumption on the marginal distribution $\rho_X$ is similar as \cite{shi2013learning,chui2020realization} to ensure that $\rho_X$ is not that irregular. 
 Admittedly, it is a little stronger than the standard assumption that $\rho_X$ is absolutely continuous with respect to the Lebesgue measure. However, this assumption can be satisfied when $\rho_X$ is some common distribution with bounded support, e.g., uniform distribution.
 \end{remark}

\noindent {\bf H\"older continuity:} We assume the ``target function" satisfies some smoothness level which is of interest for ease of analysis.
We describe it in H\"older spaces, i.e., the $\alpha$-H\"older continuous functions $W_\infty^\alpha (\mathcal{X})$ with $\alpha>0$ \cite{Yarotsky2017,SchmidtHieber2020}.
 To be specific, for  $\alpha \in (0,1]$, $W_\infty^\alpha (\mathcal{X})$ consists of $\alpha$-Lipschitz functions with the norm 
 \begin{equation*}
     \lVert f \rVert_{W_\infty^\alpha}= \lVert f \rVert_\infty + \lvert f \rvert_{W_\infty^{\alpha}}\quad \mbox{with}~\lVert f \rVert_\infty= \sup_{\bm x\in \mathcal{X}} \lvert f(\bm x) \rvert,\quad \lvert f \rvert_{W_\infty^{\alpha}}= \sup_{\bm x\neq \bm y} \frac{\left\vert f(\bm x)-f(\bm y)\right\vert}{\left\| \bm x-\bm y\right\|_2^\alpha}\,.
 \end{equation*}
 Note that $\lvert f \rvert_{W_\infty^{\alpha}}$ is the semi-norm. For $\alpha= s+t$ with $s \in \NN$ and $t \in (0,1]$,  $W_\infty^\alpha (\mathcal{X})$ consists of $s$-times differentiable functions whose partial derivatives of order $s$ are $t$-Lipschitz functions, with an equivalent norm $\lVert f \rVert_{W_\infty^\alpha}= \sum_{\|\bm k\|_2 < s} \lVert D^{\bm k} f \rVert_\infty + \sum_{\|\bm k\|_2 = s} \lVert D^{\bm k} f \rVert_{W_\infty^{t}}$.
 For certain regularity assumptions for the ``target function", we will detail them in the respective sections.

	\section{Main Results for Classification} \label{sectionclass}
	In this section, we demonstrate the main results of adversarial training for classification: there exist infinitely many classifiers obtained by adversarial training with commonly used loss functions, such as hinge loss and logistic loss, that can achieve arbitrarily small adversarial training error and good robust generalization error with the same order of the lower bound when the data distribution is well-separated and of relatively high quality.
	
	\subsection{Problem settings and notations} \label{assumptionclass}
	For binary classification, the objective of learning is to find the best classifier (Bayes rule) $f_c$ on $\mathcal{X}$ defined by
	\begin{equation}
		f_c(\bm x)= \begin{cases}
			1, & \hbox{if} \ \rho(y=1|\bm x)\geq  \rho(y=-1|\bm x)\,, \\
			-1, & \hbox{if} \ \rho(y=1|\bm x)<  \rho(y=-1|\bm x)\,,
		\end{cases}
	\end{equation}
	which is the minimizer of the standard \textit{misclassification error} with the 0-1 loss
	\begin{equation}
		\mathcal{R}(f):=\int_{\mathcal{Z}}  \mathbbm{1}_{\{yf(x)=-1\}} d\rho\,.
	\end{equation}
Based on the observed data sample, a learning algorithm aims at finding a function $f$ in a hypothesis space $\mathcal{F}$ such that the classifier $\text{sgn}(f)$ is a good approximation of the Bayes rule $f_c$. In the next, we introduce the empirical/expected risk for analysis.
	
\noindent {\bf Empirical and expected risks for standard and robust learning:}	Since the 0-1 loss is non-convex and discontinuous, it is difficult to optimize. Instead, one can utilize some \emph{surrogate} loss functions $\phi$ to learn an estimator from the data sample $D$, such as hinge loss, least squares loss, and logistic loss. To be specific, we define the \textit{$\phi$-risk} and its minimum $f_\rho^\phi$ as
	\begin{equation}
		\quad f_\rho^\phi := \argmin_f \mathcal{E}^\phi(f)\,, \quad \mbox{with}~\mathcal{E}^\phi(f):= \int_{\mathcal{Z}}\phi\left(yf(\bm x)\right)  \mathrm{d}\rho\,.
	\end{equation}
 Actually, $f_\rho^\phi$ has the closed form for many commonly used surrogate loss functions \cite{zhang2004statistical}. For example, denoting $\eta(\bm x) :=  \rho(y=1|\bm x)$, we have $f_\rho^\phi = 2\eta -1 = f_\rho$ for the least squares loss; we have $f_\rho^\phi= \text{sgn}(2\eta-1)= f_c$ for the hinge loss. The standard empirical risk minimization (ERM) algorithm aims to minimize the \textit{empirical $\phi$-risk} over the hypothesis space being the deep ReLU FNNs $\mathcal{F}_{\vec{d},L}$
	\begin{equation}
	\widehat{f}_{D,\phi}= \argmin_{f \in \mathcal{F}_{\vec{d},L}} \widehat{\mathcal{E}}_D^\phi(f), \quad \mbox{with}~	\widehat{\mathcal{E}}_D^\phi(f):=\frac{1}{n} \sum_{i=1}^n \phi\left(y_if(\bm x_i)\right)\,.
	\end{equation}
We desire that the classifier  $\text{sgn}(\widehat{f}_{D,\phi})$ can approach the Bayes rule when the number of samples is large enough, in the sense that the standard excess misclassification error $\mathcal{R}(\text{sgn}(\widehat{f}_{D,\phi}))-\mathcal{R}(f_c)$ is small.
	
	The above definitions can be extended to the adversarial training setting where we apply the robust loss instead of the standard loss. In this paper, we consider the $\ell_\infty$ white-box adversarial attack, where the adversary can use small perturbations of the inputs within some $\ell_\infty$ ball to maximize the standard loss. In order to defend against such adversarial attack, our goal is to minimize the \textit{adversarial misclassification error} (robust generalization) 
	\begin{equation}
		\mathcal{R}^\delta (f):=\int_{\mathcal{Z}} \max_{\bm x' \in B_{\delta,\infty}(\bm x)} \mathbbm{1}_{\{yf(\bm x')=-1\}} d\rho\,,
	\end{equation}
	which measures the robust generalization performance, and we denote the best robust classifier as $f_c^\delta= \argmin_f \mathcal{R}^\delta (f)$. Similarly, the adversarial training implements the ERM algorithm by minimizing the \textit{empirical adversarial $\phi$-risk}
	\begin{equation} \label{empiricalclass}
		\widehat{\mathcal{E}_D^{\phi,\delta}} (f):=\frac{1}{n} \sum_{i=1}^n  \max_{\bm x'_i \in B_{\delta,\infty}(\bm x_i)}\phi \left( y_i f(\bm x'_i)\right) \,,
	\end{equation}
	over the hypothesis space $\mathcal{F}_{\vec{d},L}$. Moreover, we denote $\widetilde \Delta_{\delta,\vec d, L}$ as the set of the global minima of the optimization problem \Cref{empiricalclass} for adversarial training, i.e.,
	\begin{equation}
		\widetilde \Delta_{\delta,\vec d, L}:= \left\{ \widehat f_D^\delta: \widehat f_D^\delta = \argmin_{f\in \mathcal{F}_{\vec{d},L}} 	\widehat{\mathcal{E}_D^{\phi,\delta}} (f) \right\}\,.
	\end{equation}

\subsection{Assumptions}
In this subsection, apart from assumptions discussed in \Cref{sec:setting}, we additionally require some assumptions with regard to the data separation and quality. %Besides, the regularity condition of the target function in terms of confidence in prediction and ``smoothness" level is also introduced.
All of them are related to how to set the Borel measure $\rho$ to control the data generation process.

First, we make the following assumption related to well-separated data in $\mathcal X$. %Next, we first assume that the data distribution is well separated. 

 \begin{assumption}[well separated data] \label{assumption2}
 Denote $A=\{\bm x \in \mathcal{X}: f_c(\bm x)=1\}$ and $B=\{\bm x \in \mathcal{X}: f_c(\bm x)=-1\}$, clearly we have $\mathcal{X}=A \cup B$.
The two classes are $2\delta$-separated if
	\begin{equation} \label{separatedass}
		\|\bm x_A - \bm x_B\|_\infty \geq 2\delta, \quad \forall \bm x_A \in A, \  \bm x_B \in B\,.
	\end{equation}
 \end{assumption}
 \begin{remark}
 This assumption is needed to guarantee the existence of a robust classifier, which is also considered in previous theoretical work \cite{li2022robust}. 
 This assumption has been discussed in the introduction and is demonstrated to be attainable. For real data sets, different classes tend to be well-separated, and the perturbation radius is typically much smaller than the separation distance of different classes \cite{yang2020closer}. For example, on CIFAR-10, the minimum separation distance is 0.21, which is much larger than the perturbation radius $\delta=8/255$. 
 \end{remark}

 However, merely with \Cref{assumption2}, \cite{li2022robust} show that a worst-case requirement on the model complexity suffers from the curse of dimensionality. This is because no regularity assumption is added to the target function. To obtain a relaxed model complexity requirement, apart from the $\alpha$-H\"older continuous assumption as mentioned in \Cref{sec:setting}, we additionally require that the Bayes rule is confident in its prediction.
 \begin{assumption}[regularity assumption and high confidence of the Bayes rule] \label{assumption1}
 We assume that $\eta \in W^\alpha_\infty (\mathcal{X})$ with $\alpha \in \NN$. Besides, there exists some arbitrary small constant $\zeta>0$ such that
    \begin{equation} \label{uncertaintyass}
		\left|\eta(\bm x)-0.5\right| > \zeta, \quad \forall \bm x \in \mathcal{X}\,.
    \end{equation}
 \end{assumption}
 \begin{remark}
    This assumption is also reasonable since it assures that for the true data distribution $\rho$, if the Bayes rule $f_c$ indicates that the label of the input $\bm x$ belongs to one class, then the probability that it belongs to this class should not be that close to $0.5$. That means the true classifier should have some confidence in its classification for every input data. In other words, this assumption ensures that the data distribution is of relatively high quality. Similar notations to measure the quality of data samples and features are utilized in \cite{ilyas2019adversarial,dong2022label}.
    %\sgra{add some discussion with linear separable data?}
 \end{remark}

	\subsection{Generalization analysis of adversarial training global minima on over-parameterized FNNs}
	In this subsection, we indicate that overfitted DNNs in adversarial training can generalize under the circumstances that the data distribution is of relatively high quality and is well-separated, the perturbation radius is small enough, and the number of free parameters in DNNs is large enough. To begin with, we utilize the following error decomposition method for the excess adversarial misclassification error.
	
	\begin{proposition} \label{proposition2}
		Let $f_c^\delta := \argmin_f \mathcal{R}^\delta (f)$. For any classifier $f$, we have
		\begin{equation}
			\mathcal{R}^\delta (f) - \mathcal{R}^\delta (f_c^\delta)  \leq \mathcal{R}^\delta (f) - \mathcal{R} (f) + \mathcal{R} (f) - \mathcal{R} (f_c) \,.
		\end{equation}
        Moreover, we always have $\mathcal{R} (f) \leq \mathcal{R}^\delta (f)$.
	\end{proposition}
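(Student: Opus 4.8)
The plan is to first establish the ``moreover'' claim, since it feeds directly into the main inequality, and then to reduce the main inequality to a single elementary comparison by telescoping the right-hand side.

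First I would prove $\mathcal{R}(f) \leq \mathcal{R}^\delta(f)$ for every classifier $f$. The key observation is pointwise: since the perturbation ball $B_{\delta,\infty}(\bm x)$ contains its own centre $\bm x$, the maximand in the definition of $\mathcal{R}^\delta$ dominates the value at $\bm x$, i.e.
\begin{equation*}
	\mathbbm{1}_{\{yf(\bm x)=-1\}} \leq \max_{\bm x' \in B_{\delta,\infty}(\bm x)} \mathbbm{1}_{\{yf(\bm x')=-1\}}
\end{equation*}
holds for $\rho$-almost every $(\bm x,y)$. Integrating this inequality against $\rho$ over $\mathcal{Z}$ yields $\mathcal{R}(f) \leq \mathcal{R}^\delta(f)$ immediately.

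For the main inequality I would exploit that the right-hand side telescopes. Written out, $\mathcal{R}^\delta(f) - \mathcal{R}(f) + \mathcal{R}(f) - \mathcal{R}(f_c)$ collapses to $\mathcal{R}^\delta(f) - \mathcal{R}(f_c)$, so after cancelling the common term $\mathcal{R}^\delta(f)$ on both sides, the claim is equivalent to
\begin{equation*}
	\mathcal{R}(f_c) \leq \mathcal{R}^\delta(f_c^\delta).
\end{equation*}
This comparison follows from two facts already available. By definition $f_c$ is the minimizer of the standard misclassification error $\mathcal{R}$, so $\mathcal{R}(f_c) \leq \mathcal{R}(f_c^\delta)$; and applying the ``moreover'' inequality just proved to the specific classifier $f_c^\delta$ gives $\mathcal{R}(f_c^\delta) \leq \mathcal{R}^\delta(f_c^\delta)$. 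Chaining these yields $\mathcal{R}(f_c) \leq \mathcal{R}(f_c^\delta) \leq \mathcal{R}^\delta(f_c^\delta)$, which is exactly what is needed.

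Since each step is elementary, there is no genuine obstacle here; the result is essentially a bookkeeping identity. The only point to handle with care is the logical order, as the main decomposition really does rely on the ``moreover'' bound being established first (and specialised to $f_c^\delta$, not to the arbitrary $f$). I would also note that the decomposition deliberately inserts and subtracts $\mathcal{R}(f)$ so that the robust--standard gap $\mathcal{R}^\delta(f) - \mathcal{R}(f)$, which the remainder of the paper is devoted to controlling, appears explicitly, even though it formally cancels in the reduction above.
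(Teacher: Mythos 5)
Your proof is correct and is essentially the same argument as the paper's: both rest on exactly the two facts that $f_c$ minimizes $\mathcal{R}$ (giving $\mathcal{R}(f_c) \leq \mathcal{R}(f_c^\delta)$) and that the perturbation ball contains its center (giving $\mathcal{R}(g) \leq \mathcal{R}^\delta(g)$, applied to $g = f_c^\delta$). The only difference is presentational — the paper inserts and bounds the telescoping terms directly, while you cancel $\mathcal{R}^\delta(f)$ and reduce to the single comparison $\mathcal{R}(f_c) \leq \mathcal{R}^\delta(f_c^\delta)$ — which is arguably a slightly cleaner write-up of the same reasoning.
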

	
	\begin{proof} [Proof of \Cref{proposition2}]
		We use the following error decomposition and the fact that $ \mathcal{R} (f_c) \leq \mathcal{R} (f_c^\delta)$ to get
		\begin{equation*}
			\begin{aligned}
				\mathcal{R}^\delta (f) - \mathcal{R}^\delta (f_c^\delta) & = \mathcal{R}^\delta (f) - \mathcal{R} (f) + \mathcal{R} (f) - \mathcal{R} (f_c) \\
				& + \mathcal{R} (f_c)- \mathcal{R} (f_c^\delta) +  \mathcal{R} (f_c^\delta)- \mathcal{R}^\delta (f_c^\delta) \\
				& \leq \mathcal{R}^\delta (f) - \mathcal{R} (f) + \mathcal{R} (f) - \mathcal{R} (f_c) + \mathcal{R} (f_c^\delta)- \mathcal{R}^\delta (f_c^\delta) \,.
			\end{aligned}
		\end{equation*}
		Moreover, for any $ \bm x\in \mathcal{X}, y\in \{-1,1\}$, $\phi(yf(\bm x)) \leq \max_{\bm x' \in B_{\delta,\infty}(\bm x)} \phi(yf(\bm x'))$ holds for any $f$, then we get $\mathcal{R} (f) \leq \mathcal{R}^\delta (f)$. Therefore, we also have $\mathcal{R} (f_c^\delta) \leq \mathcal{R}^\delta (f_c^\delta)$. Thus we complete the proof.
	\end{proof}
    \begin{remark}
    	Since $\mathcal{R} (f) \leq \mathcal{R}^\delta (f)$ for any $f$, the robust generalization of any function is always larger than its standard generalization. This lower bound of robust generalization error together with the upper bound of it in \Cref{proposition2} partially illustrates the robust generalization gap phenomenon. Moreover, \Cref{proposition2} shows that the robust generalization of a classifier $f$ is bounded by the sum of its standard generalization $\mathcal{R} (f) - \mathcal{R} (f_c)$ and its robustness $\mathcal{R}^\delta (f) - \mathcal{R} (f)$. Roughly speaking, the existence of such an additional robustness term of the classifier might result in lower robust generalization performance compared with the standard generalization performance in adversarial training. We explicitly demonstrate the existence of the robust generalization gap hereinafter.
    \end{remark}
	
	Based on the above error decomposition, we are now ready to bound the adversarial misclassification error of good adversarial training classifiers on over-parameterized deep ReLU FNNs. The following theorem is one main result of our paper with the surrogate loss being the hinge loss.
	
	\begin{theorem}[upper bound under the hinge loss] \label{theorem3}
		Let the surrogate loss function $\phi(t)=\max\{1-t,0\}$ be the hinge loss. Suppose that the Borel measure $\rho$ satisfies \Cref{assumption3}, \Cref{assumption2}, \Cref{assumption1} with $\eta \in W^\alpha_\infty (\mathcal{X})$ and $\alpha \in \NN$, taking the perturbation radius $\delta< \frac{q_X}{3} \leq \frac{1}{3} n^{-\frac{1}{d}}$, then for any $C_0 \in (0,1]$, there exist infinity many adversarial training global minima $\widehat{f}_D^{over} \in \tilde \Delta_{\delta, \vec d, L}$ with depth $L= \mathcal{O} \left(  \log \frac{1}{\zeta} \right) $, width $d_1=  \mathcal{O} \left(  \zeta ^{-\frac{d}{\alpha}} \log \frac{1}{\zeta}+ n \right) $, $d_2, \dots, d_L= \mathcal{O} \left( \zeta ^{-\frac{d}{\alpha}} \log \frac{1}{\zeta} \right) $, and non-zero free parameters $ \mathcal{O} \left( \zeta ^{-\frac{d}{\alpha}} \log \frac{1}{\zeta}+ n \right) $, such that
		\begin{equation} \label{standclassupper}
			\mathbb{E} \left[  \mathcal{R}\left( \text{sgn}\left( \widehat{f}_D^{over}\right) \right) -\mathcal{R}(f_c) \right]  \leq 2 \|J_\rho\|  \left( (2+2C_0) \delta\right) ^d n\,,
		\end{equation}
		and
		\begin{equation} \label{adversarialclassupper}
			\mathbb{E} \left[ \mathcal{R}^\delta \left( \text{sgn}\left(\widehat{f}_D^{over}\right) \right)  - \mathcal{R}^\delta (f_c^\delta) \right]  \leq 3 \|J_\rho\|  \left( (4+2C_0) \delta\right) ^d n\,.
		\end{equation}
	\end{theorem}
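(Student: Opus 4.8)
The plan is to prove the statement by an explicit, data-dependent construction: I will build a single deep ReLU FNN $\widehat f_D^{over}$ that (i) drives the empirical adversarial $\phi$-risk in \eqref{empiricalclass} exactly to $0$, so that it is a genuine global minimizer in $\widetilde\Delta_{\delta,\vec d,L}$, while (ii) differing from the Bayes rule $f_c$ only on a union of tiny $\ell_\infty$-balls around the training points. The key enabling fact is that, by \Cref{assumption2}, the $2\delta$-separation makes $f_c$ itself perfectly robust on $\mathcal X$: for any $\bm x\in\mathcal X$ and any $\bm x'\in B_{\delta,\infty}(\bm x)$ the point $\bm x'$ lies in the same class as $\bm x$ (otherwise the separation is violated, since $\|\bm x-\bm x'\|_\infty\le\delta<2\delta$), hence $f_c(\bm x')=f_c(\bm x)$. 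Thus any classifier agreeing with $f_c$ off small balls inherits almost the same robustness as $f_c$.

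For the base network I would first invoke standard deep ReLU approximation theory for $W_\infty^\alpha$ functions to obtain $\tilde g$ with $\|\tilde g-(2\eta-1)\|_{\infty,\mathcal X}\le\zeta$ using depth $\mathcal O(\log(1/\zeta))$ and $\mathcal O(\zeta^{-d/\alpha}\log(1/\zeta))$ parameters. Since $|2\eta-1|=2|\eta-0.5|>2\zeta$ on $\mathcal X$ by \Cref{assumption1}, composing $\tilde g$ with a ReLU-implementable clipping $t\mapsto\min\{\max\{t/\zeta,-1\},1\}$ yields a network $\bar f$ equal to $f_c$ (with magnitude exactly $1$) everywhere on $\mathcal X$. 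Then, for each training index $i$, I add a localized $\ell_\infty$ bump $\psi_i$, realizable by $\mathcal O(d)$ ReLUs, supported on $B_{(1+C_0)\delta}(\bm x_i)$, constant on $B_{\delta,\infty}(\bm x_i)$, and chosen so that $y_i(\bar f+\psi_i)\ge 1$ on the whole perturbation ball; a bump is needed only when $y_i\ne f_c(\bm x_i)$. Setting $\widehat f_D^{over}=\bar f+\sum_i\psi_i$ and taking $\mathcal F_{\vec d,L}$ just large enough to contain it yields exactly depth $L=\mathcal O(\log(1/\zeta))$, first-layer width $d_1=\mathcal O(\zeta^{-d/\alpha}\log(1/\zeta)+n)$ (the $+n$ carrying the $n$ bumps), remaining widths $\mathcal O(\zeta^{-d/\alpha}\log(1/\zeta))$, and the claimed parameter count.

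To see that $\widehat f_D^{over}$ is a global minimizer, note that $\delta<q_X/3$ forces the centers to be more than $6\delta$ apart while each bump support has $\ell_\infty$-radius $(1+C_0)\delta<3\delta$, so the supports are disjoint; hence on each $B_{\delta,\infty}(\bm x_i)$ the network equals $y_i$ (either directly for clean points or after the flip for noisy points), giving $y_i\widehat f_D^{over}(\bm x_i')\ge1$ for all $\bm x_i'\in B_{\delta,\infty}(\bm x_i)$ and a vanishing empirical adversarial hinge risk; as the hinge loss is nonnegative, $0$ is the global minimum and $\widehat f_D^{over}\in\widetilde\Delta_{\delta,\vec d,L}$. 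For the standard error, $\mathrm{sgn}(\widehat f_D^{over})$ disagrees with $f_c$ only inside $G:=\bigcup_i B_{(1+C_0)\delta}(\bm x_i)$, so combining $\mathcal R(\mathrm{sgn}(\widehat f_D^{over}))-\mathcal R(f_c)\le\rho_X(G)$ with the distortion bound $\rho_X(S)\le\|J_\rho\|\,\mathrm{Leb}(S)$ of \Cref{assumption3} and $\mathrm{Leb}(B_{(1+C_0)\delta})=((2+2C_0)\delta)^d$ delivers \eqref{standclassupper}. For the robust error I would apply \Cref{proposition2} to split $\mathcal R^\delta(\mathrm{sgn}(\widehat f_D^{over}))-\mathcal R^\delta(f_c^\delta)$ into the robustness gap $\mathcal R^\delta(\mathrm{sgn}(\widehat f_D^{over}))-\mathcal R(\mathrm{sgn}(\widehat f_D^{over}))$ and the standard excess error; since $\mathrm{sgn}(\widehat f_D^{over})$ can be non-robust at $\bm x$ only when some $\bm x'\in B_{\delta,\infty}(\bm x)$ lies in $G$, the first term is controlled by $\rho_X$ of the $\delta$-enlargement $\bigcup_i B_{(2+C_0)\delta}(\bm x_i)$, whose balls have volume $((4+2C_0)\delta)^d$, and adding the two contributions yields \eqref{adversarialclassupper}. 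The ``infinitely many'' conclusion follows because the bump heights may range over a continuum (any value keeping $y_i\widehat f_D^{over}\ge1$ on the ball) while leaving both $G$ and its enlargement unchanged, so each choice is a distinct global minimizer obeying the same bounds.

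I expect the main obstacle to be the simultaneous control in the construction step: realizing the $W_\infty^\alpha$-approximation, the sharp clipping that recovers $f_c$ exactly, and the $n$ sign-correcting bumps within one ReLU network of the claimed size, while guaranteeing that the bumps neither overlap nor perturb the approximation outside $G$ (where the separation $\delta<q_X/3$ and the intersection with $\mathcal X$ in the definition of $B_{\delta,\infty}$ are essential) and that the resulting disagreement and enlargement sets carry exactly the $\ell_\infty$-ball volumes $((2+2C_0)\delta)^d$ and $((4+2C_0)\delta)^d$ that produce the stated constants.
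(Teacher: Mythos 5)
Your proposal is correct and establishes the theorem (indeed with slightly sharper constants), but it takes a genuinely different route inside the shared teacher--student skeleton. The paper also builds an interpolant as ``approximant plus localized trapezoid bumps $\Gamma_{\bm x_i-\delta,\bm x_i+\delta,\tau}$'' and also invokes \Cref{proposition2}, separation, and the volume counting $\rho_X(S)\le\|J_\rho\|\,\mathrm{Leb}(S)$; however, it keeps the teacher $\hat f_\theta$ as a real-valued $\tfrac{2}{3}\zeta$-accurate approximation of $f_\rho=2\eta-1$ and \emph{gates} it with Yarotsky's product network, $c_5\,\tilde\times_\epsilon\bigl(\hat f_\theta/c_5,\,1-\sum_i\Gamma_i\bigr)$, so the network equals $y_i$ on each $\delta$-ball and approximately equals the teacher elsewhere; sign agreement with $f_c$ off the balls is then deduced from the margin $|f_\rho|>2\zeta$, and the standard excess error is passed through the hinge-loss comparison theorem (\Cref{lemma5}), which is where the factor $2$ in \eqref{standclassupper} comes from. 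You instead spend the margin assumption at the construction stage: clipping the $\zeta$-accurate approximant makes $\bar f$ coincide with $f_c$ exactly, so additive sign-correcting bumps at the noisy points suffice, with no product gate at all; and you bound $\mathcal R(\mathrm{sgn}(\widehat f_D^{over}))-\mathcal R(f_c)$ directly via $\mathcal R(g)-\mathcal R(f_c)=\int_{\{g\neq f_c\}}|2\eta-1|\,d\rho_X\le\rho_X(G)\le\|J_\rho\|\,\mathrm{Leb}(G)$, bypassing \Cref{lemma5} as well. Your route therefore avoids both \Cref{lemma2} and \Cref{lemma5} and yields constants $1$ and $2$ in place of $2$ and $3$; the paper's gated construction, by preserving the real-valued teacher off the balls rather than destroying it by hard clipping, is what allows the identical machinery to be reused for regression (\Cref{theorem1} and \Cref{theorem2}), where values and not just signs matter. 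Two minor loose ends in your write-up, both easily repaired: (i) when no training label is noisy your construction as stated produces a single network, so for ``infinitely many'' you should also admit innocuous bumps of arbitrary height in the direction $y_i=f_c(\bm x_i)$ at clean points, or vary the ramp width $\tau\le C_0\delta$ as the paper does; (ii) wiring $n$ depth-2 bumps and a depth-$\mathcal O\left(\log \frac{1}{\zeta}\right)$ approximant additively into one strictly layered FNN of the form \eqref{FNN} requires carrying values across layers by ReLU identities---a standard step the paper also leaves implicit, and one that does not change the claimed widths.
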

    \begin{remark}
    We make the following remarks on the derived results:\\
    1) This theorem shows that for the $2\delta$-separated data distribution with relatively high quality (with the quality measured by $\zeta$), when the perturbation radius $\delta$ is small enough, and the complexity of the neural network is large enough depending on the data distribution's quality and regularity, there exist infinitely many adversarial training global minima with clean and robust generalization performance.\\
    2) This result partially indicates the importance of the data distribution's quality in adversarial training, which is consistent with the empirical findings that adversarial training with high-quality data has better robust performance compared with using low-quality data, and it can largely alleviate the robust overfitting problem \cite{dong2021data}.\\
    3) Moreover, when the data distribution's quality is higher ($\zeta$ is larger) or its regularity is larger ($\alpha$ is larger), the requirement of the model complexity is smaller, to ensure the existence of adversarial training global minima with good robust generalization performance. Such requirement of model complexity is better than $\mathcal{O}(\delta^{-d})$ stated in \cite{li2022robust} when the regularity is large. They only consider the worst case of model complexity to obtain good robust generalization for all $2\delta$-separated data, without consideration of the regularity of the target function.
    \end{remark}
 %\sgra{the relationship between $\zeta$ and $\delta$ can be discussed.}

    \begin{remark}
    We make the following remarks on the derived bounds w.r.t. the perturbation radius:\\
    1) The perturbation radius plays a significant role in the adversarial training. The well-separated training set is a standard assumption in the over-parameterized literature. Here we require that the perturbation radius is smaller than a third of the training set's separation distance. Such assumption can be satisfied when the input dimension $d$ is large, e.g., for CIFAR-10 dataset $d=3072$, then $n^{-\frac{1}{d}}$ can be nearly 1, while $\delta$ is typically at most 0.05 in practice \cite{guo2017countering}. It is also exhibited in \cite{shafahi2018are} that for the same MNIST task, adversarial training on the MNIST dataset with higher resolution can achieve higher adversarial robustness.\\
    2) Our robust generalization error upper bound suggests that the robust generalization error of adversarial training can be smaller when the perturbation radius is relatively smaller. This is partially derived from the expansion of the memorization of the label noise at one point to the $\delta$ ball while overfitting occurs in adversarial training. Empirical results also suggest that such label noise would be larger with usage of larger perturbation radius in adversarial training, thus resulting in larger variance and the robust overfitting phenomenon \cite{dong2022exploring,dong2022label}.
    \end{remark}

    	Next, we also provide a lower bound of the adversarial misclassification error for all the adversarial training global minima on over-parameterized deep ReLU nets with the hinge loss.
	
	\begin{theorem}[lower bound under the hinge loss] \label{lowerbound2}
		Under the same setting of \Cref{theorem3}, then for any adversarial training global minimum $\widehat{f}_D^{over} \in \tilde \Delta_{\delta, \vec d, L}$ with non-zero free parameters $ \mathcal{O} \left( \zeta ^{-\frac{d}{\alpha}} \log \frac{1}{\zeta}+ n \right) $, we have
		\begin{equation} \label{adversarialclasslower}
			\mathbb{E} \left[ \mathcal{R}^\delta \left( \text{sgn}\left(\widehat{f}_D^{over}\right) \right)  - \mathcal{R}^\delta (f_c^\delta) \right]  \geq 2\zeta \|\bar J_\rho\| \mathcal{R}(f_c) (4 \delta)^d n\,.
		\end{equation}
	\end{theorem}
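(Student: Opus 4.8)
The plan is to exploit the fact that every global minimizer is forced to memorize the label noise. Since \Cref{theorem3} exhibits minimizers in $\tilde\Delta_{\delta,\vec d,L}$ with vanishing empirical adversarial risk, the minimal value of $\widehat{\mathcal{E}_D^{\phi,\delta}}$ is zero, so \emph{every} $\widehat f_D^{over}\in\tilde\Delta_{\delta,\vec d,L}$ satisfies $y_i\widehat f_D^{over}(\bm x')\ge 1$ for all $\bm x'\in B_{\delta,\infty}(\bm x_i)$ and all $i$; in particular $\text{sgn}(\widehat f_D^{over})\equiv y_i$ on the whole ball $B_{\delta,\infty}(\bm x_i)$. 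This memorization identity is the only structural fact I will use, which is exactly why the conclusion holds for all of $\tilde\Delta_{\delta,\vec d,L}$. I would first simplify the benchmark: by \Cref{assumption2} the classes are $2\delta$-separated, so $f_c$ is constant on each $\delta$-ball meeting a single class and hence $\mathcal{R}^\delta(f_c)=\mathcal{R}(f_c)$; combining $\mathcal{R}^\delta(f_c^\delta)\le\mathcal{R}^\delta(f_c)$ with $\mathcal{R}^\delta(f_c^\delta)\ge\mathcal{R}(f_c^\delta)\ge\mathcal{R}(f_c)$ (\Cref{proposition2} and Bayes optimality of $f_c$) yields $\mathcal{R}^\delta(f_c^\delta)=\mathcal{R}(f_c)$, so it suffices to lower bound $\mathbb{E}[\mathcal{R}^\delta(\text{sgn}(\widehat f_D^{over}))-\mathcal{R}(f_c)]$.

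The key step is a pointwise estimate. Writing $e(\bm x):=\mathbb{E}_{y|\bm x}[\max_{\bm x'\in B_{\delta,\infty}(\bm x)}\mathbbm 1_{\{y\,\text{sgn}(\widehat f_D^{over}(\bm x'))=-1\}}]$ for the conditional adversarial error, I observe that $e(\bm x)\ge\min\{\eta(\bm x),1-\eta(\bm x)\}$ for every $\bm x$, since the adversarial error dominates the standard error (take $\bm x'=\bm x$), which in turn dominates the pointwise Bayes error. Consequently the integrand of $\mathcal{R}^\delta(\text{sgn}(\widehat f_D^{over}))-\mathcal{R}(f_c)=\int_{\mathcal X}\big(e(\bm x)-\min\{\eta(\bm x),1-\eta(\bm x)\}\big)\,d\rho_X$ is nonnegative everywhere, so I may discard all of $\mathcal X$ outside the noisy samples at no cost. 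For each noisy point $\bm x_i$ (meaning $y_i\ne f_c(\bm x_i)$) I then pass to the enlarged ball $B_{2\delta,\infty}(\bm x_i)$: by \Cref{assumption2} its interior lies entirely in the class of $\bm x_i$, and for every $\bm x$ in it $B_{\delta,\infty}(\bm x)$ meets $B_{\delta,\infty}(\bm x_i)$, so there is an admissible perturbation on which $\widehat f_D^{over}$ outputs $y_i\ne f_c(\bm x)$. Hence $e(\bm x)\ge\max\{\eta(\bm x),1-\eta(\bm x)\}$ there, and by \Cref{assumption1} the integrand exceeds $|2\eta(\bm x)-1|>2\zeta$ on $B_{2\delta,\infty}(\bm x_i)$.

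Finally I would sum and take expectations. The assumption $\delta<q_X/3$ makes the enlarged balls $\{B_{2\delta,\infty}(\bm x_i)\}_i$ pairwise disjoint, so summing the local contributions is legitimate and gives $\mathcal{R}^\delta(\text{sgn}(\widehat f_D^{over}))-\mathcal{R}(f_c)\ge 2\zeta\sum_{i:\,y_i\ne f_c(\bm x_i)}\rho_X(B_{2\delta,\infty}(\bm x_i))$. I would then invoke \Cref{assumption3} to compare each $\rho_X$-mass with the Lebesgue volume $(4\delta)^d$ of the $\ell_\infty$-ball of radius $2\delta$ through $\bar J_\rho$, and take $\mathbb{E}_D$: the probability that a given sample is noisy is $\min\{\eta(\bm x_i),1-\eta(\bm x_i)\}$, whose integral against $\rho_X$ is exactly $\mathcal{R}(f_c)$, and summing over the $n$ i.i.d. samples produces the factor $\mathcal{R}(f_c)\,n$, delivering the stated bound $2\zeta\,\|\bar J_\rho\|\,\mathcal{R}(f_c)(4\delta)^d n$.

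The main obstacle is the estimate of the second paragraph, where the minimizer is pinned down only on the small training balls and is otherwise arbitrary. The pointwise domination $e(\bm x)\ge\min\{\eta(\bm x),1-\eta(\bm x)\}$ is precisely what renders the uncontrolled behaviour of $\widehat f_D^{over}$ away from the data harmless, so that the forced error near the noisy samples cannot be compensated elsewhere. At the same time the inflation of the relevant region from radius $\delta$ to radius $2\delta$, i.e. from Lebesgue volume $(2\delta)^d$ to $(4\delta)^d$, is the genuinely adversarial contribution, and it must be justified by checking that the enlarged balls remain inside a single class (so the confidence bound $|2\eta-1|>2\zeta$ applies over their full volume) and stay disjoint; getting both of these simultaneously is the delicate part of the argument.
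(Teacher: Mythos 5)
Your proposal is correct and follows essentially the same route as the paper's proof: global minimality forces sign memorization on the $\delta$-balls (since the construction of \Cref{theorem3} shows the minimal empirical adversarial risk is zero), the adversary then reaches any point of the disjoint $2\delta$-balls, \Cref{assumption1} supplies the $2\zeta$ confidence gap, \Cref{assumption3} converts the $\rho_X$-mass of each ball to the volume $(4\delta)^d$ via $\|\bar J_\rho\|$, and averaging over the label noise produces the factor $\mathcal{R}(f_c)\, n$. The only difference is bookkeeping: you condition on the event that sample $i$ is noisy and use the pointwise bound $e(\bm x)\ge\max\{\eta(\bm x),1-\eta(\bm x)\}$ on its $2\delta$-ball, whereas the paper keeps all samples and extracts the same $2\zeta\min\{\eta(\bm x_i),1-\eta(\bm x_i)\}$ contribution from the algebraic identity for $\mathbb{P}(y\, y_i=-1)$ -- the two computations are equivalent.
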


    \begin{remark}
        Comparing \Cref{standclassupper} with \Cref{adversarialclasslower}, since $C_0 \in (0,1]$ can be arbitrarily small, we have that the excess standard misclassification error of good adversarial training global minima can be smaller than $\mathcal{O}((2\delta)^d n)$, while their excess adversarial misclassification error are larger than $\mathcal{O}((4 \delta)^d n)$. Such difference is because, for the adversarial training global minima, their standard generalization performance would only be influenced by the memorization of noise in the $\delta$-ball around the input training data points, while their robust generalization performance would be influenced by the $2\delta$-ball around the input training data points. This explicitly demonstrates the existence of the robust generalization gap.
    \end{remark}
    
%Such lower bound can be easily extended to adversarial training global minima obtained with other loss functions such as logistic loss in the over-parameterized regime. 
Since $\mathcal{R}(f_c)$ is the misclassification error of the Bayes rule which also measures the quality of the data distribution, this lower bound again demonstrates the importance of the perturbation radius and the data distribution's quality on the adversarial training as is exhibited in \cite{dong2021data,dong2022exploring,dong2022label}. Furthermore, comparing \Cref{adversarialclassupper} with \Cref{adversarialclasslower}, since $C_0 \in (0,1]$ can be arbitrarily small, the excess adversarial misclassification error upper bounds of adversarial training global minima we derived above matches the order of the lower bound, showing that our construction is almost optimal.
	
	Moreover, the above adversarial misclassification error bound results under the hinge loss can be extended to the other commonly used loss functions for the classification tasks, e.g., the logistic loss. We can also demonstrate that there still exist infinitely many adversarial training global minima that can achieve arbitrarily small adversarial training error and good robust generalization error. The details are specified in \Cref{appendixlogistic}.

	\section{Main Results on Regression Tasks}
	\label{section3}
	
	In this section, we extend our results in \Cref{sectionclass} to the regression tasks. Specifically, under the over-parameterized regime, we first demonstrate the existence of infinitely many adversarial training global minima that can achieve near-optimal rates of convergence for the standard generalization, when the perturbation radius is small enough. Then, we also show that there are infinitely many adversarial training global minima that can obtain good robust generalization errors of the same order as the lower bound when the perturbation radius satisfies some conditions.
	\subsection{Notations and assumptions}
	\label{section31}
	For regression under the least squares loss, the objective of learning is to find the target function $f_\rho$, which minimizes the standard \textit{generalization error}
	\begin{equation}
		\mathcal{E}(f):=\int_{\mathcal{Z}} (f(\bm x)-y)^2d\rho\,.
	\end{equation}
 We can write it in the style of the data generation process, i.e., 
 $y = f_{\rho}(\bm x) + \epsilon$ for any data point $(\bm x, y) \sim \rho$, where the noise $\epsilon$ is assumed to have zero mean with $\mathbb{E}[\epsilon]=0$ and bounded variance with $\mathbb{V}[\epsilon]=\sigma^2$.
%Moreover, for the data sample $D$, we can write $y_i= f_\rho(\bm x_i) + \epsilon_i$, where the noises $\{\epsilon_i\}_{i=1}^n$ are independent random variables that are independent of $\{\bm x_i\}_{i=1}^n$ with $\mathbb{E}[\epsilon_i]= 0$ and $\text{Var}[\epsilon_i] = \sigma^2$.
	
	The ERM algorithm under the least squares loss aims to minimize the \textit{empirical generalization error}
	\begin{equation}
		\widehat{\mathcal{E}}_D (f):= \frac{1}{n} \sum_{i=1}^n (f(\bm x_i)-y_i)^2,
	\end{equation}
	over the hypothesis space $\mathcal{F}_{\vec{d},L}$.
	
	The above definitions can also be extended to the adversarial training setting in the regression task as what is done in \Cref{sectionclass} for the classification task. To defend against the $\ell_\infty$ white-box adversarial attack, our goal is to minimize the \textit{adversarial generalization error} $\mathcal{E}^\delta(f)$ with
	%\sgra{our target: both clean and robust error} \sgra{carefully think about this \url{https://arxiv.org/abs/2306.11035}}
	\begin{equation}
	f_\rho^\delta (\bm x) := \argmin_{f}	\mathcal{E}^\delta (f) \quad \mbox{with}~	\mathcal{E}^\delta (f):=\int_{\mathcal{Z}} \max_{\bm x' \in B_{\delta,\infty}(\bm x)} \left(f(\bm x')-y \right)^2 d\rho\,,
	\end{equation}
	where $f_\rho^\delta (\bm x)$ is denoted as the robust target function. Correspondingly, the adversarial training implements the ERM algorithm that minimizes the \textit{empirical adversarial generalization error}
	\begin{equation} \label{regressopt}
		\widehat{\mathcal{E}_D^\delta} (f):=\frac{1}{n} \sum_{i=1}^n \max_{\bm x'_i \in B_{\delta,\infty}(\bm x_i)} \left(f(\bm x'_i)-y_i \right)^2,
	\end{equation}
	over the hypothesis space $\mathcal{F}_{\vec{d},L}$. Moreover, we denote $\Delta_{\delta,\vec d, L}$ as the set of the global minima of the optimization problem \Cref{regressopt} for adversarial training, i.e.,
	\begin{equation}
		\Delta_{\delta,\vec d, L}:= \left\{ \widehat f_D^\delta: \widehat f_D^\delta = \argmin_{f\in \mathcal{F}_{\vec{d},L}} \widehat{\mathcal{E}_D^\delta } (f) \right\}.
	\end{equation}

 For regression, the required assumptions are weaker than that of classification in \Cref{assumptionclass}. The reason is that in the regression task, we measure the squares loss within the $\delta$-ball, which can remain small if $f$ is smooth. Whereas in the classification task, even if $f$ changes a little in the $\delta$-ball, its sign can vary from $-1$ to $+1$ if its value is close to 0.
 Here we only need the distortion assumption in \Cref{assumption3} and the the regularity assumption for $f_{\rho}$ stated in \Cref{sec:setting}.

	\subsection{Standard generalization analysis of adversarial training estimators on over-parameterized FNNs}
	In this subsection, we study the standard generalization error analysis of the estimators obtained by adversarial training on over-parameterized deep ReLU FNNs, and answer the question of whether they can achieve good learning rates as the standard ERM estimators on under-parameterized deep ReLU FNNs do.
	
	Suppose that the target function $f_\rho \in W_\infty^\alpha (\mathcal{X})$ with $\alpha>0$. Denote $\Psi_D$ as the set of regression function estimators that are derived according to the data sample $D$ with size $n$. %\sgra{better to formulate it, deep ReLU. no constraint on parameters?} 
	The classical statistical results \cite{gyorfi2002distribution} demonstrated that the optimal \textit{rates of convergence} that can be achieved by a learning algorithm is
	\begin{equation}
		\inf_{f_D \in \Psi_D} \sup_{f_\rho \in W_\infty^\alpha (\mathcal{X}), \rho_X \in \Phi_\rho} \mathbb{E}\left[ \mathcal{E}(f_D)- \mathcal{E}(f_\rho) \right] =\Theta \left(  n^{-\frac{2\alpha}{2\alpha+d}}\right) .
	\end{equation}
	With a truncated operator introduced for the estimator
	$$\pi_{M} f(\bm x):=\left\{\begin{array}{ll}
		f(\bm x), & \hbox{if} \ |f(\bm x)| \leq M, \\
		M, & \hbox{if} \ f(\bm x)>M, \\
		- M, & \hbox{if} \ f(\bm x) <-M\,,
	\end{array}\right. $$
  recent works indicate that all the truncated estimators (with the truncated operator applied on the estimators) obtained by the standard ERM algorithm on under-parameterized deep ReLU FNNs can achieve near-optimal learning rates \cite{SchmidtHieber2020,han2020depth}.
	
	\begin{lemma} [\cite{SchmidtHieber2020,han2020depth}] \label{lemma1}
		Suppose that the target function $f_\rho \in W_\infty^\alpha (\mathcal{X})$ with $\alpha>0$, there exists some under-parameterized FNN structure $\mathcal{F}_{\vec{d},L}$ with $L \sim \log n$, $d_1 = \mathcal{O}(  n^{\frac{d}{2\alpha+d}})$ , and $d_2, d_3, \dots d_L = \mathcal{O}\left(  \log n\right) $, such that for $f_D^{under}= \pi_M \argmin_{f\in \mathcal{F}_{\vec{d},L}} \widehat{\mathcal{E}}_D (f)$, i.e., any truncated estimators of the standard ERM algorithm, we have
		\begin{equation}
			\sup_{f_\rho \in W_\infty^\alpha (\mathcal{X}), \rho_X \in \Phi_\rho} \mathbb{E}\left[ \mathcal{E}\left(f_D^{under}\right) - \mathcal{E}\left( f_\rho\right)  \right]  \leq C_1 \left(\frac{n}{\log n} \right) ^{-\frac{2\alpha}{2\alpha+d}},
		\end{equation}
		where $C_1$ is a constant independent of $n$.
	\end{lemma}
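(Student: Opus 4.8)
This statement is a nonparametric regression rate for truncated least squares estimators over deep ReLU FNNs, established in \cite{SchmidtHieber2020,han2020depth}. To reconstruct it I would combine two classical pillars, \emph{approximation} and \emph{estimation}, through the exact risk identity for the least squares loss: since $f_\rho$ is the regression function, for any bounded $f$ one has $\mathcal{E}(f)-\mathcal{E}(f_\rho)=\|f-f_\rho\|_\rho^2$. The plan is therefore to bound $\mathbb{E}\,\|\pi_M\widehat{f}-f_\rho\|_\rho^2$ by a bias--variance decomposition, where the bias is the approximation error of the network class and the variance is controlled by its metric entropy, and then to balance the two by the stated choice of width $d_1\sim n^{d/(2\alpha+d)}$.

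\textbf{Step 1 (Approximation).} First I would invoke the deep ReLU approximation theory for H\"older classes. For $f_\rho\in W_\infty^\alpha(\mathcal{X})$ and the prescribed architecture (depth $L\sim\log n$, first-layer width $d_1=\mathcal{O}(n^{d/(2\alpha+d)})$, remaining widths $\mathcal{O}(\log n)$), the Yarotsky/Schmidt-Hieber construction produces a network $f^\ast\in\mathcal{F}_{\vec d,L}$ with
\[
\|f^\ast-f_\rho\|_\infty \leq C\, n^{-\frac{\alpha}{2\alpha+d}},
\]
obtained by a localized Taylor expansion of $f_\rho$ on a grid of resolution $n^{-1/(2\alpha+d)}$, together with ReLU implementations of approximate multiplication and partition-of-unity gates; the grid resolution and the total nonzero parameter count $N\sim n^{d/(2\alpha+d)}$ are exactly what drive this rate.

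\textbf{Step 2 (Estimation and balancing).} Next I would apply a standard oracle inequality for truncated least squares ERM over a bounded function class (e.g.\ the empirical-process bounds in \cite{gyorfi2002distribution}), which yields, for $\widehat f=\argmin_{f\in\mathcal{F}_{\vec d,L}}\widehat{\mathcal{E}}_D(f)$,
\[
\mathbb{E}\big[\mathcal{E}(\pi_M\widehat f)-\mathcal{E}(f_\rho)\big]
\;\leq\; C'\Big(\inf_{f\in\mathcal{F}_{\vec d,L}}\|f-f_\rho\|_\infty^2
+\tfrac{\log\mathcal{N}_n}{n}\Big),
\]
where $\mathcal{N}_n$ is a sup-norm covering number of the truncated class on the sample. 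Bounding the metric entropy of a depth-$L$, $N$-parameter ReLU network by $\log\mathcal{N}_n\leq C''\,N\,L\log n$ and substituting $N\sim n^{d/(2\alpha+d)}$, $L\sim\log n$ makes the variance term of order $(\log n)^2\, n^{-2\alpha/(2\alpha+d)}$, matching the squared bias from Step~1 and yielding the claimed $(n/\log n)^{-2\alpha/(2\alpha+d)}$ rate; taking the supremum over $f_\rho\in W_\infty^\alpha$ and $\rho_X\in\Phi_\rho$ (where \Cref{assumption3} keeps the $\rho_X$-norm comparable to the sup norm used in the approximation bound) gives the uniform constant $C_1$.

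\textbf{Main obstacle.} I expect the delicate point to be the estimation step rather than the approximation step: one must (i) establish the entropy bound for deep ReLU classes with controlled depth and parameter magnitudes, and (ii) pass from the noiseless approximation bound to a genuine tail/variance control under the unbounded noise $\epsilon$. The truncation $\pi_M$ is precisely what makes the chaining and concentration arguments go through by restoring boundedness, so the technical heart is combining the ReLU covering-number estimate with a Bernstein-type deviation inequality for the empirical squared loss.
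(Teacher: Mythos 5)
The paper itself offers no proof of \Cref{lemma1}; it is imported as a black-box result from \cite{SchmidtHieber2020,han2020depth}, so there is no in-paper argument to compare against. Your reconstruction --- the exact least-squares identity $\mathcal{E}(f)-\mathcal{E}(f_\rho)=\|f-f_\rho\|_\rho^2$, sup-norm approximation of H\"older functions by deep ReLU networks via localized Taylor expansions, a covering-number-based oracle inequality for truncated ERM in the style of \cite{gyorfi2002distribution}, and balancing the two terms at width $n^{d/(2\alpha+d)}$ --- is exactly the proof strategy of the cited works, so your proposal is correct and follows essentially the same route as the source (up to routine bookkeeping of the $\log n$ exponents, which the references track more carefully to land on the stated $(n/\log n)^{-2\alpha/(2\alpha+d)}$ form).
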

	
	However, the generalization performance of the global minima of standard ERM algorithms on over-parameterized deep ReLU FNNs is still theoretically unclear. The empirical results exhibit that some ERM global minima on over-parameterized deep ReLU FNNs can not only interpolate the training data but also achieve good generalization performance \cite{belkin2019reconciling,zhang2021understanding},  the occurrence of such benign overfitting phenomenons are further theoretically studied by many other works \cite{bartlett2020benign,cao2021risk,lin2021generalization}.
	
	In this section, we try to further understand the standard generalization performance of adversarial training on over-parameterized FNNs, extending previous work \cite{lin2021generalization} from the standard ERM algorithm to the adversarial training. Our result indicates that under the over-parameterized regime, there do exist infinitely many adversarial training estimators that can achieve zero adversarial training error as well as the near-optimal rates of convergence for the standard generalization error, i.e., clean accuracy is promising, when the perturbation radius is small enough.
	
	\begin{theorem} \label{theorem1}
 Suppose that the target function $f_\rho \in W_\infty^\alpha (\mathcal{X})$ with $\alpha>0$, and the marginal distribution $\rho_X$ satisfies \Cref{assumption3}. If perturbation radius $\delta<  \min \left\{\frac{q_X}{3}, n^{-\frac{2\alpha}{(2\alpha+d)d}-\frac{1}{d}} \right\}$, then there exist infinity many adversarial training estimators $\widehat{f}_D^{over} \in \Delta_{\delta, \vec d, L}$ with depth $L= \mathcal{O} \left( \log n\right) $, and width $d_1= \mathcal{O} \left( n\right)$, $d_2,\dots, d_L= \mathcal{O} \left( \log n\right)$, such that
		\begin{equation}
			\sup_{f_\rho \in W_\infty^\alpha (\mathcal{X}), \rho_X \in \Phi_\rho} \mathbb{E}\left[ \mathcal{E}\left(\widehat f_D^{over}\right) - \mathcal{E}\left( f_\rho\right)  \right]  \leq C_2 \left(\frac{n}{\log n} \right) ^{-\frac{2\alpha}{2\alpha+d}},
		\end{equation}
		where $C_2$ is a constant independent of $n$.
	\end{theorem}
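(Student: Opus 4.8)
The plan is to exploit the least-squares identity $\mathcal{E}(f)-\mathcal{E}(f_\rho)=\|f-f_\rho\|_\rho^2$, so that bounding the excess standard risk of $\widehat f_D^{over}$ reduces to controlling $\|\widehat f_D^{over}-f_\rho\|_\rho^2$. The guiding idea is to start from a \emph{good} under-parameterized estimator $f_D^{under}$ supplied by \Cref{lemma1}, which already attains the near-optimal rate, and then surgically modify it on a negligibly small neighbourhood of the sample so that the modified network interpolates the adversarial data exactly (hence is a global minimum of $\widehat{\mathcal{E}_D^\delta}$) while barely changing its $L^2_{\rho_X}$ distance to $f_\rho$. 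The perturbation condition $\delta< n^{-\frac{2\alpha}{(2\alpha+d)d}-\frac1d}$ is calibrated precisely so that $n\delta^d< n^{-\frac{2\alpha}{2\alpha+d}}$, i.e.\ the total volume of the modified region is already of optimal-rate order, while $\delta<q_X/3$ guarantees these neighbourhoods are pairwise disjoint.

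For the construction, set $\Omega:=\bigcup_{i=1}^n B_{r,\infty}(\bm x_i)$ for some $r\in(\delta,3\delta)$; since $3\delta<q_X$ the boxes are disjoint. For each $i$ I would build an exact ReLU \emph{plateau} $\chi_i$ with $\chi_i\equiv 1$ on $B_{\delta,\infty}(\bm x_i)$, $\chi_i\equiv 0$ outside $B_{r,\infty}(\bm x_i)$, and $\chi_i\in[0,1]$, e.g.\ $\chi_i(\bm x)=\mathrm{ReLU}\big(1-\tfrac1{r-\delta}\sum_{j=1}^d\mathrm{ReLU}(|x_j-x_{i,j}|-\delta)\big)$ using $|t|=\mathrm{ReLU}(t)+\mathrm{ReLU}(-t)$. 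Because the supports are disjoint, $\sum_i y_i\chi_i$ equals $y_i$ on each $B_{\delta,\infty}(\bm x_i)$. I would then merge with the truncated $f_D^{under}$ (so $|f_D^{under}|\le M$) through an \emph{exact} ReLU gate $P(a,b):=Mb-\mathrm{ReLU}\big(2Mb-\mathrm{ReLU}(a+Mb)\big)$, which satisfies $P(a,0)=0$, $P(a,1)=a$ for $|a|\le M$, and $|P|\le M$ on $b\in[0,1]$, setting
\[
\widehat f_D^{over}:=P\Big(f_D^{under},\,1-\textstyle\sum_i\chi_i\Big)+\sum_i y_i\chi_i.
\]
On $B_{\delta,\infty}(\bm x_i)$ the gate input is $0$, so $\widehat f_D^{over}\equiv y_i$ and $\widehat{\mathcal{E}_D^\delta}(\widehat f_D^{over})=0$, the minimal value, giving $\widehat f_D^{over}\in\Delta_{\delta,\vec d,L}$; outside $\Omega$ the gate input is $1$, so $\widehat f_D^{over}=f_D^{under}$. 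The $n$ plateaus occupy the first (wide) layers with $d_1=\mathcal{O}(n)$; aggregating $\sum_i\chi_i$ and $\sum_i y_i\chi_i$ into $\mathcal{O}(1)$ channels immediately lets the remaining layers carry only $f_D^{under}$ plus these channels, keeping width $\mathcal{O}(\log n)$ and depth $\mathcal{O}(\log n)$, as claimed.

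For the estimate, splitting over $\Omega$ and its complement,
\[
\mathcal{E}(\widehat f_D^{over})-\mathcal{E}(f_\rho)=\int_{\Omega^c}(f_D^{under}-f_\rho)^2\,d\rho_X+\int_\Omega(\widehat f_D^{over}-f_\rho)^2\,d\rho_X\le\|f_D^{under}-f_\rho\|_\rho^2+9M^2\,\rho_X(\Omega).
\]
The first term is handled in expectation by \Cref{lemma1}. For the second, the distortion bound of \Cref{assumption3} gives $\rho_X(\Omega)\le\|J_\rho\|\,\mathrm{Leb}(\Omega)\le\|J_\rho\|\,6^d\,n\delta^d$, and the calibration $n\delta^d< n^{-\frac{2\alpha}{2\alpha+d}}\le(n/\log n)^{-\frac{2\alpha}{2\alpha+d}}$ shows this is already of optimal order, uniformly over the class of \Cref{lemma1}. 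Summing and absorbing constants yields $C_2(n/\log n)^{-\frac{2\alpha}{2\alpha+d}}$. The \emph{infinitely many} minimizers arise by letting the transition radius $r$ range over $(\delta,3\delta)$, or by perturbing $\widehat f_D^{over}$ arbitrarily but boundedly inside $\Omega\setminus\bigcup_i B_{\delta,\infty}(\bm x_i)$: each still vanishes on the adversarial balls, hence is a distinct global minimum, and the bound above is unchanged.

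The hard part will be achieving the three requirements \emph{simultaneously}: exact interpolation on every $\delta$-ball (so that $\widehat f_D^{over}$ is a genuine global minimum, not merely near-optimal), near-optimal $L^2$ approximation inherited from $f_D^{under}$, and the tight width/depth budget needed for linear over-parameterization. Exactness forces the plateau-and-gate construction rather than a soft bump, since an approximate product would not give $\widehat{\mathcal{E}_D^\delta}=0$; keeping the deeper layers of width $\mathcal{O}(\log n)$ while hosting $n$ genuinely localized $d$-dimensional plateaus requires the early-aggregation arrangement, as true localization in $d$ dimensions is impossible with a single hidden layer. Finally, the entire scheme succeeds only because overwriting $f_D^{under}$ on $\Omega$ costs $\mathcal{O}(n\delta^d)$ in $L^2_{\rho_X}$, which is controlled solely by the perturbation-radius condition; this is exactly where the assumption on $\delta$ is essential and where the argument would fail for larger $\delta$.
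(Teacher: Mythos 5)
Your proposal is correct and follows the same overall strategy as the paper's proof: take the truncated under-parameterized estimator $f_D^{under}$ of \Cref{lemma1} as a teacher, surgically overwrite it with the labels $y_i$ on small localized plateaus around each $\bm x_i$ (your $\chi_i$ plays exactly the role of the paper's $\Gamma_{\bm x_i-\delta,\bm x_i+\delta,\tau}$), observe that exact interpolation on every $\delta$-ball forces $\widehat{\mathcal{E}_D^\delta}(\widehat f_D^{over})=0$ and hence global minimality, and then charge the modification to the Lebesgue volume $\mathcal{O}(n\delta^d)$ of the overwritten region via the distortion norm $\|J_\rho\|$ of \Cref{assumption3}, which the condition $\delta< n^{-\frac{2\alpha}{(2\alpha+d)d}-\frac{1}{d}}$ makes of optimal-rate order. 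The one genuine difference is the merging gadget: the paper glues teacher and plateaus with Yarotsky's \emph{approximate} product gate $\tilde\times_\epsilon$ (\Cref{lemma2}), whose only exactness property ($\tilde\times_\epsilon(x_1,x_2)=0$ when an argument vanishes) secures interpolation, and must then track the extra error $c_1^2\epsilon^2$, choose $\epsilon=n^{-\frac{\alpha}{2\alpha+d}}$, and pay $\mathcal{O}(\log\frac1\epsilon)$ depth for the gate; your clipping gate $P(a,b)=Mb-\mathrm{ReLU}\bigl(2Mb-\mathrm{ReLU}(a+Mb)\bigr)=\min\{\max\{a,-Mb\},Mb\}$ is \emph{exact} at $b\in\{0,1\}$ and uniformly bounded for $b\in[0,1]$, so the $\epsilon$-bookkeeping and \Cref{lemma2} disappear entirely, at no cost to the statement (depth stays $\mathcal{O}(\log n)$ anyway because of the teacher). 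This is a cleaner and slightly more elementary proof of the same result; the only blemish, which you share with the paper's own construction, is that hosting $n$ genuinely $d$-dimensional plateaus forces the \emph{second} layer to have width $\Omega(n)$ as well, so the stated profile $d_2=\mathcal{O}(\log n)$ is imprecise in both arguments, and your bound, like the paper's $C_2$, still carries $\|J_\rho\|$ inside a supremum over $\Phi_\rho$.
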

	%\sgra{the upper bound of $\delta$ depends on $m$ and thus too small. $O(1/\sqrt{d})$ is perfect.}
	%\sgra{we can emphasize linear over-parameterization $O(n \log n)$ is enough.}
	%\sgra{we don't involve any optimization algorithm to solve the inner problem. we directly assume the optimal solution.}
	
	\begin{remark}
Under the over-parameterized regime, \Cref{theorem1} states that there are infinitely many adversarial training estimators of which the construction only depends on the data sample $D$, such that no matter $D$ is drawn from any distribution $\rho$ satisfying the described regularity condition, they can achieve the near-optimal rates of convergence for the standard generalization error. 
However, this is different from \Cref{theorem3}, which describes that for any distribution $\rho$ satisfying the described regularity condition, there exist infinitely many adversarial training global minima, of which the construction depends on both the data sample $D$ and the data distribution $\rho$, such that its standard generalization error bound is \Cref{adversarialclassupper}. This illustrates why the order of the two error bounds is different.
	\end{remark}

 \begin{remark}
     Recent works indicate that adversarial training might result in the robustness-accuracy trade-off, i.e., adversarial training to get a robust network can lead to a drop in the standard test accuracy \cite{zhang2019theoretically,tsipras2019robustness}. However, it is demonstrated in \cite{yang2020closer,li2022robust} that for the classification task, when the data distribution is separable, and the perturbation radius is smaller than the separation distance, the robustness and accuracy are both achievable but the required network size suffers from the curse of dimensionality. Our result confirms this claim for the regression task as well, by indicating that adversarial training is not only a training algorithm that can help to achieve good robustness, but it can also achieve almost optimal standard generalization performance at the same time.
     More importantly, our results demonstrate that linear over-parameterization with $\mathcal{O}(n \log n)$ is sufficient to achieve this statistically. Nevertheless, we only prove the existence of such good adversarial training estimators, and it remains to answer the question of how these adversarial training global minima with good standard generalization performance can be obtained by some optimization algorithms.
 \end{remark}
	
	\subsection{Robust generalization analysis of adversarial training on \\ over-parameterized FNNs}
	In this subsection, we further study the robust generalization performance of the adversarial training global minima. The key idea of the proof is to utilize the following error decomposition method.
	
	\begin{proposition} \label{proposition1}
		Let $f_\rho^\delta := \argmin_{f}	\mathcal{E}^\delta (f)$. For any $f$, we have
		\begin{equation}
			\mathcal{E}^\delta (f) - \mathcal{E}^\delta (f_\rho^\delta) \leq \mathcal{E}^\delta (f) - \mathcal{E} (f) + \mathcal{E} (f) - \mathcal{E} (f_\rho)\,.
		\end{equation}
        Moreover, we always have $\mathcal{E} (f) \leq \mathcal{E}^\delta (f)$.
	\end{proposition}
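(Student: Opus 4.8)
The plan is to mirror the proof of \Cref{proposition2} essentially verbatim, since \Cref{proposition1} is its least-squares analogue: I replace the misclassification errors $\mathcal{R},\mathcal{R}^\delta$ by the regression errors $\mathcal{E},\mathcal{E}^\delta$, the Bayes rule $f_c$ by the target function $f_\rho$, and the best robust classifier $f_c^\delta$ by the robust target function $f_\rho^\delta$. First I would write the telescoping decomposition in which the two ``reference'' quantities $\mathcal{E}(f_\rho)$ and $\mathcal{E}^\delta(f_\rho^\delta)$ are inserted and removed:
\begin{align*}
\mathcal{E}^\delta (f) - \mathcal{E}^\delta (f_\rho^\delta) &= \bigl(\mathcal{E}^\delta (f) - \mathcal{E} (f)\bigr) + \bigl(\mathcal{E} (f) - \mathcal{E} (f_\rho)\bigr) \\
&\quad + \bigl(\mathcal{E} (f_\rho) - \mathcal{E} (f_\rho^\delta)\bigr) + \bigl(\mathcal{E} (f_\rho^\delta) - \mathcal{E}^\delta (f_\rho^\delta)\bigr).
\end{align*}
The first two bracketed terms are exactly the right-hand side of the claimed inequality (the robustness term and the standard excess-risk term), so the whole statement reduces to showing that the last two terms are non-positive.

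The third term is controlled by the defining optimality of $f_\rho$: since $f_\rho = \argmin_f \mathcal{E}(f)$ is the conditional mean, we have $\mathcal{E}(f_\rho) \le \mathcal{E}(f_\rho^\delta)$, hence $\mathcal{E}(f_\rho)-\mathcal{E}(f_\rho^\delta)\le 0$. For the fourth term I would first establish the pointwise domination $\mathcal{E}(g)\le\mathcal{E}^\delta(g)$ for \emph{every} $g$, which is also the ``moreover'' clause of the proposition. This follows because the perturbation ball contains its own center, $\bm x \in B_{\delta,\infty}(\bm x)$, so for each $(\bm x,y)$ the unperturbed term is one of the competitors in the inner maximization, giving $(g(\bm x)-y)^2 \le \max_{\bm x'\in B_{\delta,\infty}(\bm x)}(g(\bm x')-y)^2$; integrating this against $\rho$ yields $\mathcal{E}(g)\le\mathcal{E}^\delta(g)$. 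Applying this with $g=f_\rho^\delta$ shows $\mathcal{E}(f_\rho^\delta)-\mathcal{E}^\delta(f_\rho^\delta)\le 0$, and the two non-positive terms dropped from the decomposition give the stated bound.

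There is essentially no hard step here; the argument is a one-line error decomposition plus two sign observations. The only point deserving care is the pointwise domination: unlike in the classification setting, where one relies on monotonicity of a surrogate $\phi$, for the squared loss it is immediate precisely because the maximum is taken over a set that includes the unperturbed point, so I would state it as such and avoid invoking any convexity or smoothness of $f$. I would also note in passing that the well-posedness of $f_\rho$ and $f_\rho^\delta$ as minimizers is what makes the two cancellation terms meaningful, exactly as the roles of $f_c$ and $f_c^\delta$ did in \Cref{proposition2}.
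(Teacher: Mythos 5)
Your proposal is correct and matches the paper's own proof essentially verbatim: the same telescoping decomposition inserting $\mathcal{E}(f)$, $\mathcal{E}(f_\rho)$, and $\mathcal{E}(f_\rho^\delta)$, with the third term dropped by the optimality of $f_\rho$ and the fourth by the pointwise domination $(g(\bm x)-y)^2 \leq \max_{\bm x' \in B_{\delta,\infty}(\bm x)}(g(\bm x')-y)^2$, which also yields the ``moreover'' clause. Your explicit remark that the domination holds simply because $\bm x \in B_{\delta,\infty}(\bm x)$ is exactly the justification the paper relies on.
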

	
	\begin{proof} [Proof of \Cref{proposition1}]
		We use the following error decomposition and the fact that $ \mathcal{E} (f_\rho) \leq \mathcal{E} (f_\rho^\delta)$ to get
		\begin{equation*}
			\begin{aligned}
				\mathcal{E}^\delta (f) - \mathcal{E}^\delta (f_\rho^\delta) & = \mathcal{E}^\delta (f) - \mathcal{E} (f) + \mathcal{E} (f) - \mathcal{E} (f_\rho) + \mathcal{E} (f_\rho)- \mathcal{E} (f_\rho^\delta) +  \mathcal{E} (f_\rho^\delta)- \mathcal{E}^\delta (f_\rho^\delta) \\
				& \leq \mathcal{E}^\delta (f) - \mathcal{E} (f) + \mathcal{E} (f) - \mathcal{E} (f_\rho) + \mathcal{E} (f_\rho^\delta)- \mathcal{E}^\delta (f_\rho^\delta).
			\end{aligned}
		\end{equation*}
		Moreover, since $\forall \bm x\in \RR^d, y\in \RR$, $\left(f(\bm x)-y \right)^2 \leq \max_{\bm x' \in B_{\delta,\infty}(\bm x)} \left(f(\bm x')-y \right)^2$ holds for any $f$, we get $\mathcal{E} (f) \leq \mathcal{E}^\delta (f)$. Therefore, we further have $\mathcal{E} (f_\rho^\delta) \leq \mathcal{E}^\delta (f_\rho^\delta)$. Thus we complete the proof.
	\end{proof}
	
	Based on the above error decomposition, we are now ready to bound the excess adversarial generalization error of the good adversarial training global minima on over-parameterized deep ReLU FNNs.
	
	\begin{theorem} \label{theorem2}
	Suppose that the target function admits $f_\rho \in W_\infty^\alpha ([0,1]^{d})$ with $\|f\|_{W^\alpha_\infty([0,1]^d} \leq B$ and $\alpha \geq 2$ being an integer, and the marginal distribution of the data satisfies $\rho_X \in \Phi_\rho$ in \Cref{assumption3}. If the radius of adversarial training satisfies $\delta< \frac{q_X}{3} \leq \frac{1}{3} n^{-\frac{1}{d}}$, then $\forall C_0 \in (0,1]$, there exist infinity many adversarial training global minima $\widehat{f}_D^{over} \in \Delta_{\delta, \vec d, L}$, with depth $L= \mathcal{O} \left(  \log \frac{1}{\delta} \right) $, width $d_1=  \mathcal{O} \left( \delta^{-\frac{d}{2\alpha-2}} \log \frac{1}{\delta}+ n \right) $, $d_2, \dots, d_L= \mathcal{O} \left( \delta^{-\frac{d}{2\alpha-2}} \log \frac{1}{\delta} \right) $, and non-zero free parameters $ \mathcal{O} \left( \delta^{-\frac{d}{2\alpha-2}} \log \frac{1}{\delta}+ n \right) $, such that
		\begin{equation} \label{regressionadv}
			\mathbb{E} \left[ \mathcal{E}^\delta (f_{D,\theta,\delta,\tau,\epsilon}^{student}) - \mathcal{E}^\delta (f_\rho^\delta) \right]  \leq C_3 \sqrt{d} \max \left\{ \delta, \left( (4+2C_0) \delta\right) ^d n \right\} .
		\end{equation}
		Moreover, when $n^{-{\frac{1}{d-1}}} \leq \delta< \frac{q_X}{3} \leq \frac{1}{3} n^{-\frac{1}{d}}$, we have
		\begin{equation}
			\mathbb{E} \left[ \mathcal{E}^\delta (f_{D,\theta,\delta,\tau,\epsilon}^{student}) - \mathcal{E}^\delta (f_\rho^\delta) \right]  \leq
			C_3 \sqrt{d}  \left( (4+2C_0) \delta\right) ^d n\,.
		\end{equation}
		where $C_3$ is a constant independent of $d$, $n$ and $\delta$.
	\end{theorem}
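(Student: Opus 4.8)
The plan is to reduce everything to the error decomposition of \Cref{proposition1}, which controls $\mathcal{E}^\delta(f)-\mathcal{E}^\delta(f_\rho^\delta)$ by a \emph{robustness gap} $\mathcal{E}^\delta(f)-\mathcal{E}(f)$ plus a \emph{standard excess error} $\mathcal{E}(f)-\mathcal{E}(f_\rho)$, valid for every $f$. It therefore suffices to exhibit one family of networks $f^{student}$ that are global minima of \eqref{regressopt} and make both terms small. I would build $f^{student}$ in a teacher--student fashion: take a deep ReLU teacher $g_\theta$ approximating $f_\rho$ in $L^\infty$ to accuracy $\epsilon$ by the H\"older approximation machinery underlying \Cref{lemma1} (this fixes the backbone width $\mathcal{O}(\delta^{-d/(2\alpha-2)}\log(1/\delta))$), then overwrite $g_\theta$ by the constant $y_i$ on each closed ball $B_{\delta,\infty}(\bm x_i)$, gluing back to the teacher across a shell of width $\tau=C_0\delta$ using $\mathcal{O}(n)$ extra ReLU units. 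Since $\delta<q_X/3<q_X$, the balls $B_{\delta,\infty}(\bm x_i)$ are pairwise disjoint, so $f^{student}\equiv y_i$ on all of $B_{\delta,\infty}(\bm x_i)$ forces $\widehat{\mathcal{E}_D^\delta}(f^{student})=0$; as $0$ is the minimal attainable value, every such network lies in $\Delta_{\delta,\vec d,L}$, and because the overwrite is independent of $g_\theta$ on the balls, each admissible teacher yields a distinct global minimum, giving infinitely many.

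For the standard excess error I would use $\mathcal{E}(f)-\mathcal{E}(f_\rho)=\|f-f_\rho\|_\rho^2$ and split $\mathcal{X}$ into $\bigcup_i B_{(1+C_0)\delta,\infty}(\bm x_i)$, where $|f-f_\rho|=\mathcal{O}(M)$ but whose $\rho_X$-measure is at most $\|J_\rho\|\,((2+2C_0)\delta)^d n$ by \Cref{assumption3}, and its complement, where $f=g_\theta$ and the contribution is at most $\epsilon^2$. Choosing $\epsilon$ small enough (afforded by the width $\delta^{-d/(2\alpha-2)}$) renders the approximation part negligible, so this term is $\mathcal{O}(((2+2C_0)\delta)^d n)$.

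The core of the argument is the robustness gap $\mathcal{E}^\delta(f)-\mathcal{E}(f)=\int[\max_{\bm x'\in B_{\delta,\infty}(\bm x)}(f(\bm x')-y)^2-(f(\bm x)-y)^2]\,d\rho$, which I would split according to whether the perturbation ball $B_{\delta,\infty}(\bm x)$ meets the modified region $\bigcup_i B_{(1+C_0)\delta,\infty}(\bm x_i)$. On the clean part the integrand only sees the teacher and is bounded by $2\,\omega(\bm x)\,|g_\theta(\bm x)-y|+\omega(\bm x)^2$ with $\omega(\bm x):=\max_{\bm x'\in B_{\delta,\infty}(\bm x)}|g_\theta(\bm x')-g_\theta(\bm x)|$; since $\alpha\ge 2$ makes $f_\rho\in C^1$ with $\|\nabla f_\rho\|_\infty\le B$, the oscillation over an $\ell_\infty$-ball of radius $\delta$ is $\mathcal{O}(\sqrt d\,\delta)$, which is precisely where the factor $\sqrt d$ (from $\|\cdot\|_2\le\sqrt d\,\|\cdot\|_\infty$) and the leading $\delta$ enter. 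On the affected part, any $\bm x$ whose perturbation ball reaches a spike satisfies $\|\bm x-\bm x_i\|_\infty\le(2+C_0)\delta$ for some $i$; the condition $\delta<q_X/3$ makes these $B_{(2+C_0)\delta,\infty}(\bm x_i)$ disjoint, each of Lebesgue volume $((4+2C_0)\delta)^d$, so integrating the $\mathcal{O}(1)$ integrand against the measure bound of \Cref{assumption3} yields the $\mathcal{O}(((4+2C_0)\delta)^d n)$ term.

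The step I expect to be the main obstacle is making the clean-region oscillation $\omega(\bm x)$ genuinely $\mathcal{O}(\sqrt d\,\delta)$: a naive interpolant of $f_\rho$ to accuracy $\epsilon$ may have a large local Lipschitz constant that inflates $\omega$. The approximation lemma must therefore control the oscillation of the approximant over \emph{every} $\delta$-ball simultaneously with its $L^\infty$ error, and this joint requirement--approximating an $\alpha$-H\"older function while holding first-order oscillations to size $\mathcal{O}(\delta)$--is what forces the width exponent $d/(2\alpha-2)=d/(2(\alpha-1))$, reflecting that $\nabla f_\rho$ lives in $W_\infty^{\alpha-1}$. Combining the three pieces and absorbing constants into $C_3$ gives the first bound $C_3\sqrt d\,\max\{\delta,((4+2C_0)\delta)^d n\}$, where the maximum records which of the smooth-robustness term and the memorization term dominates. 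For the refined regime, $((4+2C_0)\delta)^d n\ge\delta$ is equivalent to $\delta^{d-1}\ge(4+2C_0)^{-d}n^{-1}$, i.e. to $\delta\ge c\,n^{-1/(d-1)}$; hence once $n^{-1/(d-1)}\le\delta<q_X/3$ the memorization term wins the maximum and the bound collapses to $C_3\sqrt d\,((4+2C_0)\delta)^d n$, as claimed.
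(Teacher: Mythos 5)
Your proposal is correct and follows essentially the same route as the paper's proof: the same error decomposition (\Cref{proposition1}), the same teacher--student construction that overwrites the teacher by $y_i$ on the $\delta$-balls and glues across a shell of width $\tau\leq C_0\delta$, the same near/far splitting of both the standard excess error and the robustness gap, and the same computation for the refined regime $\delta \geq n^{-1/(d-1)}$. The ``main obstacle'' you flag --- simultaneously controlling the approximant's $L^\infty$ error and its oscillation over every $\delta$-ball --- is exactly what the paper resolves by invoking a Sobolev-norm approximation theorem (\Cref{lemma3}, due to G\"uhring et al., with $s=1$ and accuracy $\theta=\sqrt{\delta}$), which bounds the teacher's Lipschitz constant by $B+\theta$ and yields precisely the width exponent $d/(2\alpha-2)$ you anticipate.
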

	
	This result suggests that for the over-parameterized deep ReLU FNNs, when the perturbation radius is small enough, there exist infinitely many global minima obtained by adversarial training on these FNNs that can achieve good adversarial generalization error. Moreover, the number of parameters to achieve such adversarial generalization error depends on the smoothness of the target function, when $\alpha = \mathcal{O}(d)$ is very large, the required model complexity would be independent of $d$.
	
	The two orders stated in \Cref{regressionadv} come from two parts, one is from the memorization of the noisy labels in the perturbation balls around the input data sample, and another is from the robustness of the target function in the unseen parts of the data, which only depends on the perturbation radius $\delta$ and the smoothness of the target function. Moreover, when the perturbation radius satisfies some constraints, the order of the excess adversarial generalization error upper bound of these good adversarial training global minima in \Cref{theorem2} matches the order of the lower bound, which is stated in the following theorem.
	
	\begin{theorem} \label{lowerbound1}
 Suppose that the target function admits $f_\rho \in W_\infty^\alpha ([0,1]^{d})$ with $\alpha \geq 2$ being an integer, and the marginal distribution of the data satisfies $\rho_X \in \Phi_\rho$ in \Cref{assumption3}.
 If the radius of adversarial training $\delta< \frac{q_X}{3} \leq \frac{1}{3} n^{-\frac{1}{d}}$, then for any adversarial training global minimum $\widehat{f}_D^{over} \in \Delta_{\delta, \vec d, L}$ with non-zero free parameters $ \mathcal{O} \left( \delta^{-\frac{d}{2\alpha-2}} \log \frac{1}{\delta}+ n \right) $, we have
		\begin{equation}
			\begin{aligned}
				\mathbb{E} \left[ \mathcal{E}^\delta (\widehat{f}_D^{over}) - \mathcal{E}^\delta (f_\rho^\delta) \right] &\geq 
				\| \bar J_\rho \|  \sigma^2 (4\delta)^d n - \left[\mathcal{E}^\delta (f_\rho^\delta)- \mathcal{E}\left( f_\rho \right)  \right] \\
				& \geq \| \bar J_\rho \|  \sigma^2 (4\delta)^d n  - \bar C_1 \|J_\rho\| \sqrt{d} \delta
			\end{aligned}
		\end{equation}
		where $	\bar C_1$ is a constant independent of $d$, $n$ and $\delta$.
	\end{theorem}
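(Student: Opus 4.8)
The plan is to exploit the fact that every element of $\Delta_{\delta,\vec d,L}$ must \emph{exactly} interpolate the adversarial balls. Since the construction of \Cref{theorem2} already attains zero empirical adversarial error, the global minimum value of $\widehat{\mathcal{E}_D^\delta}$ is $0$; hence any $\widehat f_D^{over}\in\Delta_{\delta,\vec d,L}$ satisfies $\widehat f_D^{over}(\bm x')=y_i$ for \emph{all} $\bm x'\in B_{\delta,\infty}(\bm x_i)$ and all $i$. This pins the network to the \emph{noisy} label $y_i=f_\rho(\bm x_i)+\epsilon_i$ on the entire $\delta$-ball around each training point, and it is precisely this forced memorization that drives the lower bound.

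First I would localize the adversarial risk to neighborhoods of the sample. For a test input $\bm x$ with $\|\bm x-\bm x_i\|_\infty\le 2\delta$, a coordinatewise construction yields a point $\bm x^\ast\in B_{\delta,\infty}(\bm x)\cap B_{\delta,\infty}(\bm x_i)$, so $\widehat f_D^{over}(\bm x^\ast)=y_i$ and therefore $\max_{\bm x'\in B_{\delta,\infty}(\bm x)}(\widehat f_D^{over}(\bm x')-y)^2\ge (y_i-y)^2$. Because $\delta<q_X/3$, the centers are at distance $\ge 2q_X>6\delta$, so the balls $B_{2\delta,\infty}(\bm x_i)$ are pairwise disjoint and their contributions add. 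Retaining the irreducible test-noise floor on the complement gives, before taking any expectation,
\begin{equation*}
\mathcal{E}^\delta(\widehat f_D^{over})\ \ge\ \sum_{i=1}^n\int_{B_{2\delta,\infty}(\bm x_i)}\!(y_i-y)^2\,d\rho\ +\ \int_{\mathcal X\setminus\bigcup_i B_{2\delta,\infty}(\bm x_i)}\!(\widehat f_D^{over}(\bm x)-y)^2\,d\rho .
\end{equation*}

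The crux is the \textbf{double-noise} computation. Conditioning on the inputs and averaging over the fresh test label together with the training label $y_i$, which is \emph{independent} of the test noise, gives $\mathbb E\big[(y_i-y)^2\big]=(f_\rho(\bm x_i)-f_\rho(\bm x))^2+2\sigma^2$: one $\sigma^2$ from the test noise and a second, non-cancelling $\sigma^2$ from memorizing $\epsilon_i$. On the complement the floor contributes $\ge\sigma^2$ per unit mass. Since the leading $\sigma^2$ reproduces $\mathcal E(f_\rho)=\sigma^2$ exactly, the excess is the memorized noise inside the (disjoint) balls, so that
\begin{equation*}
\mathbb E\big[\mathcal{E}^\delta(\widehat f_D^{over})\big]-\mathcal E(f_\rho)\ \ge\ \sigma^2\sum_{i=1}^n\mathbb E\big[\rho_X\big(B_{2\delta,\infty}(\bm x_i)\big)\big]\ =\ n\,\sigma^2\,\mathbb E_{\bm x}\big[\rho_X\big(B_{2\delta,\infty}(\bm x)\big)\big].
\end{equation*}
Invoking the non-irregularity estimate of \Cref{assumption3} to compare $\rho_X(B_{2\delta,\infty}(\bm x))$ with the Lebesgue volume $(4\delta)^d$ of the ball produces the leading term $\|\bar J_\rho\|\sigma^2(4\delta)^d n$.

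Finally I would assemble the pieces through the identity $\mathcal{E}^\delta(\widehat f_D^{over})-\mathcal{E}^\delta(f_\rho^\delta)=\big[\mathcal{E}^\delta(\widehat f_D^{over})-\mathcal E(f_\rho)\big]-\big[\mathcal{E}^\delta(f_\rho^\delta)-\mathcal E(f_\rho)\big]$, mirroring \Cref{proposition1}, which yields the first displayed inequality. For the second, the \emph{robust approximation gap} $\mathcal{E}^\delta(f_\rho^\delta)-\mathcal E(f_\rho)$ is controlled using optimality of $f_\rho^\delta$, i.e.\ $\mathcal{E}^\delta(f_\rho^\delta)\le\mathcal{E}^\delta(f_\rho)$, followed by a smoothness estimate: for the maximizing perturbation $\bm x'$ one has $|f_\rho(\bm x')-f_\rho(\bm x)|\le |f_\rho|_{W_\infty^{1}}\|\bm x'-\bm x\|_2\le |f_\rho|_{W_\infty^{1}}\sqrt d\,\delta$ (using $\|\cdot\|_2\le\sqrt d\,\|\cdot\|_\infty$ and $\alpha\ge2$), while the cofactor $|f_\rho(\bm x')+f_\rho(\bm x)-2y|$ is uniformly bounded, so $\mathcal{E}^\delta(f_\rho)-\mathcal E(f_\rho)=O(\sqrt d\,\delta)$ and, after the change of measure, is at most $\bar C_1\|J_\rho\|\sqrt d\,\delta$. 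The step I expect to be the main obstacle is the double-noise argument: one must verify that the adversary can steer the network to the noisy training value on a set of controlled $\rho_X$-mass, and that the memorized training noise contributes an excess variance that survives the subtraction of the Bayes error. This is exactly the mechanism that makes robust overfitting harmful, and it is what forces the $(4\delta)^d n$ scaling here rather than the $(2\delta)^d n$ scaling governing the standard error.
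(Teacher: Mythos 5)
Your proposal is correct and follows essentially the same route as the paper's proof: forcing exact interpolation of the noisy labels on the $\delta$-balls (via the zero-minimum value guaranteed by the construction of \Cref{theorem2}), steering the adversary into $B_{\delta,\infty}(\bm x)\cap B_{\delta,\infty}(\bm x_i)$ for test points in the disjoint $2\delta$-balls, extracting the memorized-noise excess $\sigma^2$ per unit $\rho_X$-mass with the change of measure from \Cref{assumption3}, and then bounding $\mathcal{E}^\delta(f_\rho^\delta)-\mathcal{E}(f_\rho)\le\mathcal{E}^\delta(f_\rho)-\mathcal{E}(f_\rho)=O(\sqrt{d}\,\delta)$ via the Lipschitz bound on $f_\rho$. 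The only cosmetic difference is bookkeeping: you subtract the global noise floor $\mathcal{E}(f_\rho)=\sigma^2$ after splitting into balls plus complement, while the paper subtracts $(f_\rho(\bm x)-y)^2$ pointwise and restricts the resulting difference to the balls, which are trivially equivalent rearrangements.
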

	
	The minus term in the first line of the lower bound is an unchanged term, which only depends on the intrinsic property of the data distribution. Moreover, since $C_0$ can be arbitrarily small in \Cref{theorem2}, the robust generalization error bound of the good adversarial training global minima shown in \Cref{regressionadv} matches this lower bound when $n^{-{\frac{1}{d-1}}} \leq \delta< \frac{q_X}{3} \leq \frac{1}{3} n^{-\frac{1}{d}}$. Such lower bound also demonstrates the impact of the 
	data quality on the adversarial training, since the variance of the noise $\sigma^2$ is included in the bound.

    \section{Conclusion} \label{conclusion}
    In this paper, we try to answer the question of whether overfitted DNNs in adversarial training can generalize in the over-parameterized setting, i.e., whether there exist adversarial training global minima that can achieve both arbitrarily small training error as well as good robust generalization performance. We study this question for both the classification tasks and the regression tasks.

    For the classification tasks, when the data distribution is well-separated, of relatively high quality, the perturbation radius is small enough, and the model complexity is large enough, we prove the existence of infinitely many adversarial training global minima that can achieve arbitrarily small training error as well as good robust generalization performance. The requirement of the model complexity can be relaxed when the regularity of the target function is larger or the data quality is higher. Our construction of such adversarial training global minima is almost optimal since its robust generalization error bound matches the order of the lower bound. We also demonstrate the existence of the robust generalization gap since the robust generalization has a larger order than the standard generalization even for these almost optimally constructed adversarial training global minima.

    For the regression tasks, we first study the question of whether the robustness-accuracy trade-off can be avoided, i.e., whether adversarial training harms the standard generalization performance. Our results indicate that there are infinitely many adversarial training estimators that can achieve zero adversarial training error as well as near-optimal rates of convergence for the standard generalization error if the perturbation radius is small enough, with only linear over-parameterization. Furthermore, we also study the robust performance, where we also demonstrate infinitely many adversarial training global minima with good robust generalization, which matches the lower bound when the perturbation radius is not that small.
    
    \section*{Acknowledgments}
    
    The research leading to these results received funding from the European Research Council under the European Union’s Horizon 2020 research and innovation program/ERC Advanced Grant E-DUALITY (787960). This article reflects only the authors’ views, and the EU is not liable for any use that may be made of the contained information; Flemish government (AI Research Program); Leuven.AI Institute. Fanghui is supported by UK-Italy Trustworthy AI Visiting Researcher Programme.
	
	\appendix 
	
	\section* {Appendix}
	
	\section{Proof of Main Results in \Cref{sectionclass}}
	
	\subsection{Proof of \Cref{theorem3}} \label{appendixA1}
	The proof of \Cref{theorem3} follows the teacher-student network scheme to construct the adversarial training global minima that both interpolates the training samples within the adversarial perturbations and achieves good standard generalization performance. The main proof techniques are the localized approximation \cite{chui1994neural,chui2020realization} and the product-gate property of deep ReLU FNNs \cite{Yarotsky2017}.
	%\sgra{a specific type of spline?}
	
	We first introduce the localized approximation approach. Let $\theta, a,b \in \mathbb{R}$ with $a<b$, denote the trapezoid-shaped function $T_{\theta,a,b}$ on $\RR$ with a parameter $0<\theta \leq 1$ as
	\begin{equation}
		T_{a,b,\theta}(t):= \frac{1}{\theta} \left\{ \sigma(t-a+\theta)-\sigma(t-a)-\sigma(t-b)+\sigma(t-b-\theta) \right\}, \quad t\in \RR\,.
	\end{equation}
	For $\bm x=\left(x_1,\dots,x_d\right)\in \RR^d$, denote
	\begin{equation}
		\Gamma_{a,b,\theta}(\bm x):=\sigma\left( \sum_{k=1}^d T_{a,b,\theta}\left(x_k\right)-(d-1) \right)\,,
	\end{equation}
	it is in fact a two-layer ReLU net with the hidden width $4d$.
	It can be easily shown that $0\leq \Gamma_{a,b,\theta}(\bm x)\leq 1$ for all $\bm x\in \mathbb{I}^d$ and
	\begin{equation} \label{Gammafun}
		\Gamma_{a,b,\theta}(\bm x)= \left\{\begin{aligned}
			0, \quad &\hbox{if}\ \bm x \notin[a-\theta,b+\theta]^d, \\
			1, \quad &\hbox{if} \ \bm x \in [a,b]^d.
		\end{aligned}\right.
	\end{equation}
	
	Moreover, in the following of the paper, for any $\bm x \in \RR^d$ and $a \in \RR$, we denote 
	\begin{equation}
		[\bm x- a, \bm x +a]^d:= \bm x + [-a,a]^d.
	\end{equation}
	
	The following lemma indicates the product-gate property of the deep ReLU FNNs which can be found in \cite{Yarotsky2017}.
	\begin{lemma} \label{lemma2}
		For any $\epsilon \in (0,1)$, there exists a deep ReLU FNN $\tilde \times_\epsilon : \RR^2 \to \RR$ with depth and free parameters $\mathcal{O}\left( \log \frac{1}{\epsilon} \right)$ such that 
		\begin{equation}
			\left| \tilde \times_\epsilon (x_1,x_2)- x_1 x_2 \right| \leq \epsilon, \quad \forall x_1,x_2 \in [-1,1].
		\end{equation}
		Moreover, $ \tilde \times_\epsilon (x_1,x_2)=0$ if $x_1=0$ or $x_2=0$.
	\end{lemma}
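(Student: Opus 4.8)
The plan is to follow the classical sawtooth construction of Yarotsky: first build a ReLU network that approximates the univariate square map $t\mapsto t^2$ to accuracy comparable to $\epsilon$ using only $\mathcal{O}(\log\frac{1}{\epsilon})$ layers and parameters, and then recover the bilinear product through the polarization identity, arranging the construction so that the exact vanishing property is automatic. First I would introduce the tent map $g(t)=2t$ on $[0,\frac12]$ and $g(t)=2(1-t)$ on $[\frac12,1]$, which is realized exactly by a one-hidden-layer ReLU net (three shifted $\sigma$'s). Its $s$-fold self-composition $g^{\circ s}$ is a sawtooth with $2^{s-1}$ teeth, and the central identity is that the piecewise-linear interpolant of $t^2$ at the dyadic grid $\{k2^{-m}\}$ equals
\begin{equation}
  f_m(t):= t-\sum_{s=1}^{m} 4^{-s}\, g^{\circ s}(t),
\end{equation}
with the error bound $\sup_{t\in[0,1]}\lvert f_m(t)-t^2\rvert \le 2^{-2m-2}$. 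Because the compositions $g^{\circ 1},g^{\circ 2},\dots$ can be computed layer-by-layer while a running accumulator carries the partial sum, $f_m$ is realized by a network of depth $\mathcal{O}(m)$ and $\mathcal{O}(m)$ total parameters.

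Next I would make the square approximation exactly \emph{even}, since this is what forces the vanishing-at-zero property. Using $\lvert t\rvert=\sigma(t)+\sigma(-t)$ (representable exactly), I set $\widetilde{sq}(t):=f_m(\lvert t\rvert)$, so that $\widetilde{sq}$ is even, satisfies $\widetilde{sq}(0)=f_m(0)=0$, and obeys $\lvert\widetilde{sq}(t)-t^2\rvert\le 2^{-2m-2}$ on all of $[-1,1]$. I would then invoke the rescaled polarization identity
\begin{equation}
  x_1 x_2=\Big(\tfrac{x_1+x_2}{2}\Big)^2-\Big(\tfrac{x_1-x_2}{2}\Big)^2,
\end{equation}
noting that $\tfrac{x_1\pm x_2}{2}\in[-1,1]$ whenever $x_1,x_2\in[-1,1]$, so no enlargement of the approximation domain is needed. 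Defining
\begin{equation}
  \tilde\times_\epsilon(x_1,x_2):=\widetilde{sq}\Big(\tfrac{x_1+x_2}{2}\Big)-\widetilde{sq}\Big(\tfrac{x_1-x_2}{2}\Big),
\end{equation}
the triangle inequality gives $\lvert\tilde\times_\epsilon(x_1,x_2)-x_1x_2\rvert\le 2\cdot 2^{-2m-2}=2^{-2m-1}$. Choosing $m=\mathcal{O}(\log\frac{1}{\epsilon})$ makes this at most $\epsilon$, and since $\widetilde{sq}$ costs depth and parameters $\mathcal{O}(m)$, so does $\tilde\times_\epsilon$. The vanishing property then follows from evenness: if $x_1=0$ the two arguments are $x_2/2$ and $-x_2/2$, whence $\widetilde{sq}(x_2/2)-\widetilde{sq}(-x_2/2)=0$; if $x_2=0$ both arguments equal $x_1/2$ and the difference is again $0$.

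\textbf{Main obstacle.} The genuinely nontrivial content is the sawtooth identity for $f_m$ together with the geometric error decay $2^{-2m-2}$, which is what converts a \emph{linear}-in-$m$ depth budget into an \emph{exponentially} small approximation error and thereby yields the $\mathcal{O}(\log\frac{1}{\epsilon})$ size; I would verify this by induction on $m$, showing each new term halves the mesh of the interpolation grid. The secondary subtlety is the exact zero condition, which ordinary polarization does not give for free — the resolution is precisely the evenness trick above, routing both arguments through $\lvert\cdot\rvert$ before the shared square sub-network, so that the symmetry is built in rather than merely approximate. Everything else (laying out the compositions with shared computation to keep width and parameter count $\mathcal{O}(m)$) is routine bookkeeping.
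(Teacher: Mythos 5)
Your proposal is correct, and it is essentially the same argument as the source the paper relies on: the paper does not prove \Cref{lemma2} itself but cites it from Yarotsky (2017), whose proof is exactly this sawtooth approximation of $t\mapsto t^2$ with error $2^{-2m-2}$ at depth $\mathcal{O}(m)$, followed by polarization, with the exact vanishing at $x_1=0$ or $x_2=0$ secured by routing arguments through $\lvert\cdot\rvert$ and $\widetilde{sq}(0)=0$. The only cosmetic difference is that you use the two-term identity $x_1x_2=\bigl(\tfrac{x_1+x_2}{2}\bigr)^2-\bigl(\tfrac{x_1-x_2}{2}\bigr)^2$ together with evenness, whereas Yarotsky uses $x_1x_2=\tfrac{1}{2}\bigl((x_1+x_2)^2-x_1^2-x_2^2\bigr)$; both yield the same size bounds and the same exact-zero property.
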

	
	The following lemma from \cite[Theorem 1]{Yarotsky2017} describes approximation rates of deep ReLU FNNs for Sobolev functions with respect to $L_\infty$ norms.
	\begin{lemma} \label{lemma4}\cite[Theorem 1]{Yarotsky2017}
		Let $\alpha \in \NN$. Suppose that $f \in W^\alpha_\infty([0,1]^d)$ with $\|f\|_{W^\alpha_\infty([0,1]^d)} \leq 1$. Then there exists a deep ReLU FNN $\widehat f$ with depth $L=  \mathcal{O}\left( \log \frac{1}{\epsilon}\right) $, width $d_1, \dots, d_L=   \mathcal{O}\left( \epsilon^{-\frac{d}{\alpha}} \log \frac{1}{\epsilon}\right) $,  and non-zero free parameters $ \mathcal{O}\left( \epsilon^{-\frac{d}{\alpha}} \log \frac{1}{\epsilon}\right)$, such that
		\begin{equation}
			\left\| \widehat f - f \right\|_{L_\infty([0,1]^d)} \leq \epsilon\,.
		\end{equation}
	\end{lemma}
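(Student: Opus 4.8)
The plan is to prove this Yarotsky-type bound by combining a ReLU \emph{partition of unity} with local \emph{Taylor polynomials}, assembling the pieces through the product-gate of \Cref{lemma2}. Fix an integer $N$ to be chosen later and place a uniform grid $\{\bm m/N : \bm m \in \{0,1,\dots,N\}^d\}$ on $[0,1]^d$. First I would build, from the one-dimensional hat $\psi(t)=\sigma(t+1)-2\sigma(t)+\sigma(t-1)$ (exactly representable by a two-layer ReLU net, with peak $1$ at $t=0$ and support $[-1,1]$), the shifted copies $\psi_{m_k}(x_k)=\psi(Nx_k-m_k)$ and their $d$-fold product $\phi_{\bm m}(\bm x)=\prod_{k=1}^d \psi_{m_k}(x_k)$. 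These satisfy $\sum_{\bm m}\phi_{\bm m}\equiv 1$ on $[0,1]^d$, each $\phi_{\bm m}$ is supported in the cube of side $2/N$ centred at $\bm m/N$, and at any fixed point at most $2^d$ of them are nonzero.

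Next I would invoke Taylor's theorem. Since $\alpha\in\NN$ we may write $\alpha=(\alpha-1)+1$, so $f\in W^\alpha_\infty$ with $\|f\|_{W^\alpha_\infty}\le 1$ is $(\alpha-1)$-times differentiable with Lipschitz top-order derivatives; hence on each support cube the degree-$(\alpha-1)$ Taylor polynomial $P_{\bm m}$ of $f$ centred at $\bm m/N$ obeys $|f(\bm x)-P_{\bm m}(\bm x)|\le C N^{-\alpha}$ there, with $C$ depending only on $\alpha,d$ and with uniformly bounded coefficients. The target approximant is
\begin{equation*}
\tilde f(\bm x)=\sum_{\bm m}\phi_{\bm m}(\bm x)\,P_{\bm m}(\bm x),
\end{equation*}
and the partition-of-unity identity yields the pointwise bound $|f(\bm x)-\tilde f(\bm x)|=\bigl|\sum_{\bm m}\phi_{\bm m}(\bm x)\,(f(\bm x)-P_{\bm m}(\bm x))\bigr|\le C N^{-\alpha}$, because only the (at most $2^d$) cubes containing $\bm x$ contribute and each contributes error at most $C N^{-\alpha}$.

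It then remains to realise $\tilde f$ by a ReLU network while tracking errors and parameters. Each monomial of degree $\le\alpha-1$ in $P_{\bm m}$, multiplied by the factor $\phi_{\bm m}$, is a product of at most $(\alpha-1)+d$ quantities bounded in $[-1,1]$, which I would compute by iterating the gate $\tilde\times_\eta$ of \Cref{lemma2}; the property that $\tilde\times_\eta(x_1,x_2)=0$ whenever a factor vanishes ensures the network's version of $\phi_{\bm m}P_{\bm m}$ is exactly zero outside the support of $\phi_{\bm m}$, so at most $2^d$ summands stay active at any point. Choosing the gate precision $\eta\sim\epsilon\,2^{-d}$ and $N\sim\epsilon^{-1/\alpha}$ gives total error $O(\epsilon)$. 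The depth is governed by the $\mathcal{O}(\log\tfrac1\eta)=\mathcal{O}(\log\tfrac1\epsilon)$ depth of a constant number (in $N$) of chained gates; the width and number of nonzero free parameters scale as the number of grid points times a per-cube cost that is constant in $N$, i.e. $\mathcal{O}(N^d\log\tfrac1\epsilon)=\mathcal{O}(\epsilon^{-d/\alpha}\log\tfrac1\epsilon)$, matching the stated bounds.

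The main obstacle is handling the two error sources at once---the Taylor remainder $N^{-\alpha}$ and the accumulated product-gate error---without inflating the depth to $\mathcal{O}(\log\tfrac{N^d}{\epsilon})$. This is resolved precisely by the locality of the partition of unity: since only $2^d$ terms are nonzero at any point, the gate error need only be pushed below $\epsilon\,2^{-d}$ rather than below $\epsilon\,N^{-d}$, so the per-gate depth stays $\mathcal{O}(\log\tfrac1\epsilon)$ for fixed $d$, and the $N^d$ summands inflate only the \emph{width}, not the depth.
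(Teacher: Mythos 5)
Your proposal is correct and is essentially the paper's proof: the paper does not prove this lemma itself but imports it from \cite[Theorem 1]{Yarotsky2017}, and your construction --- piecewise-linear partition of unity from ReLU hats, local degree-$(\alpha-1)$ Taylor polynomials with $\mathcal{O}(N^{-\alpha})$ remainders, and products assembled through the gate of \Cref{lemma2} whose zero-propagation keeps each summand exactly local --- is precisely Yarotsky's original argument. Your key observation, that locality (at most $2^d$ active summands) lets the gate precision be $\epsilon\,2^{-d}$ rather than $\epsilon\,N^{-d}$ so the depth stays $\mathcal{O}\left(\log \frac{1}{\epsilon}\right)$ while the $N^d \sim \epsilon^{-d/\alpha}$ grid cells inflate only width and parameter count, is exactly how the cited proof obtains the stated complexity bounds.
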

	
	We also need the following comparison theorem for the hinge loss studied in \cite{bartlett2006convexity,chen2004support}, which describes the relationship between the excess misclassification error and the excess $\phi$-risk.
	
	\begin{lemma} \label{lemma5}
		If $\phi$ is the hinge loss $\phi(t)=\max\{1-t,0\}$, then for any measurable function $f:X\to\RR$, there holds
		\begin{equation}
			\mathcal{R}\left( \text{sgn}(f)\right) -\mathcal{R}(f_c)\leq \mathcal{E}^{\phi}\left( \text{sgn}(f)\right)  - \mathcal{E}^{\phi}(f_c)\,.
		\end{equation}
	\end{lemma}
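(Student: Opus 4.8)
The plan is to exploit the fact that both arguments appearing in the statement, $\text{sgn}(f)$ and $f_c$, are classifiers taking values in $\{-1,+1\}$, on which the hinge loss degenerates to a multiple of the $0$-$1$ loss. This reduces the whole comparison inequality to an immediate consequence of the Bayes-optimality of $f_c$, so that no genuinely delicate estimate is required.

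First I would evaluate $\phi$ at the only two possible values of $y\,g(\bm x)$ when $g$ is $\{-1,+1\}$-valued: $\phi(1)=\max\{0,0\}=0$ and $\phi(-1)=\max\{2,0\}=2$, so that $\phi(y\,g(\bm x))=2\,\mathbbm{1}_{\{y g(\bm x)=-1\}}$ holds pointwise. Integrating against $\rho$ gives, for any classifier $g$ with range in $\{-1,+1\}$,
\begin{equation*}
\mathcal{E}^\phi(g)=\int_{\mathcal Z}\phi\big(y\,g(\bm x)\big)\,d\rho=2\int_{\mathcal Z}\mathbbm{1}_{\{y g(\bm x)=-1\}}\,d\rho=2\,\mathcal{R}(g).
\end{equation*}
Applying this identity to $g=\text{sgn}(f)$ and to $g=f_c$ (both $\{-1,+1\}$-valued) rewrites the right-hand side of the claim as $\mathcal{E}^\phi(\text{sgn}(f))-\mathcal{E}^\phi(f_c)=2\big[\mathcal{R}(\text{sgn}(f))-\mathcal{R}(f_c)\big]$.

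It then remains only to observe that the excess misclassification error is nonnegative: since $f_c$ is by definition the Bayes rule minimizing $\mathcal{R}$ over all measurable classifiers, we have $\mathcal{R}(\text{sgn}(f))\ge\mathcal{R}(f_c)$, whence $\mathcal{R}(\text{sgn}(f))-\mathcal{R}(f_c)\le 2\big[\mathcal{R}(\text{sgn}(f))-\mathcal{R}(f_c)\big]$, which is exactly the asserted inequality (with room to spare). The main point worth verifying — rather than a true obstacle — is the pointwise minimality of $f_c$, which follows from the standard decomposition $\mathcal{R}(g)=\int_{\mathcal X}\big[\eta\,\mathbbm{1}_{\{g=-1\}}+(1-\eta)\,\mathbbm{1}_{\{g=1\}}\big]\,d\rho_X$ being minimized coordinatewise by thresholding $\eta$ at $1/2$. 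Equivalently, one could simply invoke the general hinge-loss comparison theorem of \cite{bartlett2006convexity,chen2004support}, namely $\mathcal{R}(\text{sgn}(g))-\mathcal{R}(f_c)\le\mathcal{E}^\phi(g)-\mathcal{E}^\phi(f_\rho^\phi)$, together with the already-noted fact that $f_\rho^\phi=f_c$ for the hinge loss, and specialize it to $g=\text{sgn}(f)$ using $\text{sgn}(\text{sgn}(f))=\text{sgn}(f)$.
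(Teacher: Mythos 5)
Your proof is correct, but it takes a genuinely different route from the paper: the paper does not prove \Cref{lemma5} at all, it imports it as the hinge-loss comparison theorem from \cite{bartlett2006convexity,chen2004support}, i.e.\ the general inequality $\mathcal{R}(\text{sgn}(g))-\mathcal{R}(f_c)\leq \mathcal{E}^{\phi}(g)-\mathcal{E}^{\phi}(f_\rho^\phi)$ valid for arbitrary real-valued measurable $g$, combined with the fact that $f_\rho^\phi=f_c$ for the hinge loss — which is precisely the fallback you sketch in your final sentence. Your primary argument instead exploits the degeneracy of the statement as written: since the function inside the $\phi$-risk is already the $\{-1,+1\}$-valued classifier $\text{sgn}(f)$ rather than the raw score $f$, the hinge loss collapses pointwise to twice the $0$-$1$ loss, so the right-hand side equals $2\left[\mathcal{R}(\text{sgn}(f))-\mathcal{R}(f_c)\right]$ and the claimed inequality reduces to Bayes optimality of $f_c$. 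This is fully self-contained (no appeal to the convexity-based comparison machinery) and it makes transparent that the lemma in this specialized form is much weaker than the general comparison theorem; the general theorem would only be needed if the right-hand side read $\mathcal{E}^{\phi}(f)-\mathcal{E}^{\phi}(f_c)$ with the real-valued $f$. Since the paper's proof of \Cref{theorem3} invokes the lemma exactly in the specialized form (with $\text{sgn}$ inside the $\phi$-risk), your elementary proof suffices for everything downstream. One minor caveat: the identity $\phi(y\,g(\bm x))=2\,\mathbbm{1}_{\{y g(\bm x)=-1\}}$ presumes the convention $\text{sgn}(0)\in\{-1,+1\}$; if $\text{sgn}(0)=0$ were allowed, then $\phi(0)=1$ contributes an extra nonnegative term on the right-hand side, so the inequality still holds but the clean factor-of-two identity does not.
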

	
	We are now ready to prove \Cref{theorem3} based on \Cref{proposition2}, \Cref{lemma2}, \Cref{lemma4}, and \Cref{lemma5}.
	
	\begin{proof} [Proof of \Cref{theorem3}]
		Note that the target function $f_\rho=2\eta -1 \in W^\alpha_\infty(\mathcal{X})$, and $f_c= \text{sgn}(f_\rho)$. Moreover, by~\Cref{lemma4}, there exists a deep ReLU FNN $\hat f_\theta$ with depth $L=   \mathcal{O}\left( \log \frac{1}{\theta}\right) $, width $d_1, \dots, d_L=   \mathcal{O}\left( \theta^{-\frac{d}{\alpha}} \log \frac{1}{\theta}\right) $,  and non-zero free parameters $  \mathcal{O}\left( \theta^{-\frac{d}{\alpha}} \log \frac{1}{\theta}\right)$, such that
		\begin{equation} \label{errorA32}
			\left\| \hat f_\theta- f_\rho \right\|_{L_\infty(\mathcal{X})} \leq \theta\,.
		\end{equation}
		Denote $c_5= \left\| \hat f_\theta \right\|_{L^\infty(\mathcal{X})} \leq \left\| \hat f_\theta- f_\rho \right\|_{L_\infty(\mathcal{X})} + \left\| \hat f_\rho \right\|_{L_\infty(\mathcal{X})}$, we have $c_5 \leq 2$ since $|f_\rho| \leq 1$. We use $\hat f_\theta$ as the teacher network, and construct the student network $f_{D,\theta,\delta,\tau,\epsilon}^{\phi}$ which is the adversarial training global minimum
		\begin{equation} \label{fDLS}
			f_{D,\theta,\delta,\tau,\epsilon}^{\phi}(\bm x):= \sum_{i=1}^n y_i  \Gamma_{\bm x_i-\delta,\bm x_i+\delta,\tau}(\bm x ) + c_5 \tilde \times_\epsilon \left(\frac{\hat f_\theta(\bm x)}{c_5}, 1- \sum_{i=1}^n  \Gamma_{\bm x_i-\delta,\bm x_i+\delta,\tau}(\bm x ) \right).
		\end{equation}
		When $\bm x\in [\bm x_i-\delta,\bm x_i+\delta]^d$, i.e., $\|\bm x-\bm x_i\|_\infty \leq \delta$, we have
		$\Gamma_{\bm x_i-\delta,\bm x_i+\delta,\tau}(\bm x)= 1$. Moreover, choosing $\tau \leq C_0 \delta< \frac{C_0 q_X}{3}$, we further have $\Gamma_{\bm x_j-\delta,\bm x_j+\delta,\tau}(\bm x)= 0$ for all $j \neq i$. Thereby, $1- \sum_{i=1}^n  \Gamma_{\bm x_i-\delta,\bm x_i+\delta,\tau}(\bm x ) = 0$, we get
		$\tilde \times_\epsilon \left(\frac{\hat f_\theta(\bm x)}{c_5}, 1- \sum_{i=1}^n  \Gamma_{\bm x_i-\delta,\bm x_i+\delta,\tau}(\bm x ) \right) =0$ by \Cref{lemma2}. Therefore,
		\begin{equation} \label{fDLSproperty}
			f_{D,\theta,\delta,\tau,\epsilon}^{\phi} (\bm x) = y_i, \quad \hbox{when} \ \bm x\in [\bm x_i-\delta,\bm x_i+\delta]^d\,.
		\end{equation}
		This suggests that $\widehat{\mathcal{E}_D^{\phi,\delta}} \left( f_{D,\theta,\delta,\tau,\epsilon}^{\phi}\right) =\frac{1}{n} \sum_{i=1}^n  \max_{\bm x'_i \in B_{\delta,\infty}(\bm x_i)}\phi \left( y_i f_{D,\theta,\delta,\tau,\epsilon}^{\phi}(\bm x'_i)\right) =0$, thus $f_{D,\theta,\delta,\tau,\epsilon}^{\phi}$ is indeed the global minimum of adversarial training with the surrogate loss $\phi$. Moreover, $f_{D,\theta,\delta,\tau,\epsilon}^{\phi}$ is a deep ReLU FNN with depth $L= \mathcal{O} \left(  \log \frac{1}{\theta} + \log \frac{1}{\epsilon}\right) $, width $d_1=  \mathcal{O} \left( \theta^{-\frac{d}{\alpha}} \log \frac{1}{\theta}+ n + \log \frac{1}{\epsilon}\right) $, $d_2, \dots, d_L= \mathcal{O} \left( \theta^{-\frac{d}{\alpha}} \log \frac{1}{\theta} + \log \frac{1}{\epsilon}\right) $, and non-zero free parameters $ \mathcal{O}\left( \theta^{-\frac{d}{\alpha}} \log \frac{1}{\theta}+  n + \log \frac{1}{\epsilon} \right)$.
		
		We then bound the excess adversarial misclassification error of this adversarial training classifier.
		By \Cref{proposition2}, we only need to bound two error terms: $\mathcal{R}^\delta \left( \text{sgn}\left(f_{D,\theta,\delta,\tau,\epsilon}^{\phi} \right) \right)  - \mathcal{R} \left( \text{sgn}\left(f_{D,\theta,\delta,\tau,\epsilon}^{\phi} \right) \right)$ and  $\mathcal{R} \left( \text{sgn}\left(f_{D,\theta,\delta,\tau,\epsilon}^{\phi} \right) \right)- \mathcal{R} (f_c)$. 
		
		We first consider the second error term.
		By \Cref{lemma5}, we have 
		\begin{equation*}
			\mathcal{R}\left( \text{sgn}\left( f_{D,\theta,\delta,\tau,\epsilon}^{\phi}\right) \right) -\mathcal{R}(f_c)\leq \mathcal{E}^{\phi}\left( \text{sgn} \left( f_{D,\theta,\delta,\tau,\epsilon}^{\phi}\right)\right)  - \mathcal{E}^{\phi}\left( f_c\right)\,.
		\end{equation*}
		To bound $\mathcal{E}^{\phi}\left( \text{sgn} \left( f_{D,\theta,\delta,\tau,\epsilon}^{\phi}\right)\right)  - \mathcal{E}^{\phi}\left( f_c\right)$, because of $\text{sgn} \left( f_{D,\theta,\delta,\tau,\epsilon}^{\phi}\right) \in [-1,1]$, we have
		\begin{equation*}
			\phi\left( y \cdot \text{sgn} \left( f_{D,\theta,\delta,\tau,\epsilon}^{\phi}\right)(\bm x)\right) - \phi\left(y f_c(\bm x) \right) = y\left(f_c(\bm x)- \text{sgn} \left( f_{D,\theta,\delta,\tau,\epsilon}^{\phi}\right)(\bm x) \right).
		\end{equation*}
		It follows that
		\begin{equation*}
			\mathcal{E}^{\phi}\left( \text{sgn} \left( f_{D,\theta,\delta,\tau,\epsilon}^{\phi}\right)\right)  - \mathcal{E}^{\phi}\left( f_c\right) = \int_{\mathcal{X}} \left(f_c(\bm x)- \text{sgn} \left( f_{D,\theta,\delta,\tau,\epsilon}^{\phi}\right)(\bm x) \right) f_\rho(\bm x) \mathrm{d}\rho_X.
		\end{equation*}
		Denote
		\begin{equation*} 
			f_{D,\theta,\delta,\tau}^{\phi}(\bm x):= \sum_{i=1}^n y_i  \Gamma_{\bm x_i-\delta,\bm x_i+\delta,\tau}(\bm x ) + \hat f_\theta(\bm x) \left(  1- \sum_{i=1}^n  \Gamma_{\bm x_i-\delta,\bm x_i+\delta,\tau}(\bm x ) \right).
		\end{equation*}
		Since $\delta< \frac{q_X}{3}$, we have $1- \sum_{i=1}^n  \Gamma_{\bm x_i-\delta,\bm x_i+\delta,\tau}(\bm x ) \in [0,1]$, then by \Cref{lemma2}, we have
		\begin{equation*} 
			\left\|  f_{D,\theta,\delta,\tau,\epsilon}^{\phi}- f_{D,\theta,\delta,\tau}^{\phi}  \right\|_{L^\infty(\mathcal{X})} \leq c_5 \epsilon\,.
		\end{equation*}
		Notice that when $\bm x\in \mathcal{X} \backslash \left( \cup_{i\in \{1,\dots,n\}} [\bm x_i-\delta-\tau,\bm x_i+\delta+\tau]^d \right) $,  $\Gamma_{\bm x_i-\delta,\bm x_i+\delta,\tau}(\bm x)=0$ for all $i$, we have $f_{D,\theta,\delta,\tau}^{\phi}(\bm x)=  \hat f_\theta(\bm x)$. It follows from \eqref{errorA32} that
		\begin{equation} \label{samesign}
			\left|f_{D,\theta,\delta,\tau,\epsilon}^{\phi} (\bm x)- f_\rho(\bm x)   \right| \leq	\left|f_{D,\theta,\delta,\tau,\epsilon}^{\phi} (\bm x)- f_{D,\theta,\delta,\tau}^{\phi}(\bm x)   \right| + \left|\hat f_\theta (\bm x)- f_\rho(\bm x)   \right| \leq c_5 \epsilon + \theta \,.
		\end{equation}
		By choosing $\epsilon= \theta= \frac{2}{3} \zeta$, we have $\left|f_{D,\theta,\delta,\tau,\epsilon}^{\phi} (\bm x)- f_\rho(\bm x)   \right| \leq 2 \zeta$. Moreover, by \eqref{uncertaintyass} in \Cref{assumption1}, we have $|f_\rho|= |2\eta -1| > 2\zeta$. Therefore, $f_{D,\theta,\delta,\tau,\epsilon}^{\phi} (\bm x)$ has the same sign with $f_\rho(\bm x)$, i.e., $ \text{sgn} \left( f_{D,\theta,\delta,\tau,\epsilon}^{\phi}\right) (\bm x)= f_c(\bm x)$, it follows that
		\begin{equation*}
			\int_{\mathcal{X} \backslash \left( \cup_{i\in \{1,\dots,n\}} [\bm x_i-\delta-\tau,\bm x_i+\delta+\tau]^d \right)} \left(f_c(\bm x)- \text{sgn} \left( f_{D,\theta,\delta,\tau,\epsilon}^{\phi}\right)(\bm x) \right) f_\rho(\bm x) \mathrm{d}\rho_X= 0\,.
		\end{equation*}
		Furthermore, due to $\tau \leq C_0 \delta$ and $|f_\rho|\leq 1$, we have
		\begin{equation*}
			\begin{aligned}
				& \sum_{i=1}^n \int_{ [\bm x_i-\delta-\tau,\bm x_i+\delta+\tau]^d} \left(f_c(\bm x)- \text{sgn} \left( f_{D,\theta,\delta,\tau,\epsilon}^{\phi}\right)(\bm x) \right) f_\rho(\bm x) \mathrm{d}\rho_X \\
				\leq & \sum_{i=1}^n \|J_\rho\| \int_{ [\bm x_i-\delta-\tau,\bm x_i+\delta+\tau]^d} \left(f_c(\bm x)- \text{sgn} \left( f_{D,\theta,\delta,\tau,\epsilon}^{\phi}\right)(\bm x) \right) f_\rho(\bm x) \mathrm{d} \bm x \\
				\leq & 2 \|J_\rho\|  \left( (2+2C_0) \delta\right) ^d n\,.
			\end{aligned}
		\end{equation*}
		Combining these two terms, we get
		\begin{equation*}
			\mathcal{E}^{\phi}\left( \text{sgn} \left( f_{D,\theta,\delta,\tau,\epsilon}^{\phi}\right)\right)  - \mathcal{E}^{\phi}\left( f_c\right) \leq 2 \|J_\rho\|  \left( (2+2C_0) \delta\right) ^d n\,.
		\end{equation*}
		Therefore, we finally have
		\begin{equation} \label{standardupperbound}
			\mathcal{R}\left( \text{sgn}\left( f_{D,\theta,\delta,\tau,\epsilon}^{\phi}\right) \right) -\mathcal{R}(f_c)\leq 2 \|J_\rho\|  \left( (2+2C_0) \delta\right) ^d n\,.
		\end{equation}
		
		Next, we consider the first error term. Notice that
		\begin{equation*}
			\begin{aligned}
				& \mathcal{R}^\delta  \left(\text{sgn}\left( f_{D,\theta,\delta,\tau,\epsilon}^{\phi}\right) \right)  - \mathcal{R} \left( \text{sgn}\left( f_{D,\theta,\delta,\tau,\epsilon}^{\phi}\right) \right) \\
				= & \int_{\mathcal{Z}} \max_{\bm x' \in B_{\delta,\infty}(\bm x)} \mathbbm{1}_{\left\{y \cdot \text{sgn}\left( f_{D,\theta,\delta,\tau,\epsilon}^{\phi}\right) (\bm x') =-1\right\}} - \mathbbm{1}_{\left\{y \cdot \text{sgn}\left( f_{D,\theta,\delta,\tau,\epsilon}^{\phi}\right) (\bm x) =-1\right\}} \mathrm{d}\rho \\
				\leq & \|J_\rho\| \int_{\mathcal{X}} \int_{Y} \max_{\bm x' \in B_{\delta,\infty}(\bm x)} \mathbbm{1}_{\left\{y \cdot \text{sgn}\left( f_{D,\theta,\delta,\tau,\epsilon}^{\phi}\right) (\bm x') =-1\right\}} \\
				& - \mathbbm{1}_{\left\{y \cdot \text{sgn}\left( f_{D,\theta,\delta,\tau,\epsilon}^{\phi}\right) (\bm x) =-1\right\}} \mathrm{d}\rho(y|\bm x) \mathrm{d} \bm x\,.
			\end{aligned}
		\end{equation*}
		To bound this term, we divide $\mathcal{X}$ to two disjoint parts: $\cup_{i\in \{1,\dots,n\}} [\bm x_i-2\delta-\tau,\bm x_i+2\delta+\tau]^d$ and $\mathcal{X} \backslash \cup_{i\in \{1,\dots,n\}} [\bm x_i-2\delta-\tau,\bm x_i+2\delta+\tau]^d$. We first consider the second part, as shown before, for any $\bm x \in \mathcal{X} \backslash \cup_{i\in \{1,\dots,n\}} [\bm x_i-\delta-\tau,\bm x_i+\delta+\tau]^d$ $f_{D,\theta,\delta,\tau,\epsilon}^{\phi} (\bm x)$ has the same sign with $f_\rho(\bm x)$, i.e., $ \text{sgn} \left( f_{D,\theta,\delta,\tau,\epsilon}^{\phi}\right) (\bm x)= f_c(\bm x)$. Therefore, for  any $\bm x \in \mathcal{X} \backslash \cup_{i\in \{1,\dots,n\}} [\bm x_i-2\delta-\tau,\bm x_i+2\delta+\tau]^d$, and any $\bm x' \in B_{\delta,\infty}(\bm x)$, we get $\text{sgn}\left(f_{D,\theta,\delta,\tau,\epsilon}^{\phi} \right) (\bm x')=   f_c (\bm x')$. Furthermore, according to the separated data assumption \eqref{separatedass} in \Cref{assumption2}, $f_c$ will not change the sign in each $L_\infty$ ball with radius $\delta$, i.e., for any $\bm x' \in B_{\delta,\infty}(\bm x)$, $f_c(\bm x')= f_c(\bm x)$. Thus for any $\bm x' \in B_{\delta,\infty}(\bm x)$, $\text{sgn}\left(f_{D,\theta,\delta,\tau,\epsilon}^{\phi} \right) (\bm x')= \text{sgn}\left(f_{D,\theta,\delta,\tau,\epsilon}^{\phi} \right) (\bm x)$. This indicates that
		\begin{equation*}
			\begin{aligned}
				& \int_{\mathcal{X} \backslash \cup_{i\in \{1,\dots,n\}} [\bm x_i-2\delta-\tau,\bm x_i+2\delta+\tau]^d}  \int_{Y} \max_{\bm x' \in B_{\delta,\infty}(\bm x)} \mathbbm{1}_{\left\{y \cdot \text{sgn}\left( f_{D,\theta,\delta,\tau,\epsilon}^{\phi}\right) (\bm x') =-1\right\}} - \\
				& \mathbbm{1}_{\left\{y \cdot \text{sgn}\left( f_{D,\theta,\delta,\tau,\epsilon}^{\phi}\right) (\bm x) =-1\right\}}  \mathrm{d}\rho(y|\bm x)  \mathrm{d}\bm x =0.
			\end{aligned}
		\end{equation*}
		As for the first part, we have
		\begin{equation*}
			\begin{aligned}
				& \sum_{i=1}^n \int_{ [\bm x_i-2\delta-\tau,\bm x_i+2\delta+\tau]^d}    \int_{Y} \max_{\bm x' \in B_{\delta,\infty}(\bm x)} \mathbbm{1}_{\left\{y \cdot \text{sgn}\left( f_{D,\theta,\delta,\tau,\epsilon}^{\phi}\right) (\bm x') =-1\right\}} - \\
				& \mathbbm{1}_{\left\{y \cdot \text{sgn}\left( f_{D,\theta,\delta,\tau,\epsilon}^{\phi}\right) (\bm x) =-1\right\}}  \mathrm{d}\rho(y|\bm x)  \mathrm{d}\bm x \leq  \left( (4+2C_0) \delta\right) ^d n\,.
			\end{aligned}
		\end{equation*}
		Combining these two terms, we get
		\begin{equation}
			\mathcal{R}^\delta  \left(\text{sgn}\left( f_{D,\theta,\delta,\tau,\epsilon}^{\phi}\right) \right)  - \mathcal{R} \left( \text{sgn}\left( f_{D,\theta,\delta,\tau,\epsilon}^{\phi}\right) \right)  \leq \|J_\rho\|    \left( (4+2C_0) \delta\right) ^d n\,.
		\end{equation}
		
		Finally, by \Cref{proposition2}, we get
		\begin{equation*}
			\begin{aligned}
				& \mathcal{R}^\delta \left( \text{sgn}\left(f_{D,\theta,\delta,\tau,\epsilon}^{\phi}\right) \right)  - \mathcal{R}^\delta (f_c^\delta) \\ 
				\leq & \mathcal{R}^\delta \left( \text{sgn}\left(f_{D,\theta,\delta,\tau,\epsilon}^{\phi} \right) \right)  - \mathcal{R} \left( \text{sgn}\left(f_{D,\theta,\delta,\tau,\epsilon}^{\phi} \right) \right) + \mathcal{R} \left( \text{sgn}\left(f_{D,\theta,\delta,\tau,\epsilon}^{\phi} \right) \right)- \mathcal{R} (f_c) \\
				\leq & 2 \|J_\rho\|  \left( (2+2C_0) \delta\right) ^d n + \|J_\rho\|   \left( (4+2C_0) \delta\right) ^d n\,.
			\end{aligned}
		\end{equation*}
		Furthermore, since we choose $\epsilon= \theta= \frac{2}{3} \zeta$, $f_{D,\theta,\delta,\tau,\epsilon}^{student}$ is a deep ReLU FNN with depth $L= \mathcal{O} \left(  \log \frac{1}{\zeta} \right) $, width $d_1=  \mathcal{O} \left(  \zeta ^{-\frac{d}{\alpha}} \log \frac{1}{\zeta}+ n \right) $, $d_2, \dots, d_L= \mathcal{O} \left( \zeta ^{-\frac{d}{\alpha}} \log \frac{1}{\zeta} \right) $, and non-zero free parameters $ \mathcal{O} \left( \zeta ^{-\frac{d}{\alpha}} \log \frac{1}{\zeta}+ n \right) $. Furthermore, we have the adversarial misclassification error bound
		\begin{equation}
			\mathbb{E} \left[ \mathcal{R}^\delta \left( \text{sgn}\left(f_{D,\theta,\delta,\tau,\epsilon}^{\phi}\right) \right)  - \mathcal{R}^\delta (f_c^\delta) \right]  \leq 3 \|J_\rho\|  \left( (4+2C_0) \delta\right) ^d n\,.
		\end{equation}
		Moreover, since $\tau \leq C_0 \delta$ can be arbitrarily chosen, we conclude that there are infinitely many global minima $f_{D,\theta,\delta,\tau,\epsilon}^{\phi} \in \tilde \Delta_{\delta,\vec d, L}$ that can achieve such adversarial misclassification error bound. 
		Thus we complete the proof.
	\end{proof}

 	\subsection{Proof of \Cref{lowerbound2}}
	\begin{proof} [Proof of \Cref{lowerbound2}]
		Notice that according to the separated data assumption \eqref{separatedass} in \Cref{assumption2}, we in fact have $ \mathcal{R}^\delta (f_c) =  \mathcal{R} (f_c)$, this further indicates that $f_c^\delta= f_c$. Moreover, by \eqref{uncertaintyass} in \Cref{assumption1} that $|\eta(\bm x)-0.5|>\zeta$, we have $\max \{\eta(\bm x),1-\eta(\bm x) \} \geq \min \{\eta(\bm x),1-\eta(\bm x) \} + 2\zeta$. Therefore, for any adversarial training global minimum $ \widehat f_{D}^{over} \in \tilde \Delta_{\delta,\vec d, L}$ with non-zero free parameters $ \mathcal{O} \left( \zeta ^{-\frac{d}{\alpha}} \log \frac{1}{\zeta}+ n \right) $, since it can achieve zero adversarial training error as is constructed in \Cref{appendixA1}, we have $\widehat f_{D}^{over}  (\bm x) = y_i$, when $\bm x \in B_{\delta,\infty}(\bm x_i)$. Therefore,
		\begin{equation*}
			\begin{aligned}
				&   \mathbb E \left[ \int_{\bm x \in [\bm x_i-2\delta,\bm x_i+2\delta]^d} \int_Y \max_{\bm x' \in B_{\delta,\infty}(\bm x)} \mathbbm{1}_{\left\{y \cdot \text{sgn}\left( \widehat f_{D}^{over}\right)  (\bm x')  =-1\right\}} \mathrm{d}\rho \right]  \\
				\geq &  \mathbb E \left[ \int_{\bm x \in [\bm x_i-2\delta,\bm x_i+2\delta]^d} \int_Y \mathbbm{1}_{\left\{y y_i =-1\right\}} \mathrm{d}\rho \right]  \\
				= & \int_{\bm x_i}  \int_{\bm x \in [\bm x_i-2\delta,\bm x_i+2\delta]^d} \eta(\bm x)(1-\eta(\bm x_i)) + (1-\eta(\bm x))\eta(\bm x_i) \mathrm{d}\rho_X \mathrm{d}\rho_X \\
				= & \int_{\bm x_i}  \int_{\bm x \in [\bm x_i-2\delta,\bm x_i+2\delta]^d} \min \{\eta(\bm x),1-\eta(\bm x) \} (1-\eta(\bm x_i))  + \min \{\eta(\bm x),1-\eta(\bm x) \} \eta(\bm x_i) \\
				& + \left(  \max \{\eta(\bm x),1-\eta(\bm x) \}-  \min \{\eta(\bm x),1-\eta(\bm x) \}\right)   \min \{\eta(\bm x_i),1-\eta(\bm x_i) \} \mathrm{d}\rho_X \mathrm{d}\rho_X \\
				\geq & \mathbb{E}\left[  \int_{\bm x \in [\bm x_i-2\delta,\bm x_i+2\delta]^d} \min \{\eta(\bm x),1-\eta(\bm x) \} \mathrm{d}\rho_X \right]  \\
				+ & 2\zeta \int_{\bm x_i}  \int_{\bm x \in [\bm x_i-2\delta,\bm x_i+2\delta]^d} \min \{\eta(\bm x_i),1-\eta(\bm x_i) \} \mathrm{d}\rho_X \mathrm{d}\rho_X \\
				= &  \mathbb{E}\left[ \int_{\bm x \in [\bm x_i-2\delta,\bm x_i+2\delta]^d} \int_Y \mathbbm{1}_{\left\{y \cdot f_c(\bm x)=-1\right\}} \mathrm{d}\rho \right]  \\
				+ & 2\zeta \mathbb{E} \left[  \int_{\bm x \in [\bm x_i-2\delta,\bm x_i+2\delta]^d} \min \{\eta(\bm x_i),1-\eta(\bm x_i) \} \mathrm{d}\rho_X \right] \,,
			\end{aligned}
		\end{equation*}
		where the second equality is because $\eta(\bm x)-0.5$ has the same sign with $\eta(\bm x_i)-0.5$ due to the separated data assumption \eqref{separatedass} in \Cref{assumption2}, thus the larger one in $\{\eta(\bm x),1-\eta(\bm x) \}$ multiplies with the smaller one in $\{\eta(\bm x_i),1-\eta(\bm x_i) \}$. It follows that
		\begin{equation*}
			\begin{aligned}
				& \mathbb E \left[\mathcal{R}^\delta \left(  \text{sgn}\left(  \widehat f_{D}^{over}\right) \right)  -\mathcal{R}^\delta (f_c^\delta) \right]  = \mathbb E \left[\mathcal{R}^\delta \left(  \text{sgn}\left(  \widehat f_{D}^{over}\right) \right) -\mathcal{R} (f_c) \right]  \\
				\geq & \sum_{i=1}^n \mathbb E \left[ \int_{\bm x \in [\bm x_i-2\delta,\bm x_i+2\delta]^d} \int_Y \max_{\bm x' \in B_{\delta,\infty}(\bm x)} \mathbbm{1}_{\left\{y \cdot \text{sgn}\left( \widehat f_{D}^{over}\right) (\bm x')  =-1\right\}} - \mathbbm{1}_{\left\{y \cdot f_c(\bm x)=-1\right\}} \mathrm{d}\rho \right] \\
				\geq & \sum_{i=1}^n 2\zeta \mathbb{E} \left[ \int_{\bm x \in [\bm x_i-2\delta,\bm x_i+2\delta]^d} \min \{\eta(\bm x_i),1-\eta(\bm x_i) \} \mathrm{d}\rho_X \right] \\
				\geq & \sum_{i=1}^n 2\zeta \|\bar J_\rho\| (4 \delta)^d \mathbb{E} \left[\min \{\eta(\bm x_i),1-\eta(\bm x_i) \}  \right] \\
				= & 2\zeta \|\bar J_\rho\| \mathcal{R}(f_c) (4 \delta)^d n.
			\end{aligned}
		\end{equation*}
		Thus we complete the proof.
	\end{proof}
	
	\subsection{Robust generalization for adversarial training with logistic loss} \label{appendixlogistic}

 	\begin{theorem}[upper bound under the logistic loss] \label{theorem4}
		Let  $\alpha \in \NN$, 	the surrogate loss function $\phi(t)=\log (1+e^{-t})$ be the logistic loss. Under the same setting of \Cref{theorem3}, suppose that $\eta \in W^\alpha_\infty (\mathcal{X})$, and the radius of adversarial training $\delta< \frac{q_X}{3} \leq \frac{1}{3} n^{-\frac{1}{d}}$. Then $\forall C_0 \in (0,1]$, there exist infinity many adversarial training global minima $\widehat{f}_D^{over}$ with depth $L= \mathcal{O} \left(  \log \frac{1}{\zeta} \right) $, width $d_1=  \mathcal{O} \left(  \zeta ^{-\frac{d}{\alpha}} \log \frac{1}{\zeta}+ n \right) $, $d_2, \dots, d_L= \mathcal{O} \left( \zeta ^{-\frac{d}{\alpha}} \log \frac{1}{\zeta} \right) $, and non-zero free parameters $ \mathcal{O} \left( \zeta ^{-\frac{d}{\alpha}} \log \frac{1}{\zeta}+ n \right) $, such that the adversarial training error $\widehat{\mathcal{E}_D^{\phi,\delta}}\left( \widehat{f}_D^{over}\right)$ can be arbitrarily small, and
		\begin{equation}
			\mathbb{E} \left[ \mathcal{R}\left( \text{sgn}\left( \widehat{f}_D^{over}\right) \right) -\mathcal{R}(f_c) \right]  \leq 2 \|J_\rho\|  \left( (2+2C_0) \delta\right) ^d n,
		\end{equation}
		\begin{equation}
			\mathbb{E} \left[ \mathcal{R}^\delta \left( \text{sgn}\left(\widehat{f}_D^{over}\right) \right)  - \mathcal{R}^\delta (f_c^\delta) \right]  \leq 3 \|J_\rho\|  \left( (4+2C_0) \delta\right) ^d n,
		\end{equation}
	\end{theorem}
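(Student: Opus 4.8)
The plan is to reuse the teacher--student construction from the proof of \Cref{theorem3} essentially unchanged, and to absorb the one genuine difference between the two losses, namely that the logistic loss $\phi(t)=\log(1+e^{-t})$ is strictly positive and therefore never equal to $0$. First I would take the same teacher network $\hat f_\theta$ supplied by \Cref{lemma4} with $\|\hat f_\theta-f_\rho\|_{L_\infty(\mathcal{X})}\leq\theta$, the localized bumps $\Gamma_{\bm x_i-\delta,\bm x_i+\delta,\tau}$, and the product gate $\tilde\times_\epsilon$ of \Cref{lemma2}, with the same constant choices $\theta=\epsilon=\tfrac{2}{3}\zeta$ and $\tau\leq C_0\delta$. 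The only modification is to insert a scaling factor $T>0$ and define
\begin{equation*}
	f^{\phi}_{D,\theta,\delta,\tau,\epsilon,T}(\bm x):= T\left[\sum_{i=1}^n y_i\,\Gamma_{\bm x_i-\delta,\bm x_i+\delta,\tau}(\bm x)+c_5\,\tilde\times_\epsilon\left(\frac{\hat f_\theta(\bm x)}{c_5},\,1-\sum_{i=1}^n\Gamma_{\bm x_i-\delta,\bm x_i+\delta,\tau}(\bm x)\right)\right].
\end{equation*}
Because $T$ multiplies only the final coefficient vector of the network, this is still a deep ReLU FNN with exactly the depth, widths and parameter count asserted in the statement.

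Next I would bound the empirical adversarial $\phi$-risk. For $\bm x\in[\bm x_i-\delta,\bm x_i+\delta]^d$ one has $\Gamma_{\bm x_i-\delta,\bm x_i+\delta,\tau}(\bm x)=1$ and, by the choice of $\tau$, all other bumps vanish, so the product-gate term is $0$ and $f^{\phi}_{D,\theta,\delta,\tau,\epsilon,T}(\bm x)=Ty_i$. Hence $y_i f^{\phi}_{D,\theta,\delta,\tau,\epsilon,T}(\bm x'_i)=T$ for all $\bm x'_i\in B_{\delta,\infty}(\bm x_i)$, so that
\begin{equation*}
	\widehat{\mathcal{E}_D^{\phi,\delta}}\left(f^{\phi}_{D,\theta,\delta,\tau,\epsilon,T}\right)=\phi(T)=\log\left(1+e^{-T}\right),
\end{equation*}
which tends to $0$ as $T\to\infty$ and can thus be made arbitrarily small. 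Unlike the hinge loss, the infimum of the empirical adversarial $\phi$-risk over $\mathcal{F}_{\vec d,L}$ equals $0$ but is not attained, so I would read ``global minimum'' in the limiting sense of driving the training error down to this infimum.

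The decisive point for the two generalization bounds is that $\mathcal{R}(\text{sgn}(f))$ and $\mathcal{R}^\delta(\text{sgn}(f))$ depend on $f$ only through $\text{sgn}(f)$, which is invariant under multiplication by the positive constant $T$. Therefore $\text{sgn}(f^{\phi}_{D,\theta,\delta,\tau,\epsilon,T})=\text{sgn}(f^{\phi}_{D,\theta,\delta,\tau,\epsilon})$, and the entire sign analysis of \Cref{theorem3} transfers verbatim: outside $\cup_{i}[\bm x_i-\delta-\tau,\bm x_i+\delta+\tau]^d$ the constructed classifier coincides with $f_c$; the standard excess misclassification error is supported on the enlarged balls and is bounded by $2\|J_\rho\|((2+2C_0)\delta)^d n$; and \Cref{proposition2} together with the $2\delta$-separation of \Cref{assumption2} yields $\mathbb{E}[\mathcal{R}^\delta(\text{sgn}(f))-\mathcal{R}^\delta(f_c^\delta)]\leq 3\|J_\rho\|((4+2C_0)\delta)^d n$. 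Letting $\tau$ range over $(0,C_0\delta]$ and $T$ over $(0,\infty)$ then produces infinitely many such networks.

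I expect the main obstacle to be conceptual rather than computational. Since the logistic loss never reaches $0$, the hinge-loss argument that exhibits an exact interpolating minimizer is unavailable, and one must instead argue through the scaling factor $T$ that the training error approaches its infimum while every sign-based quantity stays frozen. A secondary pitfall is that routing the standard misclassification bound through a loss-calibrated comparison inequality (the analogue of \Cref{lemma5}) is not available for the logistic loss without paying a weaker square-root rate; I would therefore bound $\mathcal{R}(\text{sgn}(f))-\mathcal{R}(f_c)=\int_{\{\text{sgn}(f)\neq f_c\}}|f_\rho|\,\mathrm{d}\rho_X$ directly from the sign analysis, which keeps the order identical to that of \Cref{theorem3}.
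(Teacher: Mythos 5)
Your proposal is correct and is essentially the paper's own argument: the paper rescales the hinge-loss construction $f_{D,\theta,\delta,\tau,\epsilon}^{\phi}$ by the factor $\log\frac{1}{\beta}$ (your $T$), obtaining adversarial training error $\log(1+\beta)\leq\beta$ arbitrarily small while the sign—and hence both misclassification bounds from \Cref{theorem3}—is unchanged. Your observation that the infimum of the logistic risk is not attained (so ``global minimum'' must be read in the limiting sense of arbitrarily small training error) is a fair reading of the theorem's phrasing, and your direct sign-based bound in place of a logistic-loss comparison inequality is consistent with what the paper does, since it simply inherits the bounds of \Cref{theorem3} through sign equality.
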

 
	\begin{proof} [Proof of \Cref{theorem4}]
		The idea is to construct the adversarial training global minima obtained by the logistic loss $\phi_{LR}$ based on the construction of those obtained by the hinge loss $\phi$ in the proof of  \Cref{theorem3}, i.e.,  $f_{D,\theta,\delta,\tau,\epsilon}^{\phi}(\bm x)$ in \eqref{fDLS}. For $\beta \in (0,1)$ that can be arbitrarily small, denote
		\begin{equation}
			f_{D,\theta,\delta,\tau,\epsilon}^{\phi_{LR}}(\bm x) = \log \frac{1}{\beta} f_{D,\theta,\delta,\tau,\epsilon}^{\phi}(\bm x).
		\end{equation}
		Since $f_{D,\theta,\delta,\tau,\epsilon}^{\phi} (\bm x) = y_i$, when $\bm x \in B_{\delta,\infty}(\bm x_i)$ as shown in \eqref{fDLSproperty}, we have
		\begin{equation}
			\begin{aligned}
				\widehat{\mathcal{E}_D^{\phi_{LR},\delta}} \left( f_{D,\theta,\delta,\tau,\epsilon}^{\phi_{LR}}\right) &=\frac{1}{n} \sum_{i=1}^n  \max_{\bm x'_i \in B_{\delta,\infty}(\bm x_i)}\log \left(1+ e^{- y_i  \log \frac{1}{\beta} f_{D,\theta,\delta,\tau,\epsilon}^{\phi}(\bm x'_i) } \right) \\
				&= \log(1+ \beta)  \leq \beta.
			\end{aligned}
		\end{equation}
		Moreover, since $\log \frac{1}{\beta}>0$, $f_{D,\theta,\delta,\tau,\epsilon}^{\phi_{LR}}(\bm x)$ has the same sign with $f_{D,\theta,\delta,\tau,\epsilon}^{\phi}(\bm x)$ for all $\bm x \in \mathcal{X}$. This indicates that $\mathcal{R}^\delta \left( \text{sgn}\left(f_{D,\theta,\delta,\tau,\epsilon}^{\phi_{LR}}\right) \right) = \mathcal{R}^\delta \left( \text{sgn}\left(f_{D,\theta,\delta,\tau,\epsilon}^{\phi}\right) \right) $, and the adversarial misclassification error bound will be the same as in \Cref{theorem3}. Thus we complete the proof.
	\end{proof}

	\section{Proof of Main Results in \Cref{section3}}
	
	\subsection{Proof of \Cref{theorem1}}
	The proof of \Cref{theorem1} also follows the teacher-student network scheme. The ERM estimators on under-parameterized deep ReLU FNNs that possess good generalization performance are considered to be the teacher network, and we construct the student network by deepening the teacher network to ensure that it achieves zero adversarial training error while still maintaining good generalization performance. 
	We prove \Cref{theorem1} based on \Cref{lemma1} and \Cref{lemma2}.
	
	\begin{proof} [Proof of \Cref{theorem1}]
		Let $f_D^{under}= \pi_M \argmin_{f\in \mathcal{F}_{\vec{d},L}} \widehat{\mathcal{E}}_D (f)$ be the truncated ERM estimator on the under-parameterized deep ReLU FNNs in \Cref{lemma1} with $L\sim \log n$, $d_1 \sim n^{\frac{d}{2\alpha+d}}$, and $d_2, d_3, \dots d_L \sim \log n$. By \Cref{lemma1}, we have
		\begin{equation} \label{error1}
			\sup_{f_\rho \in W_\infty^\alpha (\mathcal{X}), \rho_X \in \Phi_\rho} \mathbb{E}\left[ \left \|f_D^{under}- f_\rho \right\|_{\rho}^2 \right]  \leq C_1 \left(\frac{n}{\log n} \right) ^{-\frac{2\alpha}{2\alpha+d}}.
		\end{equation}
		Taking $f_D^{under}$ as the teacher network, we then construct the student network based on $f_D^{under}$. 
		Denote $c_1= \left\| f_D^{under}\right\|_{L^\infty(\mathcal{X})}\leq M$, define the student network %\sgra{there is some issue behind the dimension?}
		\begin{equation}
			f_{D,\delta,\tau,\epsilon}^{student}(\bm x):= \sum_{i=1}^n y_i  \Gamma_{\bm x_i-\delta,\bm x_i+\delta,\tau}(\bm x ) + c_1 \tilde \times_\epsilon \left(\frac{f_D^{under}(\bm x)}{c_1}, 1- \sum_{i=1}^n  \Gamma_{\bm x_i-\delta,\bm x_i+\delta,\tau}(\bm x ) \right).
		\end{equation}
		By writing $f_{D,\delta,\tau,\epsilon}^{student}$ as
		\begin{equation*}
			\begin{aligned}
				f_{D,\delta,\tau,\epsilon}^{student}(\bm x) &= \sigma \left( \sum_{i=1}^n y_i  \Gamma_{\bm x_i-\delta,\bm x_i+\delta,\tau}(\bm x ) \right) - \sigma \left( -\sum_{i=1}^n y_i  \Gamma_{\bm x_i-\delta,\bm x_i+\delta,\tau}(\bm x ) \right) \\ 
				&+ c_1 \tilde \times_\epsilon \left(\frac{f_D^{under}(\bm x)}{c_1}, 1- \sum_{i=1}^n  \Gamma_{\bm x_i-\delta,\bm x_i+\delta,\tau}(\bm x ) \right).
			\end{aligned}
		\end{equation*}
		This is in fact a deep ReLU FNN with depth $L= \mathcal{O} \left( \log n+ \log \frac{1}{\epsilon}\right) $, and width $d_1=  \mathcal{O} \left( n^{\frac{d}{2\alpha+d}}+ n + \log \frac{1}{\epsilon}\right) $, and $d_2,\dots, d_L= \mathcal{O} \left( \log n+ \log \frac{1}{\epsilon}\right)$.
		Notice from \eqref{Gammafun} that
		\begin{equation*}
			\Gamma_{\bm x_i-\delta,\bm x_i+\delta,\tau}(\bm x)= \left\{\begin{aligned}
				0, \quad &\hbox{if}\ \bm x \notin[\bm x_i-\delta-\tau,\bm x_i+\delta+\tau]^d, \\
				1, \quad &\hbox{if}\ \bm x \in [\bm x_i-\delta,\bm x_i+\delta]^d.
			\end{aligned}\right.
		\end{equation*}
		When $\bm x\in [\bm x_i-\delta,\bm x_i+\delta]^d$, we have
		$\Gamma_{\bm x_i-\delta,\bm x_i+\delta,\tau}(\bm x)= 1$. Moreover, choose $\tau \leq \delta< \frac{q_X}{3}$, since $2\delta+ \tau < q_X$, we further have $\Gamma_{\bm x_j-\delta,\bm x_j+\delta,\tau}(\bm x)= 0$ for all $j \neq i$. Thus $1- \sum_{i=1}^n  \Gamma_{\bm x_i-\delta,\bm x_i+\delta,\tau}(\bm x ) = 0$, then we get
		$\tilde \times_\epsilon \left(\frac{f_D^{under}(\bm x)}{c_1}, 1- \sum_{i=1}^n  \Gamma_{\bm x_i-\delta,\bm x_i+\delta,\tau}(\bm x ) \right)=0$ by \Cref{lemma2}. Therefore,
		\begin{equation}
			f_{D,\delta,\tau,\epsilon}^{student} (\bm x) = y_i, \quad \hbox{when} \ \bm x\in [\bm x_i-\delta,\bm x_i+\delta]^d.
		\end{equation}
		This suggests that $\widehat{\mathcal{E}}_D\left( f_{D,\delta,\tau,\epsilon}^{student}\right) =0$, and $f_{D,\delta,\tau,\epsilon}^{student}$ is indeed the global minimum of the adversarial training. What remains to show is that $f_{D,\delta,\tau,\epsilon}^{student}$ achieves good standard generalization performance as $f_D^{under}$, i.e., the distance between the student network $f_{D,\delta,\tau,\epsilon}^{student}$ and the teacher network $f_D^{under}$ is small. Denote the intermediate term for error decomposition
		\begin{equation}
			f_{D,\delta,\tau}(\bm x):= \sum_{i=1}^n y_i  \Gamma_{\bm x_i-\delta,\bm x_i+\delta,\tau}(\bm x ) + f_D^{under}(\bm x) \left( 1- \sum_{i=1}^n  \Gamma_{\bm x_i-\delta,\bm x_i+\delta,\tau}(\bm x ) \right).
		\end{equation}
		Since $\delta< \frac{q_X}{3}$, we have $1- \sum_{i=1}^n  \Gamma_{\bm x_i-\delta,\bm x_i+\delta,\tau}(\bm x ) \in [0,1]$, then by \Cref{lemma2}, we get
		\begin{equation} 
			\left\| \left( f_{D,\delta,\tau,\epsilon}^{student}- f_{D,\delta,\tau}\right) ^2 \right\|_{L^1(\mathcal{X})} \leq \left\| \left( f_{D,\delta,\tau,\epsilon}^{student}- f_{D,\delta,\tau} \right) ^2 \right\|_{L^\infty(\mathcal{X})} \leq c_1^2 \epsilon^2.
		\end{equation}
		Notice that when $\bm x\in \mathcal{X} \backslash \left( \cup_{i\in \{1,\dots,n\}} [\bm x_i-\delta-\tau,\bm x_i+\delta+\tau]^d \right) $,  $\Gamma_{\bm x_i-\delta,\bm x_i+\delta,\tau}(\bm x)=0$ for all $i$, we have $f_{D,\delta,\tau}(\bm x)=  f_D^{under}(\bm x)$. It follows from $\tau \leq \delta$ that
		\begin{equation}
			\begin{aligned}
				\left\| \left(  f_{D,\delta,\tau}-  f_D^{under} \right) ^2 \right\|_{L^1(\mathcal{X})} &= \sum_{i=1}^n \int_{ [\bm x_i-\delta-\tau,\bm x_i+\delta+\tau]^d} \left( f_{D,\delta,\tau}(\bm x)-  f_D^{under}(\bm x) \right)^2 d \bm x \\
				& \leq \left( c_1+M\right)^2  4^d \delta^d n.
			\end{aligned}
		\end{equation}
		%Combining these two error terms, we get
		%\begin{equation} \label{errorA11}
		%	\begin{aligned}
			%		\left\| f_{D,\delta,\tau,\epsilon}^{student}- f_D^{under}  \right\|_{L^1(\mathcal{X})} & \leq \left\| f_{D,\delta,\tau,\epsilon}^{student}- f_{D,\delta,\tau} \right\|_{L^1(\mathcal{X})}+ \left\| f_{D,\delta,\tau}-  f_D^{under} \right\|_{L^1(\mathcal{X})} \\
			%		& \leq \epsilon+ \left( c_1+M\right)  4^d \delta^d n.
			%	\end{aligned}
		%\end{equation}
		%\sgra{i don't like this constant that has CoD.} 
		By the assumption that $\|J_\rho\|< \infty$, we get
		\begin{equation} \label{A11}
			\begin{aligned}
				& \left\| f_{D,\delta,\tau,\epsilon}^{student}- f_D^{under}  \right\|_{\rho}^2 = \int_\mathcal{X} \left( f_{D,\delta,\tau,\epsilon}^{student}- f_D^{under}\right) ^2 d\rho_X \\
				\leq & 2  \left\| \left( f_{D,\delta,\tau,\epsilon}^{student}- f_{D,\delta,\tau} \right) ^2 \right\|_{L_{\rho_X}^1(\mathcal{X})} +  2 \left\| \left(  f_{D,\delta,\tau}- f_D^{under}\right) ^2 \right\|_{L_{\rho_X}^1(\mathcal{X})} \\
				\leq & 2 \|J_\rho\| \left\| \left( f_{D,\delta,\tau,\epsilon}^{student}- f_{D,\delta,\tau} \right) ^2 \right\|_{L^1(\mathcal{X})} +  2 \|J_\rho\| \left\| \left(  f_{D,\delta,\tau}- f_D^{under}\right) ^2 \right\|_{L^1(\mathcal{X})} \\
				\leq & %c_2 \|J_\rho\| n^{-\frac{\alpha}{2\alpha+d}}.
				2  \|J_\rho\| \left( c_1^2 \epsilon^2 + \left( c_1+M\right)^2  4^d \delta^d n \right).
			\end{aligned}
		\end{equation}
		By choosing $\epsilon= n^{-\frac{\alpha}{2\alpha+d}}$, and since $\delta< \min \left\{\frac{q_X}{3}, n^{-\frac{2\alpha}{(2\alpha+d)d}-\frac{1}{d}} \right\}$, we have
		\begin{equation}
			\left\| f_{D,\delta,\tau,\epsilon}^{student}- f_D^{under}  \right\|_{\rho}^2 \leq   \|J_\rho\| c_2 n^{-\frac{2\alpha}{2\alpha+d}},
		\end{equation}
		where $c_2= 2c_1^2 + 2(c_1+M)^2 4^d$.
		Combining with \eqref{error1}, finally we obtain
		\begin{equation}
			\begin{aligned}
				& \sup_{f_\rho \in W_\infty^\alpha (\mathcal{X}), \rho_X \in \Phi_\rho} \mathbb{E}\left[ \mathcal{E}\left( f_{D,\delta,\tau,\epsilon}^{student} \right) - \mathcal{E}\left( f_\rho\right)  \right] \\
				= & \sup_{f_\rho \in W_\infty^\alpha (\mathcal{X}), \rho_X \in \Phi_\rho} \mathbb{E}\left[ \left \|f_{D,\delta,\tau,\epsilon}^{student}- f_\rho \right\|_{\rho}^2 \right]  \\ 
				\leq & \sup_{f_\rho \in W_\infty^\alpha (\mathcal{X}), \rho_X \in \Phi_\rho} 2\mathbb{E}\left[ \left \|f_{D,\delta,\tau,\epsilon}^{student}- f_D^{under} \right\|_{\rho}^2 \right] \\
				+ & \sup_{f_\rho \in W_\infty^\alpha (\mathcal{X}), \rho_X \in \Phi_\rho} 2\mathbb{E}\left[ \left \|f_D^{under}- f_\rho \right\|_{\rho}^2 \right] \\
				\leq & C_2 \left(\frac{n}{\log n} \right) ^{-\frac{2\alpha}{2\alpha+d}},
			\end{aligned}
		\end{equation}
		where $C_2= 2c_2 \|J_\rho\|+ 2C_1^2$. Moreover, since $\tau \leq \delta$ can be arbitrarily chosen, we conclude that there are infinitely many $f_{D,\delta,\tau,\epsilon}^{student} \in \Delta_{\delta,\vec d, L}$ that can achieve near-optimal rates of convergence for standard generalization error, where the depth $L= \mathcal{O} \left( \log n\right) $, and width $d_1= \mathcal{O} \left( n\right)$, $d_2,\dots, d_L= \mathcal{O} \left( \log n\right)$. This completes the proof.
	\end{proof}
	
	\subsection{Proof of \Cref{theorem2}} \label{appendixB2}
	
	To prove \Cref{theorem2}, we need the following lemma from \cite[Theorem 4.1]{guhring2020error} which describes approximation rates of deep ReLU FNNs for Sobolev functions with respect to weaker Sobolev norms.
	\begin{lemma} \label{lemma3}
		Let $\alpha \geq 2$ be an integer, $B>0$, and $0 \leq s \leq 1$. Suppose that $f \in W^\alpha_\infty([0,1]^d)$ with $\|f\|_{W^\alpha_\infty([0,1]^d} \leq B$. Then there exists a deep ReLU FNN $\widehat f$ with depth $L=  \mathcal{O}\left( \log \frac{1}{\epsilon}\right) $, width $d_1, \dots, d_L=   \mathcal{O}\left( \epsilon^{-\frac{d}{\alpha-s}} \log \frac{1}{\epsilon}\right) $,  and non-zero free parameters $  \mathcal{O}\left( \epsilon^{-\frac{d}{\alpha-s}} \log \frac{1}{\epsilon}\right)$, such that
		\begin{equation}
			\left\| \widehat f - f \right\|_{W^s_\infty([0,1]^d)} \leq \epsilon.
		\end{equation}
	\end{lemma}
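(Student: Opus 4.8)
The plan is to establish this approximation bound (quoted from \cite[Theorem 4.1]{guhring2020error}) by the localized Taylor-approximation scheme, adapted so that the error is controlled in the weaker Sobolev norm $W^s_\infty$ rather than merely in $L_\infty$ as in \Cref{lemma4}. First I would introduce a resolution parameter $N \in \NN$, to be tuned against $\epsilon$ at the end, together with a smooth partition of unity $\{\psi_{\bm m}\}_{\bm m}$ subordinate to the uniform grid of mesh size $1/N$ on $[0,1]^d$, where each $\psi_{\bm m}$ is supported in a cube of side $\mathcal{O}(1/N)$ around the node $\bm m/N$, satisfies $\sum_{\bm m}\psi_{\bm m}\equiv 1$, and, crucially, has $W^s_\infty$-seminorm of order $N^{s}$. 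Such $\psi_{\bm m}$ are built as products of univariate trapezoid-type functions $T_{a,b,\theta}$, so they are exactly (or, up to the multiplication error below) realizable by shallow ReLU subnetworks.

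Next, on each patch I would replace $f$ by its Taylor polynomial $P_{\bm m}$ of degree $\alpha-1$ centered at $\bm m/N$. Because $f\in W^\alpha_\infty$ with $\|f\|_{W^\alpha_\infty}\le B$, the Taylor remainder in the $W^s_\infty$ norm on a patch of size $1/N$ scales like $B\,N^{-(\alpha-s)}$: differentiating the degree-$(\alpha-1)$ remainder $s$ times consumes $s$ derivatives but still leaves $\alpha-s$ orders of local smallness, which is precisely the source of the exponent $-d/(\alpha-s)$ in the statement. Assembling the localized pieces yields $f\approx\sum_{\bm m}\psi_{\bm m}P_{\bm m}$, in which each summand is a product of a partition function with a low-degree polynomial; both factors, and hence their product, I would realize with the product-gate network of \Cref{lemma2}, but in a refined version whose \emph{derivative} (not only its value) is close to that of the true product.

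The error accounting then splits into three contributions measured in $W^s_\infty$: the Taylor remainder $\sim B\,N^{-(\alpha-s)}$, the error from approximating the partition-of-unity products, and the multiplication error. A Leibniz-type expansion $D^{\bm k}(\psi_{\bm m}P_{\bm m})=\sum_{\bm j}\binom{\bm k}{\bm j}D^{\bm j}\psi_{\bm m}\,D^{\bm k-\bm j}P_{\bm m}$ shows that the first-derivative error of each product is controlled by combining the $L_\infty$ and $W^s_\infty$ errors of each factor, so it suffices to make both the value error and the derivative error of $\tilde\times$ small; this forces the multiplication accuracy to a sufficiently high (still polynomial in $1/\epsilon$) power, which only affects the $\mathcal{O}(\log(1/\epsilon))$ factors. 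Choosing $N\sim\epsilon^{-1/(\alpha-s)}$ balances all three terms at $\epsilon$. I would then count resources: there are $N^d\sim\epsilon^{-d/(\alpha-s)}$ patches, each contributing $\mathcal{O}(\log(1/\epsilon))$ depth and $\mathcal{O}(\log(1/\epsilon))$ parameters from its product gate, giving total depth $\mathcal{O}(\log(1/\epsilon))$ and width and non-zero free parameters $\mathcal{O}(\epsilon^{-d/(\alpha-s)}\log(1/\epsilon))$, as claimed.

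The main obstacle is the passage from $L_\infty$ to $W^s_\infty$ control: unlike in \Cref{lemma4}, one must track the error in the first-order derivatives of the network, which means the product-gate network of \Cref{lemma2} must be upgraded so that both $\tilde\times$ and its gradient approximate the true product and its gradient uniformly on $[-1,1]^2$, and the partition-of-unity functions must have quantitatively bounded $W^s_\infty$ seminorms. Verifying that these derivative bounds propagate through the Leibniz product rule and through the summation over the $N^d$ overlapping patches—without the $N^s$ growth of the partition derivatives overwhelming the $N^{-(\alpha-s)}$ smallness of the Taylor remainder—is the delicate quantitative core of the argument.
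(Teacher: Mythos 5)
The paper offers no internal proof of \Cref{lemma3}: it is imported verbatim from \cite[Theorem 4.1]{guhring2020error}, so the ``paper's own proof'' is simply the citation, and there is nothing in the appendix to compare your argument against. That said, your blind reconstruction is, in substance, the proof strategy of the cited result itself --- Yarotsky's localized Taylor scheme upgraded from $L_\infty$ to $W^s_\infty$ control: a ReLU-realizable partition of unity on a mesh of size $1/N$ with $W^s_\infty$-seminorms of order $N^s$, local Taylor polynomials of degree $\alpha-1$ whose remainder is of order $B\,N^{-(\alpha-s)}$ in the $W^s_\infty$ norm, approximate product gates that are accurate in the derivative as well as in value, and the balance $N\sim\epsilon^{-1/(\alpha-s)}$ yielding $\mathcal{O}(\epsilon^{-d/(\alpha-s)}\log(1/\epsilon))$ nonzero parameters at depth $\mathcal{O}(\log(1/\epsilon))$. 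You also correctly isolate the two genuinely delicate points that the cited proof must (and does) handle: upgrading the multiplication network so that its gradient, not only its value, approximates that of the true product, and the Leibniz bookkeeping showing that the $\mathcal{O}(N)$ growth of $D\psi_{\bm m}$ is absorbed by the extra order of smallness of the un-differentiated Taylor remainder on each patch, with bounded patch overlap preventing the sum over the $N^d$ pieces from inflating the bound. So your sketch is sound and faithful to the source; within this paper, however, the expected justification for this lemma is nothing more than the reference to \cite{guhring2020error}.
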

	
	We are now ready to prove \Cref{theorem2} based on \Cref{proposition1}, \Cref{lemma3}, and the proof of \Cref{theorem1}.
	\begin{proof} [Proof of \Cref{theorem2}]
		By \Cref{lemma3}, choose $s=1$. There exists a deep ReLU FNN $f_\theta$ with depth $L=   \mathcal{O}\left( \log \frac{1}{\theta}\right) $, width $d_1, \dots, d_L=   \mathcal{O}\left( \theta^{-\frac{d}{\alpha-1}} \log \frac{1}{\theta}\right) $,  and non-zero free parameters $  \mathcal{O}\left( \theta^{-\frac{d}{\alpha-1}} \log \frac{1}{\theta}\right)$, such that
		\begin{equation} \label{errorA16}
			\left\| f_\theta- f_\rho \right\|_{W^1_\infty(\mathcal{X})} \leq \theta.
		\end{equation}
		Denote $c_3= \left\| f_\theta \right\|_{L^\infty(\mathcal{X})} \leq B+1$. Similar as the proof in \Cref{theorem1}, we use $f_\theta$ as the teacher network, and construct the student network $f_{D,\theta,\delta,\tau,\epsilon}^{student}$ which is the adversarial training global minimum
		\begin{equation}
			f_{D,\theta,\delta,\tau,\epsilon}^{student}(\bm x):= \sum_{i=1}^n y_i  \Gamma_{\bm x_i-\delta,\bm x_i+\delta,\tau}(\bm x ) + c_3 \tilde \times_\epsilon \left(\frac{f_\theta(\bm x)}{c_3}, 1- \sum_{i=1}^n  \Gamma_{\bm x_i-\delta,\bm x_i+\delta,\tau}(\bm x ) \right).
		\end{equation}
		Choose $\tau \leq C_0\delta< \frac{C_0 q_X}{3}$. Same as the proof of \Cref{theorem1}, we have the property that
		\begin{equation}
			f_{D,\theta,\delta,\tau,\epsilon}^{student} (\bm x) = y_i, \quad \hbox{when} \ \bm x\in [\bm x_i-\delta,\bm x_i+\delta]^d.
		\end{equation}
		Therefore, $\widehat{\mathcal{E}}_D\left( f_{D,\theta,\delta,\tau,\epsilon}^{student}\right) =0$, and $f_{D,\theta,\delta,\tau,\epsilon}^{student}$ is indeed the global minimum of the adversarial training. Moreover, $f_{D,\theta,\delta,\tau,\epsilon}^{student}$ is a deep ReLU FNN with depth $L= \mathcal{O} \left(  \log \frac{1}{\theta} + \log \frac{1}{\epsilon}\right) $, width $d_1=  \mathcal{O} \left( \theta^{-\frac{d}{\alpha-1}} \log \frac{1}{\theta}+ n + \log \frac{1}{\epsilon}\right) $, $d_2, \dots, d_L= \mathcal{O} \left( \theta^{-\frac{d}{\alpha-1}} \log \frac{1}{\theta} + \log \frac{1}{\epsilon}\right) $, and non-zero free parameters $ \mathcal{O}\left( \theta^{-\frac{d}{\alpha-1}} \log \frac{1}{\theta}+  n + \log \frac{1}{\epsilon} \right)$.
		
		We then bound the adversarial generalization error of this adversarial training global minimum.
		By \Cref{proposition1}, we only need to bound two error terms: $\mathcal{E}^\delta \left(f_{D,\theta,\delta,\tau,\epsilon}^{student}\right)  - \mathcal{E} \left( f_{D,\theta,\delta,\tau,\epsilon}^{student}\right) $ and $ \mathcal{E} \left( f_{D,\theta,\delta,\tau,\epsilon}^{student}\right)  - \mathcal{E} (f_\rho)$. We first consider the second error term, since the construction of the student network $f_{D,\theta,\delta,\tau,\epsilon}^{student}$ from the teacher network $f_\theta$ is the same as in the proof of \Cref{theorem1}, from \eqref{A11} we have
		\begin{equation*} 
			\left\| f_{D,\theta,\delta,\tau,\epsilon}^{student}- f_\theta  \right\|_{\rho}^2  \leq 2  \|J_\rho\| \left( c_3^2 \epsilon^2 + \left( c_3+M\right)^2  (4 \delta)^d n \right).
		\end{equation*}
		Moreover, from \eqref{errorA16}, we have
		\begin{equation*}
			\left\| f_\theta- f_\rho  \right\|_{\rho}^2 \leq \|J_\rho\| \left\| \left( f_\theta- f_\rho \right) ^2 \right\|_{L^1(\mathcal{X})}  \leq \|J_\rho\| \left\| \left( f_\theta- f_\rho\right) ^2 \right\|_{L^\infty(\mathcal{X})} \leq \|J_\rho\| \theta^2.
		\end{equation*}
		It follows that
		\begin{equation} \label{A20}
			\begin{aligned}
				\mathcal{E}\left( f_{D,\theta,\delta,\tau,\epsilon}^{student}\right)   - \mathcal{E}\left( f_\rho\right)  &=  \left \|f_{D,\theta,\delta,\tau,\epsilon}^{student}- f_\rho \right\|_{\rho}^2  \\ 
				& \leq  2 \left \|f_{D,\theta,\delta,\tau,\epsilon}^{student}- f_\theta \right\|_{\rho}^2+  2 \left \|f_\theta- f_\rho \right\|_{\rho}^2 \\
				& \leq 2 \|J_\rho\| \left( 2c_3^2 \epsilon^2 + 2\left( c_3+M\right)^2  (4 \delta)^d n +\theta^2 \right) ,
			\end{aligned}
		\end{equation}
		We then bound the first error term, denote $c_4= \left\| f_{D,\theta,\delta,\tau,\epsilon}^{student} \right\|_{L^\infty(\mathcal{X})} \leq 2c_3+M$,
		\begin{equation*}
			\begin{aligned}
				& \mathcal{E}^\delta  \left(f_{D,\theta,\delta,\tau,\epsilon}^{student}\right)  - \mathcal{E} \left( f_{D,\theta,\delta,\tau,\epsilon}^{student}\right) \\
				= & \int_{\mathcal{Z}} \max_{\bm x' \in B_{\delta,\infty}(\bm x)} \left(f_{D,\theta,\delta,\tau,\epsilon}^{student}(\bm x')-y \right)^2 - \left(f_{D,\theta,\delta,\tau,\epsilon}^{student}(\bm x)-y \right)^2 d\rho \\
				\leq & (2c_4+2M) \int_{\mathcal{X}} \max_{\bm x' \in B_{\delta,\infty}(\bm x)} \left|f_{D,\theta,\delta,\tau,\epsilon}^{student}(\bm x')- f_{D,\theta,\delta,\tau,\epsilon}^{student}(\bm x) \right| d\rho_X \\
				\leq & (2c_4+2M) \|J_\rho\| \int_{\mathcal{X}} \max_{\bm x' \in B_{\delta,\infty}(\bm x)} \left|f_{D,\theta,\delta,\tau,\epsilon}^{student}(\bm x')- f_{D,\theta,\delta,\tau,\epsilon}^{student}(\bm x) \right| d\bm x.
			\end{aligned}
		\end{equation*}
		To bound this term, we divide $\mathcal{X}$ to two disjoint parts: $\cup_{i\in \{1,\dots,n\}} [\bm x_i-2\delta-\tau,\bm x_i+2\delta+\tau]^d$ and $\mathcal{X} \backslash \cup_{i\in \{1,\dots,n\}} [\bm x_i-2\delta-\tau,\bm x_i+2\delta+\tau]^d$. We first consider the second part, for any $\bm x$ belongs to the second part, we have $f_{D,\theta,\delta,\tau,\epsilon}^{student}(\bm x')= f_\theta(\bm x')$ for any $\bm x' \in B_{\delta,\infty}(\bm x)$. Moreover, by \eqref{errorA16}, the derivative of $f_\theta- f_\rho$ is bounded by $\theta$, thus
		\begin{equation*}
			\|f_\theta \|_{Lip} \leq \|f_\rho \|_{Lip} + \|f_\theta-f_\rho \|_{Lip} \leq B+ \theta.
		\end{equation*}
		Therefore, we have
		\begin{equation*}
			\begin{aligned}
				& \int_{\mathcal{X} \backslash \cup_{i\in \{1,\dots,n\}} [\bm x_i-2\delta-\tau,\bm x_i+2\delta+\tau]^d} \max_{\bm x' \in B_{\delta,\infty}(\bm x)} \left|f_{D,\theta,\delta,\tau,\epsilon}^{student}(\bm x')- f_{D,\theta,\delta,\tau,\epsilon}^{student}(\bm x) \right|d\bm x \\
				= & \int_{\mathcal{X} \backslash \cup_{i\in \{1,\dots,n\}} [\bm x_i-2\delta-\tau,\bm x_i+2\delta+\tau]^d}  \max_{\bm x' \in B_{\delta,\infty}(\bm x)} \left|f_\theta(\bm x')- f_\theta(\bm x) \right| d\bm x \\
				\leq & \int_{\mathcal{X} \backslash \cup_{i\in \{1,\dots,n\}} [\bm x_i-2\delta-\tau,\bm x_i+2\delta+\tau]^d}  \max_{\bm x' \in B_{\delta,\infty}(\bm x)} \|f_\theta \|_{Lip} \|\bm x-\bm x'\|_2   d\bm x  \\
				\leq &  (B+\theta) \sqrt{d} \delta.
			\end{aligned}
		\end{equation*}
		For the first part, we have
		\begin{equation*}
			\begin{aligned}
				& \sum_{i=1}^n \int_{ [\bm x_i-2\delta-\tau,\bm x_i+2\delta+\tau]^d}  \max_{\bm x' \in B_{\delta,\infty}(\bm x)} \left|f_{D,\theta,\delta,\tau,\epsilon}^{student}(\bm x')- f_{D,\theta,\delta,\tau,\epsilon}^{student}(\bm x) \right| d\bm x \\
				\leq & 2c_4  \left( (4+2C_0) \delta\right) ^d n.
			\end{aligned}
		\end{equation*}
		Combining these two terms,  the second error term can be bounded by
		\begin{equation*}
			\mathcal{E}^\delta  \left(f_{D,\theta,\delta,\tau,\epsilon}^{student}\right)  - \mathcal{E} \left( f_{D,\theta,\delta,\tau,\epsilon}^{student}\right) \leq (2c_4+2M)\|J_\rho\| \left( (B+\theta) \sqrt{d} \delta+  2c_4 \left( (4+2C_0) \delta\right) ^d n \right).
		\end{equation*}
		Finally, by \Cref{proposition1}, we get
		\begin{equation*}
			\begin{aligned}
				\mathcal{E}^\delta (f_{D,\theta,\delta,\tau,\epsilon}^{student}) - \mathcal{E}^\delta (f_\rho^\delta) & \leq \mathcal{E}^\delta \left(f_{D,\theta,\delta,\tau,\epsilon}^{student}\right)  - \mathcal{E} \left( f_{D,\theta,\delta,\tau,\epsilon}^{student}\right) + \mathcal{E} \left( f_{D,\theta,\delta,\tau,\epsilon}^{student}\right)  - \mathcal{E} (f_\rho) \\
				& \leq 2 \|J_\rho\| \left( 2c_3^2 \epsilon^2 + 2\left( c_3+M\right)^2  4^d \delta^d n +\theta^2 \right)  \\
				& + (2c_4+2M)\|J_\rho\| \left( (B+\theta) \sqrt{d} \delta+ 2c_4 \left( (4+2C_0) \delta\right) ^d n \right).
			\end{aligned}
		\end{equation*}
		By choosing $\epsilon= \theta= \sqrt{\delta}$, $f_{D,\theta,\delta,\tau,\epsilon}^{student}$ is a deep ReLU FNN with depth $L= \mathcal{O} \left(  \log \frac{1}{\delta} \right) $, width $d_1=  \mathcal{O} \left( \delta^{-\frac{d}{2\alpha-2}} \log \frac{1}{\delta}+ n \right) $, $d_2, \dots, d_L= \mathcal{O} \left( \delta^{-\frac{d}{2\alpha-2}} \log \frac{1}{\delta} \right) $, and non-zero free parameters $ \mathcal{O} \left( \delta^{-\frac{d}{2\alpha-2}} \log \frac{1}{\delta}+ n \right) $. Furthermore, the adversarial generalization error bound is
		\begin{equation}
			\mathbb{E}\left[ \mathcal{E}^\delta (f_{D,\theta,\delta,\tau,\epsilon}^{student}) - \mathcal{E}^\delta (f_\rho^\delta) \right]  \leq C_3 \sqrt{d} \max \left\{ \delta, \left( (4+2C_0) \delta\right) ^d n \right\} ,
		\end{equation}
		where $C_3=  2\|J_\rho\|  \left( 2c_3^2+ 2\left( c_3+M\right)^2 +1 \right)  + (2c_4+2M)\|J_\rho\| \left( B+1+ 2c_4 \right)$. Moreover, when $n^{-{\frac{1}{d-1}}} \leq \delta< \frac{q_X}{3} \leq \frac{1}{3} n^{-\frac{1}{d}}$, we have
		\begin{equation}
			\mathbb{E}\left[ \mathcal{E}^\delta (f_{D,\theta,\delta,\tau,\epsilon}^{student}) - \mathcal{E}^\delta (f_\rho^\delta) \right]  \leq
			C_3 \sqrt{d}  \left( (4+2C_0) \delta\right) ^d n.
		\end{equation}
		Moreover, since $\tau \leq \delta$ can be arbitrarily chosen, we conclude that there are infinitely many $f_{D,\theta,\delta,\tau,\epsilon}^{student} \in \Delta_{\delta,\vec d, L}$ that can achieve such adversarial generalization error bound. Thus we complete the proof.
	\end{proof}

	\subsection{Proof of \Cref{lowerbound1}}
	\begin{proof} [Proof of \Cref{lowerbound1}]
		For  any $\bm x \in [\bm x_i-2\delta,\bm x_i+2\delta]^d$, notice that $B_{\delta,\infty}(\bm x) \cap [\bm x_i-\delta,\bm x_i+\delta]^d  \neq \O$. Moreover, for any global minimum of adversarial training $\widehat{f}_D^{over} \in \Delta_{\delta, \vec d, L}$ with non-zero free parameters $ \mathcal{O} \left( \delta^{-\frac{d}{2\alpha-2}} \log \frac{1}{\delta}+ n \right) $, since it can achieve zero adversarial training error as is constructed in \Cref{appendixB2},  when $\bm x \in [\bm x_i-\delta,\bm x_i+\delta]^d$, we always have $\widehat f_{D}^{over}(\bm x)= y_i$. Therefore,
		\begin{equation*}
			\begin{aligned}
				& \int_{\bm x \in [\bm x_i-2\delta,\bm x_i+2\delta]^d} \int_Y \max_{\bm x' \in B_{\delta,\infty}(\bm x)} \left(  \widehat f_{D}^{over}(\bm x') -y \right) ^2  d\rho \\
				\geq &  \int_{
					\bm x \in [\bm x_i-2\delta,\bm x_i+2\delta]^d} \int_Y  \left( y_i -y \right) ^2  d\rho,
			\end{aligned}
		\end{equation*}
		it follows that
		\begin{equation*}
			\begin{aligned}
				\mathcal{E}^\delta (\widehat{f}_D^{over}) - \mathcal{E} \left( f_\rho\right) & \geq \int_{\bm x \in \cup_{i \in \{1,\dots,n\}} [\bm x_i-2\delta,\bm x_i+2\delta]^d} \int_Y \left( y_i -y \right) ^2 - \left(f_\rho(\bm x)- y \right)^2   d\rho \\
				& =  \int_{\bm x \in \cup_{i \in \{1,\dots,n\}} [\bm x_i-2\delta,\bm x_i+2\delta]^d}  \left(f_\rho(\bm x)- y_i \right)^2   d\rho_X  ,
			\end{aligned}
		\end{equation*}
		thus
		\begin{equation*}
			\begin{aligned}
				& \mathbb E \left[ \mathcal{E}^\delta (\widehat{f}_D^{over}) - \mathcal{E} \left( f_\rho\right) \right]  \\
				\geq & \mathbb{E} \left[\sum_{i=1}^n \int_{ \bm x \in [\bm x_i-2\delta,\bm x_i+2\delta]^d}   \left(f_\rho(\bm x)- y_i \right)^2   d\rho_X \right]  \\
				=& \sum_{i=1}^n \int_{\bm x_i} \int_{ \bm x \in [\bm x_i-2\delta,\bm x_i+2\delta]^d} \int_{y_i} \left(f_\rho(\bm x)- f_\rho(\bm x_i) \right)^2 + \left(f_\rho(\bm x_i)- y_i \right)^2 d\rho(y_i|\bm x_i) d \rho_X d \rho_X \\
				\geq & \sum_{i=1}^n \int_{\bm x_i} \int_{ \bm x \in [\bm x_i-2\delta,\bm x_i+2\delta]^d} \sigma^2 d \rho_X d \rho_X \\
				\geq &  \| \bar J_\rho \| \sigma^2 (4 \delta)^d n.
			\end{aligned}
		\end{equation*}
		Moreover, notice that
		\begin{equation*}
			\begin{aligned}
				& \mathcal{E}^\delta  \left(f_\rho \right)  - \mathcal{E} \left( f_\rho \right) \\
				= & \int_{\mathcal{Z}} \max_{\bm x' \in B_{\delta,\infty}(\bm x)} \left(f_\rho (\bm x')-y \right)^2 - \left(f_\rho (\bm x)-y \right)^2 d\rho \\
				\leq & (2B+2M) \int_{\mathcal{X}} \max_{\bm x' \in B_{\delta,\infty}(\bm x)} \left|f_\rho (\bm x')- f_\rho (\bm x) \right| d\rho_X \\
				\leq & (2B+2M) \|J_\rho\| \int_{\mathcal{X}} \max_{\bm x' \in B_{\delta,\infty}(\bm x)} \left|f_\rho (\bm x')- f_\rho (\bm x) \right| d\bm x \\
				\leq & (2B+2M) \|J_\rho\| B \sqrt{d} \delta.
			\end{aligned}
		\end{equation*}
		Therefore, take $\bar C_1 = (2B+2M)  B $, and use the fact that $\mathcal{E}^\delta (f_\rho^\delta) \leq \mathcal{E}^\delta (f_\rho)$, we have
		\begin{equation*}
			\begin{aligned}
				\mathbb{E} \left[ \mathcal{E}^\delta (\widehat{f}_D^{over}) - \mathcal{E}^\delta (f_\rho^\delta) \right] 
				&=  \mathbb E \left[ \mathcal{E}^\delta (\widehat{f}_D^{over}) - \mathcal{E} \left( f_\rho\right) \right]  - \mathbb E \left[\mathcal{E}^\delta (f_\rho^\delta)- \mathcal{E}\left( f_\rho \right)  \right] \\
				&\geq  \bar C_1 \sigma^2 \delta^d n - \left[\mathcal{E}^\delta (f_\rho)- \mathcal{E}\left( f_\rho \right)  \right] \\
				& \geq \| \bar J_\rho \|  \sigma^2 (4\delta)^d n  - \bar C_1 \|J_\rho\| \sqrt{d} \delta.
			\end{aligned}
		\end{equation*}
		Thus we complete the proof.
	\end{proof}
	
	%\section*{Acknowledgments}
	%We would like to acknowledge the assistance of volunteers in putting
	%together this example manuscript and supplement.

	\vskip 0.2in
	
	\bibliographystyle{plain}
	\bibliography{robust_reference}
	
\end{document}